\def\diag{\text{diag}}
\def\0{\boldsymbol{0}}
\def\I{\boldsymbol{I}}
\def\I{\mathbf{I}}
\def\anc{\text{anc}}
\def\pa{\text{par}}
\def\M{\overline{\mathcal{M}}}
\def\gv{{\vec{g}}}
\def\R{\mathbb{R}}
\def\JJ{\mathcal{J}}
\def\se3{\mathfrak{se}(3)}
\def\Ad{\mathrm{Ad}}
\def\ad{\mathrm{ad}}
\def\tD{\mathbb{D}}
\def\lq{\overline{q}}
\def\leta{\overline{\eta}}
\def\k1{{k+1}}
\long\def\answer#1{}
\long\def\comment#1{}
\newcommand{\HRule}[1][\medskipamount]{\par
	\vspace*{\dimexpr-\parskip-\baselineskip+#1}
	\noindent\rule{\linewidth}{0.2mm}\par
	\vspace*{\dimexpr-\parskip-.5\baselineskip+#1}}
\def\qika{{q_i^{k,\alpha}}}
\def\qikb{{q_i^{k,\beta}}}
\def\qjkn{{q_j^{k,\nu}}}
\def\qdotika{{\dot{q}_i^{k,\alpha}}}
\newcommand{\qk}[1]{{#1^{k}}}
\newcommand{\qka}[1]{{#1^{k,\alpha}}}
\newcommand{\qkb}[1]{{#1^{k,\beta}}}
\newcommand{\qkg}[1]{{#1^{k,\gamma}}}
\newcommand{\qkn}[1]{{#1^{k,\nu}}}
\newcommand{\qkp}[1]{{#1^{k,\rho}}}
\newcommand{\qkvp}[1]{{#1^{k,\varrho}}}
\newcommand{\qkz}[1]{{#1^{k,0}}}
\newcommand{\qks}[1]{{#1^{k,s}}}
\newcommand{\qkag}[1]{{#1^{k,\alpha\gamma}}}
\newcommand{\qkan}[1]{{#1^{k,\alpha\nu}}}
\newcommand{\qkap}[1]{{#1^{k,\alpha\rho}}}
\newcommand{\qkbg}[1]{{#1^{k,\beta\gamma}}}
\newcommand{\qkbn}[1]{{#1^{k,\beta\nu}}}
\newcommand{\qkbp}[1]{{#1^{k,\beta\rho}}}
\newcommand{\qkgn}[1]{{#1^{k,\gamma\nu}}}
\newcommand{\qkgp}[1]{{#1^{k,\gamma\rho}}}
\newcommand{\qkgvp}[1]{{#1^{k,\gamma\varrho}}}
\newcommand{\qkvpn}[1]{{#1^{k,\varrho\nu}}}
\newcommand{\qkvpp}[1]{{#1^{k,\varrho\rho}}}
\newcommand{\qa}[1]{{#1^{\alpha}}}
\newcommand{\qab}[1]{{#1^{\alpha\beta}}}
\newcommand{\qag}[1]{{#1^{\alpha\gamma}}}
\newcommand{\qan}[1]{{#1^{\alpha\nu}}}
\newcommand{\qap}[1]{{#1^{\alpha\rho}}}
\newcommand{\qpg}[1]{{#1^{\rho\gamma}}}
\def\pa{{\mathrm{par}}}
\def\des{{\mathrm{des}}}
\def\chd{{\mathrm{chd}}}
\def\la{\overline{a}}
\def\lb{\overline{b}}
\def\lv{\overline{v}}
\def\lS{\overline{S}}
\def\lLam{\overline{\Lambda}}
\def\le{\overline{\eta}}
\def\ld{{\overline{\delta}}}
\def\ldh{{\overline{\delta}}\hspace{0.1em}}
\def\lmu{{\overline{\mu}}}
\def\lG{{\overline{\varGamma}}}
\def\lM{{\overline{M}}}
\def\lF{{\overline{F}}}
\def\lD{{\overline{D}}}
\def\lO{{\overline{\Omega}}}
\def\lq{{\overline{q}}}
\def\cL{\mathcal{L}}
\def\cF{\mathcal{F}}
\def\lsig{{\overline{\sigma}}}
\newcommand{\innprod}[2]{\langle {#1},{#2}\rangle}
\newcommand{\algrule}[1][.2pt]{\par\vskip.5\baselineskip\hrule height #1\par\vskip.5\baselineskip}
\newcommand\blfootnote[1]{%
	\begingroup
	\renewcommand\thefootnote{}\footnote{#1}%
	\addtocounter{footnote}{-1}%
	\endgroup
}
\newcounter{myalg}
\newenvironment{myalg}[1][]%
{
	\needspace{2\baselineskip}
	\noindent \rule{\linewidth}{0.5pt} \endgraf
	\refstepcounter{myalg}
	\noindent\textbf{Algorithm~\themyalg}%
	\ifthenelse{\isempty{#1}}{}{\hspace{0.2em}\ #1}
	\HRule
}{
	\noindent \rule{\linewidth}{0.5pt}
}%
	\noindent \rule{\linewidth}{0.5pt} \endgraf
	\noindent\textbf{Algorithm~\themyalg}%
	\noindent \rule{\linewidth}{0.5pt}
\newtheorem{prop}{Proposition}
\crefname{prop}{Proposition}{Propositions}
\newtheorem{assumption}{Assumption}
\crefname{assumption}{Assumption}{Assumptions}
\crefname{myalg}{Algorithm}{Algorithms}
\def\user@resume{resume}
\def\user@intermezzo{intermezzo}
\newcounter{previousequation}
\newcounter{lastsubequation}
\newcounter{savedparentequation}
\renewenvironment{subequations}[1][]{%
	\def\user@decides{#1}%
	\setcounter{previousequation}{\value{equation}}%
	\ifx\user@decides\user@resume 
	\setcounter{equation}{\value{savedparentequation}}%
	\else  
	\ifx\user@decides\user@intermezzo
	\refstepcounter{equation}%
	\else
	\setcounter{lastsubequation}{0}%
	\refstepcounter{equation}%
	\fi\fi
	\protected@edef\theHparentequation{%
		\@ifundefined {theHequation}\theequation \theHequation}%
	\protected@edef\theparentequation{\theequation}%
	\setcounter{parentequation}{\value{equation}}%
	\ifx\user@decides\user@resume 
	\setcounter{equation}{\value{lastsubequation}}%
	\else
	\setcounter{equation}{0}%
	\fi
	\def\theequation  {\theparentequation  \alph{equation}}%
	\def\theHequation {\theHparentequation \alph{equation}}%
	\ignorespaces
}{%
	\ifx\user@decides\user@resume
	\setcounter{lastsubequation}{\value{equation}}%
	\setcounter{equation}{\value{previousequation}}%
	\else
	\ifx\user@decides\user@intermezzo
	\setcounter{equation}{\value{parentequation}}%
	\else
	\setcounter{lastsubequation}{\value{equation}}%
	\setcounter{savedparentequation}{\value{parentequation}}%
	\setcounter{equation}{\value{parentequation}}%
	\fi\fi
	\ignorespacesafterend
}
\begin{document}
\mainmatter              
\title{Efficient Computation of Higher-Order Variational Integrators in Robotic Simulation and Trajectory Optimization}

\author{Taosha Fan \quad  Jarvis Schultz \quad Todd Murphey}

\institute{Department of Mechanical Engineering, Northwestern University,\\ 2145 Sheridan Road, Evanston, IL 60208, USA\\
\email{taosha.fan@u.northwestern.edu, jschultz@northwestern.edu, t-murphey@northwestern.edu}}

\maketitle 
\vspace{-0.7em}
\begin{abstract}
This paper addresses the problem of efficiently computing higher-order variational integrators in simulation and trajectory optimization of mechanical systems as those often found in robotic applications. We develop $O(n)$ algorithms to evaluate the discrete Euler-Lagrange (DEL) equations and compute the Newton direction for solving the DEL equations, which results in linear-time variational integrators of arbitrarily high order. To our knowledge, no linear-time higher-order variational or even implicit integrators have been developed before. Moreover, an $O(n^2)$ algorithm to linearize the DEL equations is presented, which is useful for trajectory optimization. These proposed algorithms eliminate the bottleneck of implementing higher-order variational integrators in simulation and trajectory optimization of complex robotic systems. The efficacy of this paper is validated through comparison with existing methods, and implementation on various robotic systems---including trajectory optimization of the Spring Flamingo robot, the LittleDog robot and the Atlas robot. The results illustrate that the same integrator can be used for simulation and trajectory optimization in robotics, preserving mechanical properties while achieving good scalability and accuracy.
\end{abstract}
\section{Introduction}
\blfootnote{This material is based upon work supported by the National Science Foundation under award DCSD-1662233. Any opinions, findings, and conclusions or recommendations expressed in this material are those of the authors and do not necessarily reflect the views of the National Science Foundation. 
}
Variational integrators conserve symplectic form, constraints and energetic quantities \cite{marsden2001discrete,johnson2009scalable,kobilarov2009lie,johnson2015structured,fan2015structured,junge2005discrete}. As a result, variational integrators generally outperform the other types of integrators with respect to numerical accuracy and stability, thus permitting large time steps in simulation and trajectory optimization, which is useful for complex robotic systems \cite{marsden2001discrete,johnson2009scalable,kobilarov2009lie,johnson2015structured,fan2015structured,junge2005discrete}. Moreover, variational integrators can also be regularized for collisions and friction by leveraging the linear complementarity problem (LCP) formulation \cite{lacoursiere2007ghosts,manchestercontact}.

The computation of variational integrators is comprised of the discrete Euler-Lagra-nge equation (DEL) evaluation, the descent direction computation for solving the DEL equations and the DEL equation linearization. The computation of these three phases of variational integrators can be accomplished with automatic differentiation and our prior methods \cite{johnson2009scalable,johnson2015structured}, both of which are $O(n^2)$ to evaluate the DEL equations and $O(n^3)$ to compute the Newton direction and linearize the DEL equations for an $n$-degree-of-freedom mechanical system. Recently, a linear-time second-order variational integrator was developed in \cite{lee2016linear}, which uses the quasi-Newton method and works for small time steps and comparatively simple mechanical systems.  

Higher-order variational integrators are needed for greater accuracy in predicting the dynamic motion of robots \cite{posa2016optimization,hereid2018dynamic}. However, the computation of higher-order variational integrators has rarely been addressed. The quasi-Newton method in \cite{lee2016linear} only applies to second-order variational integrators, and while automatic differentiation and our prior methods \cite{johnson2009scalable,johnson2015structured} are implementable for higher-order variational integrators, the complexity increases superlinearly as the integrator order increases. 

In this paper, we address the computation efficiency of higher-order variational integrators and develop: i) an $O(n)$ method for the evaluation of the DEL equations, ii) an $O(n)$ method for the computation of the Newton direction, and iii) an $O(n^2)$ method for the linearization of the DEL equations. The proposed characteristics i) -- iii) eliminate the bottleneck of implementing higher-order variational integrators in simulation and trajectory optimization of complex robotic systems, and to the best of our knowledge, no similar work has been presented before. In particular, we believe that the resulting variational integrator from i) and ii) is the first exactly linear-time implicit integrator of third or higher order for mechanical systems.

The rest of this paper is organized as follows. \cref{section::preliminary} reviews higher-order variational integrators, the Lie group formulation of rigid body motion and the tree representation of mechanical systems. \cref{section::lin_vi,section::quad_lin} respectively detail the linear-time higher-order variational integrator and the quadratic-time linearization, which are the main contributions of this paper. \cref{section::discussion} compares our work with existing methods, and \cref{section::examples} presents examples of trajectory optimization for the Spring Flamingo robot, the LittleDog robot and the Atlas robot. The conclusions are made in \cref{section::conclusion}.

\vspace{-0.5em}
\section{Preliminaries and Notation}\label{section::preliminary}
In this section, we review higher-order variational integrators, the Lie group formulation of rigid body motion, and the tree representation of mechanical systems. In addition, notation used throughout this paper is introduced accordingly. 
\vspace{-0.5em}
\subsection{Higher-Order Variational Integrators}
In this paper, higher-order variational integrators are derived with the methods in \cite{marsden2001discrete,ober2015construction,ober2017galerkin}.

A trajectory $(q(t),\,\dot{q}(t))$ where $0\leq t\leq T$ of a forced mechanical system should satisfy the \textit{Lagrange-d'Alembert principle}:
\begin{equation}\label{eq::lda}
\delta\mathfrak{S}=\delta\int_0^T \cL(q,\dot{q})dt+\int_0^T \!\!\cF(t)\cdot\delta q dt = 0
\end{equation}
in which $\cL(q,\dot{q})$ is the system's Lagrangian and $\cF(t)$ is the generalized force. Provided that the time interval $[0,\,T]$ is evenly divided into $N$ sub-intervals with $\Delta t = T/N$, and each $q(t)$ over $[k\Delta t,\,(k+1)\Delta t]$ is interpolated with $s+1$ control points $\qka{q}=q(\qka{t})$ in which $\alpha=0,\,1,\,\cdots,\, s$ and $k\Delta t = \qkz{t}<t^{k,1}<\cdots<\qks{t}=(k+1)\Delta t$, then there are coefficients $\qab{b}$ ($0\leq\alpha,\,\beta\leq s$) such that
\begin{equation}\label{eq::qdot}
\vspace{-0.25em}
\dot{q}(\qka{t})\approx\qka{\dot{q}} =\frac{1}{\Delta t}\sum_{\beta=0}^s \qab{b}\qkb{q}.
\vspace{-0.25em}
\end{equation}
{In this paper, we assume that the quadrature points of the quadrature rule are also $\qka{t}$ though our algorithms in \cref{section::lin_vi,section::quad_lin} can be generalized for any quadrature rules.} Then the Lagrange-d'Alembert principle \cref{eq::lda} is approximated as
\begin{equation}\label{eq::dlda}
\delta\mathfrak{S} \approx \sum_{k=0}^{N-1}\sum_{\alpha=0}^s \qa{w}\left[\delta \cL(\qka{q},\qka{\dot{q}})+\cF(\qka{t})\cdot\delta \qka{q}\right]\cdot\Delta t=0,
\end{equation} 
in which $\qa{w}$ are weights of the quadrature rule used for integration. In variational integrators, the \textit{discrete Lagrangian} and the \textit{discrete generalized force} are defined to be
\begin{equation}\label{eq::ddl}
\cL_d(\qkz{q},\,q^{k,1},\,\cdots,\,\qks{q})=\sum\limits_{\alpha=0}^s \qa{w}\cL(\qka{q},\qka{\dot{q}})\Delta t
\end{equation}
and $\qka{\cF_d}(\qka{t})= \qa{w} \cF(\qka{t})\Delta t$, respectively. Note that by definition we have $\qks{t}=t^{k+1,0}$ and $\qks{q}=q^{k+1,0}$, and as a result of \cref{eq::dlda}, we obtain 
\begin{subequations}\label{eq::DEL_general}
	\begin{equation}\label{eq::DEL_general1}
	p^{k} + \tD_1 \cL_d(\lq^k) +\cF_d^{k,0}=0,
	\end{equation}
	\begin{equation}\label{eq::DEL_general2}
	\tD_{\alpha+1} \cL_d(\lq^k) +\cF_d^{k,\alpha}=0\quad \forall \alpha=1,\,\cdots,\,s-1,
	\end{equation}
	\begin{equation}\label{eq::DEL_general3}
	p^{k+1}=\tD_{s+1}\cL_d(\lq^k) +\cF_d^{k,s}
	\end{equation}
\end{subequations}
in which $p^k$ is the \textit{discrete momentum}, $\overline{q}^k$ stands for the tuple $(\qkz{q},\,q^{k,1},\,\cdots,\,\qka{q})$, and $\tD_{\alpha+1}\cL_d$ is the derivative with respect to $\qka{q}$. Note that \cref{eq::DEL_general} is known as the \textit{discrete Euler-Lagrangian (DEL) equations}, which implicitly define an update rule $(\qkz{q},\,p^k)\rightarrow (q^{k+1,0},\,p^{k+1})$ by solving $sn$ nonlinear equations from \cref{eq::DEL_general1,eq::DEL_general2}. In a similar way, for mechanical systems with constraints $h(q,\dot{q})=0$, we have
\begin{subequations}\label{eq::DEL_general_c}
	\begin{equation}\label{eq::DEL_general_c1}
	p^{k} + \tD_1 \cL_d(\lq^k) +\cF_d^{k,0} + \qkz{A}(\qkz{q})\cdot\lambda^{k,0}=0,
	\end{equation}
	\begin{equation}
	\tD_{\alpha+1} \cL_d(\lq^k) +\cF_d^{k,\alpha} + \qka{A}(\qka{q})\cdot\qka{\lambda}=0\quad\forall \alpha=1,\,\cdots,\,s-1,
	\end{equation}
	\begin{equation}
	p^{k+1}=\tD_{s+1}\cL_d(\lq^k) +\cF_d^{k,s},
	\end{equation}
	\begin{equation}
	\qka{h}(q^{k+1,\alpha},\dot{q}^{k+1,\alpha})=0\quad \forall \alpha=1,\,\cdots,\,s
	\end{equation}
\end{subequations}
in which $\qka{A}(\qka{q})$ is the discrete constraint force matrix and $\qka{\lambda}$ is the discrete constraint force.\par

The resulting higher-order variational integrator is referred as the Galerkin integrator \cite{marsden2001discrete,ober2015construction,ober2017galerkin}, the accuracy of which depends on the number of control points as well as the numerical quadrature of the discrete Lagrangian. If there are $s+1$ control points and the Lobatto quadrature is employed, then the resulting variational integrator has an accuracy of order $2s$ \cite{ober2015construction,ober2017galerkin}. The Galerkin integrator includes the \textit{trapezoidal variational integrator} and the \textit{Simpson variational integrator} as shown in \cref{example::svi,example::tvi}, the DEL equations of which are given by \cref{eq::DEL_general,eq::DEL_general_c}. \par

\begin{example}\label{example::tvi}
	The trapezoidal variational integrator is a second-order integrator with two control points $\qk{\lq}=(\qkz{q},\,q^{k,1})$ such that $\qkz{q}=q\big(k\Delta t\big)$ and $q^{k,1}=q\big((k+1)\Delta t\big)$, $\qkz{\dot{q}}=\dot{q}^{k,1}=\frac{q^{k,1}-\qkz{q}}{\Delta t}$, and $\cL_d (\qk{\lq})=\frac{\Delta t}{2}\left[\cL(\qkz{q},\qkz{\dot{q}})+\cL(q^{k,1},\dot{q}^{k,1})\right]$.
\end{example}
\vspace{-1em}
\begin{example}\label{example::svi}
	The Simpson variational integrator is a fourth-order integrator with three control points $\qk{\lq}=(\qkz{q},\,q^{k,1},\,q^{k,2})$ in which $\qkz{q}=q\big(k\Delta t\big)$, $\qkz{q}=q\big((k+\frac{1}{2})\Delta t\big)$ and $q^{k,2}=q\big((k+1)\Delta t\big)$, $\qkz{\dot{q}}=\frac{4q^{k,1}-3\qkz{q}-q^{k,2}}{\Delta t}$, $\dot{q}^{k,1}=\frac{q^{k,2}-\qkz{q}}{\Delta t}$ and $\dot{q}^{k,2}=\frac{\qkz{q}+3q^{k,2}-4q^{k,1}}{\Delta t}$, and $\cL_d (\qk{\lq})=\frac{\Delta t}{6}\big[\cL(\qkz{q},\qkz{\dot{q}})+4\cL(q^{k,1},\dot{q}^{k,1})+\cL(q^{k,2},\dot{q}^{k,2})\big]$.
\end{example}
\vspace{-1.5em}
\subsection{The Lie Group Formulation of Rigid Body Motion}\label{subsection::lie}
The configuration of a rigid body $g=(R,p)\in SE(3)$ can be represented as a $4\times 4$ matrix $g=\begin{bmatrix}
R & p\\0 & 1
\end{bmatrix}$ in which $R\in SO(3)$ is a rotation matrix and $p\in\R^3$ is a position vector. The body velocity of the rigid body $v=(\omega, v_{O})\in T_e SE(3)$ is an element of the Lie algebra and can be represented either as a $6\times 1$ vector $v=(g^{-1}\dot{g})^\vee=\begin{bmatrix}
\omega^T & v_O^T
\end{bmatrix}$ or a $4\times 4$ matrix $\hat{v}=g^{-1}\dot{g}=\begin{bmatrix}
\hat{\omega} & v_O\\
0 & 0
\end{bmatrix}$ in which $\omega=(\omega_x,\omega_y,\omega_z)\in T_e SO(3)$ is the angular velocity, $v_O$ is the linear velocity, $\hat{\omega}=\begin{bmatrix}
0 & -\omega_z & \omega_y\\ \omega_z & 0 & -\omega_x\\ -\omega_y & \omega_x & 0
\end{bmatrix}\in \R^{3\times 3}$, and the hat ``$\wedge$'' and unhat``$\vee$'' are linear operators that relate the vector and matrix representations. The same representation and operators also apply to the spatial velocity $\lv=(\overline{\omega},\lv_O)\in T_e SE(3)$, whose $6\times1$ vector and $4\times 4$ matrix representations are respectively $\lv = (\dot{g}g^{-1})^\vee$ and $\hat{\lv}=\dot{g}g^{-1}$.\par
In the rest of this paper, if not specified, vector representation is used for $T_e SE(3)$, such as $v$, $\lv$, etc., and the adjoint operators $\Ad_g$ and $\ad_v: T_eSE(3)\rightarrow T_e SE(3)$ can be accordingly represented as $6\times 6$ matrices $\Ad_g =\begin{bmatrix}
R & 0\\
\hat{p}R & R
\end{bmatrix}$ and $\ad_v = \begin{bmatrix}
\hat{\omega} & 0\\
\hat{v}_O & \hat{\omega}
\end{bmatrix}$ such that $\lv = \Ad_g v$ and $\ad_{v_1} v_2 = (\hat{v}_1\hat{v}_2-\hat{v}_2\hat{v}_1)^\vee$. For consistence, the dual Lie algebra $T_e^* SE(3)$ uses the $6\times1$ vector representation as well. As a result, the body wrench $F=(\tau,f_O)\in T_e^*SE(3)$ is represented as a $6\times1$ vector $F=\begin{bmatrix}
\tau^T & f_O^T
\end{bmatrix}^T$ in which $\tau\in T_e ^*SO(3)$ is the torque and $f_O$ is the linear force so that $\innprod{F}{v} = F^Tv$. Moreover, we define the linear operator $\ad_F^D:T_eSE(3)\rightarrow T_e^* SE(3)$ which is represented as a $6\times 6$ matrix $\ad_F^D=\begin{bmatrix}
\hat{\tau} & \hat{f}_O\\
\hat{f}_O & 0
\end{bmatrix}$ so that $F^T\ad_{v_1}v_2 = v_2^T\ad_F^D v_1=-v_1^T\ad_F^Dv_2$ for $v_1,v_2\!\in\! T_eSE(3)$. The same representation and operators also apply to the spatial wrench $\lF= \Ad_g^{-T} F=(\overline{\tau},\overline{f}_O)$ which is paired with the spatial velocity $\lv=\Ad_g v$.\par
\vspace{-0.5em}
\subsection{The Tree Representation of Mechanical Systems}\label{subsection::tree}
\vspace{-0.25em}
In general, a mechanical system with $n$ inter-connected rigid bodies indexed as $1,\,2,\,\cdots,\,n$ can be represented through a tree structure so that each rigid body has a single parent and zero or more children \cite{johnson2009scalable,featherstone2014rigid}, and such a representation is termed as \textit{tree representation}. In this paper, the spatial frame is denoted as $\{0\}$, which is the root of the tree representation, and we denote the body frame of rigid body $i$ as $\{i\}$, and the parent, ancestors, children and descendants of rigid body $i$ as $\pa(i)$, $\anc(i)$, $\chd(i)$ and $\des(i)$, respectively. Since all joints can be modeled using a combination of revolute joints and prismatic joints, we assume that each rigid body $i$ is connected to its parent by a one-degree-of-freedom joint $i$ which is either a revolute or a prismatic joint and parameterized by a real scalar $q_i\in\R$. As a result, the tree representation is parameterized with $n$ generalized coordinates $q=\begin{bmatrix}
q_1 & q_2 &\cdots & q_n
\end{bmatrix}^T\in\R^n$. For each joint $i$, the joint twist with respect to frame $\{0\}$ and $\{i\}$ are respectively denoted as $6\times1$ vectors $\lS_i=\begin{bmatrix}
\overline{s}_i^T & \overline{n}_i^T
\end{bmatrix}^T$ and  $S_i=\begin{bmatrix}
{s}_i^T & {n}_i^T
\end{bmatrix}^T$ in which $\overline{s}_i,\,s_i$ are $3\times1$ vectors corresponding to rotation and $\overline{n}_i,\,n_i$ are $3\times 1$ vectors corresponding to translation. Note that $S_i$, $s_i$ and $n_i$ are constant by definition. Moreover, $\lS_i$ and $S_i$ are related as $\lS_i=\Ad_{g_i}S_i$ where $g_i\in SE(3)$ is the configuration of rigid body $i$, and $\dot{\lS_i}=\ad_{\lv_i}\lS_i$, where $\lv_i\in T_e SE(3)$ is the spatial velocity of rigid body $i$.\par

It is assumed without loss of generality in this paper that the origin of frame $\{i\}$ is the mass center of rigid body $i$, and $j\in\des(i)$ only if $i<j$, or equivalently $j\in\anc(i)$ only if $i>j$.\par

The rigid body dynamics can be computed through the tree representation. The configuration $g_i=(R_i,p_i)\in SE(3)$ of rigid body $i$ is $g_i = g_{\pa(i)}g_{\pa(i),i}(q_i)$ in which $g_{\pa(i),i}(q_i) = g_{\pa(i),i}(0)\exp(\hat{S}_i q_i )$ is the rigid body transformation from frame $\{i\}$ to its parent frame $\{\pa(i)\}$, and the spatial velocity $\overline{v}_i$ of rigid body $i$ is $\overline{v}_i = \overline{v}_{\pa(i)} + \lS_i\cdot\dot{q}_i$. In addition, the spatial inertia matrix $\lM_i$ of rigid body $\{i\}$ with respect to frame $\{0\}$ is  
$\lM_i=\Ad_{g_i}^{-T}M_i\Ad_{g_i}^{-1}$ in which $M_i = \diag\{\mathcal{I}_i, m_i\I\}\in\R^{6\times 6}$ is the constant body inertia matrix of rigid body $i$, $\mathcal{I}_i\in \R^{3\times 3}$ is the body rotational inertia matrix, $m_i\in\R$ is the mass and $\I\in\R^{3\times 3}$ is the identity matrix.

In rigid body dynamics, an important notion is the \textit{articulated body} \cite{featherstone2014rigid}. In terms of the tree representation, articulated rigid body $i$ consists of rigid body $i$ and all its descendants $j\in\des(i)$, and the interactions with articulated body $i$ can only be made through rigid body $i$, which is known as the handle of the articulated body $i$.\par 

In the last thirty years, a number of algorithms for efficiently computing the rigid body dynamics have been developed based on tree representations and articulated bodies \cite{featherstone2014rigid,yamane2002efficient,fijany1995parallel}, making explicit integrators have $O(n)$ complexity for an $n$-degree-of-freed-om mechanical system. Even though the same algorithms might be used for the evaluation of implicit integrators, \textit{none of them can be used for the computation of the Newton direction for solving implicit integrators}. If the residue is $r^k$, the Newton direction of an implicit integrator is computed as $\delta q^k=-\JJ(q^k)^{-1}r^k$; however, the Jacobian matrix $\JJ(q^k)$ is usually asymmetric and indefinite, and has a size greater than $n\times n$ for higher-order implicit integrators, which means that the computation of implicit integrators is distinct from explicit integrators whose computation is simply a combination of the algorithms in \cite{featherstone2014rigid,yamane2002efficient,fijany1995parallel} with an appropriate integration scheme. Furthermore, the computation of implicit integrators is much more complicated than the computation of forward and inverse dynamics and out of the scope of those algorithms in \cite{featherstone2014rigid,yamane2002efficient,fijany1995parallel}.

\section{The Linear-Time Higher-Order Variational Integrator}\label{section::lin_vi}
In this and next section, we present the propositions and algorithms efficiently computing higher-order variational integrators, {whose derivations are omitted due to space limitations. Though not required for implementation, we refer the reader to the supplementary appendix of this paper \cite{fan2018wafr_app} for detailed proofs.\footnote{In addition to the proofs, the supplementary appendix \cite{fan2018wafr_app} also contains the complete $O(n)$ algorithms to compute the Newton direction for higher-order variational integrators.}} 

{
In the rest of this paper, if not specified, we assume that the mechanical system has $n$ degrees of freedom and the higher-order variational integrator has $s+1$ control points $\qka{q}=q(\qka{t})$ in which $0\leq \alpha\leq s$. Note that the notation $(\cdot)^{k,\alpha}$ is used to denote quantities $(\cdot)$ associated with $\qka{q}$ and $\qka{t}$, such as $\qka{q_i}$, $\qka{g_i}$, $\qka{\lv_i}$, etc.
}
\subsection{The DEL Equation Evaluation}\label{subsection::eval}
 To evaluate the DEL equations, the \textit{discrete articulated body momentum} and \textit{discrete articulated body impulse} are defined from the perspective of articulated bodies as follows.
\begin{definition}\label{def::abmom}
	The discrete articulated body momentum $\lmu^{k,\alpha}_i\in \R^6$ for articulated body $i$ is defined to be $\qka{\lmu_i} = \qka{\lM_i}\qka{\lv_i}+\sum_{j\in\chd(i)} \qka{\lmu_j}$ in which $\qka{\lM_i}$ and $\qka{\lv_i}$ are respectively the spatial inertia matrix and spatial velocity of rigid body $i$.
\end{definition}

{
\begin{definition}\label{def::abFd}
	Suppose $\lF_i(t)\in\R^6$ is the sum of all the wrenches directly acting on rigid body $i$, which does not include those applied or transmitted through the joints that are connected to rigid body $i$. The discrete articulated body impulse $\qka{\lG_i}\in\R^6$ for articulated body $i$ is defined to be $\qka{\lG_i}=\lF_{i}^{k,\alpha} + \sum_{j\in\chd(i)}\lG_{j}^{k,\alpha}$ in which $\qka{\lF_i}=\qa{\omega} \lF_i(\qka{t})\Delta t\in\R^6$ is the discrete impulse acting on rigid body $i$. Note that $\lF_i(t)$, $\qka{\lF_i}$ and $\qka{\lG_i}$ are expressed in frame $\{0\}$.
\end{definition}

\begin{remark}
As for wrenches exerted on rigid body $i$, in addition to $\lF_i(t)$ which includes gravity as well as the external wrenches that directly act on rigid body $i$, there are also wrenches applied through joints, e.g., from actuators, and wrenches transmitted through joints, e.g., from the parent and children of rigid body $i$ in the tree representation.
\end{remark}
}
It can be seen in \cref{prop::eval} that $\qka{\lmu_i}$ and $\qka{\lG_i}$ make it possible to evaluate the DEL equations without explicitly calculating $\tD_{\alpha+1} \cL_d(\qk{\lq})$ and $\qka{\cF_d}$ in \cref{eq::DEL_general,eq::DEL_general_c}.
\begin{prop}\label{prop::eval}
If $Q_i(t)\in\R$ is the sum of all joint forces applied to joint $i$ and $p^k=\begin{bmatrix}
p_1^k & p_2^k &\cdots & p_n^k
\end{bmatrix}^T\in \R^n$ is the discrete momentum, the DEL equations \cref{eq::DEL_general} can be evaluated as
\begin{subequations}\label{eq::eval}
	\begin{align}
	&\label{eq::eval1}r_i^{k,0}=p_i^{k} + {\lS_i^{k,0}}^T\cdot\lO_i^{k,0}+\sum_{\beta= 0}^s a^{0\beta}{\lS_i^{k,\beta}}^T\cdot\lmu_i^{k,\beta}+Q_i^{k,0},\\
	&\label{eq::eval2}r_i^{k,\alpha}={\lS_i^{k,\alpha}}^T\cdot\lO_i^{k,\alpha}+\sum_{\beta= 0}^s a^{\alpha\beta}{\lS_i^{k,\beta}}^T \cdot\lmu_i^{k,\beta}+\qka{Q_i}
	\quad\forall\alpha=1,\cdots,s-1,\\
	&p_i^{k+1}={\qks{\lS_i}}^T\cdot\qks{\lO_i}+\sum_{\beta= 0}^s a^{s\beta}{\lS_i^{k,\beta}}^T \cdot\lmu_i^{k,\beta}+\qks{Q_i}
	\end{align}
\end{subequations}
in which $\qka{r_i}$ is the residue of the DEL equations \cref{eq::DEL_general1,eq::DEL_general2}, $a^{\alpha\beta}=w^\beta b^{\beta\alpha}$, $\qka{\lO_i}= w^{\alpha}\Delta t\cdot {\ad}_{\qka{\lv_{i}}}^T\cdot\qka{\lmu_i}+\qka{\lG_{i}}$, and $\qka{Q_i}=\qa{\omega} Q_i(\qka{t})\Delta t$ is the discrete joint force applied to joint $i$. 
\end{prop}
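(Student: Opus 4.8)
The plan is to rewrite the DEL residue \cref{eq::DEL_general} entirely in terms of the continuous Lagrangian derivatives and then recognize the articulated-body quantities of \cref{def::abmom,def::abFd}. I take $\cL$ to be the kinetic energy $T=\frac12\sum_j\lv_j^T\lM_j\lv_j$, with gravity and all other direct loads carried by the wrenches $\lF_i$ of \cref{def::abFd}. First I would differentiate the discrete Lagrangian \cref{eq::ddl}: since $\cL_d(\lq^k)=\sum_\beta w^\beta\cL(\qkb{q},\qkb{\dot{q}})\Delta t$ and \cref{eq::qdot} gives $\partial\dot{q}_l^{k,\beta}/\partial\qka{q_i}=\Delta t^{-1}b^{\beta\alpha}\delta_{li}$, the chain rule collapses $\tD_{\alpha+1}\cL_d$ into a configuration part and a velocity part,
\[
\frac{\partial\cL_d}{\partial\qka{q_i}}=w^\alpha\Delta t\,\frac{\partial\cL}{\partial q_i}\Big|^{k,\alpha}+\sum_{\beta=0}^s a^{\alpha\beta}\,\frac{\partial\cL}{\partial\dot{q}_i}\Big|^{k,\beta},
\]
with $a^{\alpha\beta}=w^\beta b^{\beta\alpha}$. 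This already reproduces the $\sum_\beta a^{\alpha\beta}(\cdots)$ structure of \cref{eq::eval}, so it remains to identify the two per-control-point factors and the force terms.

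For the velocity factor, $\lv_j=\lv_{\pa(j)}+\lS_j\dot{q}_j$ gives $\partial\lv_j/\partial\dot{q}_i=\lS_i$ exactly when $j\in\des(i)\cup\{i\}$, whence $\partial\cL/\partial\dot{q}_i=\lS_i^T\sum_{j\in\des(i)\cup\{i\}}\lM_j\lv_j=\lS_i^T\lmu_i$, the articulated-body momentum of \cref{def::abmom}. This matches the $\qkb{\lS_i}^T\lmu_i^{k,\beta}$ factor.

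The configuration factor is the crux. Here I would invoke the two adjoint identities $\partial\lS_m/\partial q_i=\ad_{\lS_i}\lS_m$ and $\partial\lM_j/\partial q_i=-\ad_{\lS_i}^T\lM_j-\lM_j\ad_{\lS_i}$, both valid for $j,m\in\des(i)\cup\{i\}$ and both traceable to $\partial g_m/\partial q_i\cdot g_m^{-1}=\hat{\lS_i}$. Differentiating $T$ produces two sums over the descendants of $i$; the contribution of $\partial\lM_j/\partial q_i$ cancels the $\lv_j^T\lM_j\ad_{\lS_i}\lv_j$ piece coming from $\partial\lv_j/\partial q_i=\ad_{\lS_i}(\lv_j-\lv_{\pa(i)})$, leaving the compact form $\partial T/\partial q_i=-\lmu_i^T\ad_{\lS_i}\lv_{\pa(i)}$. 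Using the antisymmetry of the bracket together with $\ad_{\lS_i}\lS_i=0$ (so that $\ad_{\lv_i}\lS_i=\ad_{\lv_{\pa(i)}}\lS_i=-\ad_{\lS_i}\lv_{\pa(i)}$), this rewrites as $\partial T/\partial q_i=\lS_i^T\ad_{\lv_i}^T\lmu_i$, supplying the $\ad_{\lv_i}^T\lmu_i$ part of $\lO_i^{k,\alpha}$.

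Finally I would account for the forces via virtual work: the discrete generalized force $\cF_d^{k,\alpha}$ splits into a joint part $\qka{Q_i}$ and a wrench part $\sum_{j\in\des(i)\cup\{i\}}\qka{\lS_i}^T\qka{\lF_j}=\qka{\lS_i}^T\qka{\lG_i}$, the articulated-body impulse of \cref{def::abFd}. Combining the configuration factor with the wrench part gives $\qka{\lS_i}^T\lO_i^{k,\alpha}$ with $\lO_i^{k,\alpha}=w^\alpha\Delta t\,\ad_{\qka{\lv_i}}^T\qka{\lmu_i}+\qka{\lG_i}$; reading off \cref{eq::DEL_general1} (which adds $p_i^k$), \cref{eq::DEL_general2}, and \cref{eq::DEL_general3} (which outputs $p_i^{k+1}$) for $\alpha=0$, $1\le\alpha\le s-1$, and $\alpha=s$ then yields \cref{eq::eval}. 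The main obstacle is precisely the configuration-derivative cancellation: one must keep the body sums restricted to $\des(i)\cup\{i\}$ and combine the $\lv_j$- and $\lM_j$-derivatives so that the descendant double sum collapses into the single articulated-body term.
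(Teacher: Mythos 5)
Your proposal is correct and follows essentially the same route as the paper's proof: fold the potential energy into the wrenches $\lF_i$ so only the kinetic energy remains in the Lagrangian, apply the chain rule through \cref{eq::qdot} to split $\tD_{\alpha+1}\cL_d$ into a $w^\alpha\Delta t\,\partial\cL/\partial q_i$ part and a $\sum_\beta a^{\alpha\beta}\,\partial\cL/\partial\dot q_i$ part, and use the descendant-restricted derivatives of $\lv_j$, $\lS_j$, $\lM_j$ to collapse the sums into $\lS_i^T\lmu_i$, $\dot{\lS}\overline{\vphantom{S}}_i^T\lmu_i=\lS_i^T\ad_{\lv_i}^T\lmu_i$, and $\lS_i^T\lG_i$, exactly as in the paper's \cref{subsection::prop1}. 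The only cosmetic difference is that you express the configuration derivative via $\lv_{\pa(i)}$ rather than $\lv_i$, which agrees with the paper's form because $\ad_{\lS_i}\lS_i=0$.
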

\begin{proof}
See \cite[\cref{subsection::prop1}]{fan2018wafr_app}
\end{proof}
\vspace{-0.5em}

In \cref{eq::eval1,eq::eval2}, if all $\qka{r_i}$ are equal to zero, a solution to the variational integrator as well as the DEL equations is obtained.

{All the quantities used in \cref{prop::eval} can be recursively computed in the tree representation, therefore, we have \cref{algorithm::drha1} that evaluates the DEL equations, which essentially consists of $s+1$ forward passes from root to leaf nodes and $s+1$ backward passes in the reverse order, thus totally takes $O(sn)$ time.  In contrast, automatic differentiation and our prior methods \cite{johnson2009scalable,johnson2015structured} take $O(sn^2)$ time to evaluate the DEL equations.}

\begin{algorithm}[t]
	\caption{Recursive Evaluation of the DEL Equations}
	\label{algorithm::drha1}
	\begin{algorithmic}[1]
		\State initialize $\qka{g_0}=\I$ and $\qka{\lv_0}=0$
		\For{$i=1\rightarrow n$}
		\vspace{0.2em}
		\For{$\alpha=0\rightarrow s$}
		\vspace{0.2em}
		\State $g^{k,\alpha}_{i}=g_{\pa(i)}^{k,\alpha}g^{k,\alpha}_{\pa(i),i}(q_i^{k,\alpha})$
		\vspace{0.2em}
		\State $\lS_i^{k,\alpha} = \Ad_{g_{i}^{k,\alpha}} S_i$,\quad $\lM_i^{k,\alpha} = \Ad_{g_{i}^{k,\alpha}}^{-T} M_i { \Ad^{-1}_{g_{i}^{k,\alpha}}}$
		\State $\dot{q}_i^{k,\alpha}=\frac{1}{\Delta t}\sum\limits_{\beta=0}^s\qab{b}{\qikb}$,\quad $\lv^{k,\alpha}_{i}= \lv^{k,\alpha}_{\pa(i)}+ {{\lS_i^{k,\alpha}}}\cdot \dot{q}_i^{k,\alpha}$
		\EndFor
		\EndFor
		\For{$i=n\rightarrow 1$}
		\vspace{0.2em}
		\For{$\alpha=0\rightarrow s$}
		\vspace{0.2em}
		\State 	$\lmu^{k,\alpha}_i = {\lM_i^{k,\alpha}}\lv_i^{k,\alpha}+\sum\limits_{j\in\chd(i)}\lmu^{k,\alpha}_j$,\quad $\lG_{i}^{k,\alpha}=\lF_{i}^{k,\alpha}+\sum\limits_{j\in\chd(i)}\lG_{j}^{k,\alpha}$\\
		\vspace{-0.5em}
		\State $\qka{\lO_i}= w^{\alpha}\Delta t\cdot {\ad}_{\qka{\lv_{i}}}^T\cdot\qka{\lmu_i}+\qka{\lG_{i}}$
		\EndFor
		\vspace{0.15em}
		\State $r_i^{k,0}=p_i^{k} + {\lS_i^{k,0}}^T\lO_i^{k,0}+\sum\limits_{\beta= 0}^s a^{0\beta}{\lS_i^{k,\beta}}^T\cdot\lmu_i^{k,\beta}+Q_i^{k,0}$
		\vspace{0.15em}
		\For{$\alpha=1\rightarrow s-1$}
		\vspace{0.2em}
		\State $r_i^{k,\alpha}={\lS_i^{k,\alpha}}^T\lO_i^{k,\alpha}+\sum\limits_{\beta= 0}^s a^{\alpha\beta}{\lS_i^{k,\beta}}^T\cdot\lmu_i^{k,\beta}+\qka{Q_i}$
		\EndFor
		\vspace{0.45em}
		\State $p_i^{k+1}={\lS_i^{k,s}}^T\lO_i^{k,s}+\sum\limits_{\beta= 0}^s a^{s\beta}{\lS_i^{k,\beta}}^T\cdot\lmu_i^{k,\beta}+Q_i^{k,s}$
		\vspace{0.1em}
		\EndFor
	\end{algorithmic}
\end{algorithm}
\vspace{-0.5em}
\subsection{Exact Newton Direction Computation}\label{subsection::newton}
From \cref{eq::DEL_general}, the Newton direction $\delta \qk{\lq}=\begin{bmatrix}
\delta {q^{k,1}}^T,\,\cdots,\,{\delta\qks{q}}^T
\end{bmatrix}^T\in\R^{sn}
$ is computed as $\delta \qk{\lq} = -{\JJ^k}^{-1}(\qk{\lq}) \cdot\qk{r}$
in which $\JJ^k(\qk{\lq})\in\R^{sn\times sn}$ is the Jacobian of \cref{eq::DEL_general1,eq::DEL_general2} with respect to control points $q^{k,1},\,\cdots,\,\qks{q}$, and $\qk{r}\in\R^{sn}$ is the residue of evaluating the DEL equations \cref{eq::DEL_general1,eq::DEL_general2} by \cref{prop::eval}.\par

In this section, we make the the following assumption on $\qka{\lF_i}$ and $\qka{Q_i}$, which is general and applies to a large number of mechanical systems in robotics.

\begin{assumption}\label{assumption::abi}
	Let $u(t)$ be the control inputs of the mechanical system, we assume that the discrete impulse $\qka{\lF_i}$ and discrete joint force $\qka{Q_i}$ can be respectively formulated as
	$\qka{\lF_i}=\qka{\lF_i}(\qka{g_i},\qka{\lv_i},\qka{u}) $
	and
	$\qka{Q_i}=\qka{Q_i}(\qka{q_i},\qka{\dot{q}_i},\qka{u})$
	in which $\qka{u}=u(\qka{t})$.
\end{assumption}

If \cref{assumption::abi} holds and $\qk{\JJ}^{-1}(\qk{\lq})$ exists, it can be shown that \cite[\cref{algorithm::dabi}]{fan2018wafr_app} computes the Newton direction for variational integrators in $O(s^3n)$ time.

\begin{prop}\label{prop::dabi}
For higher-order variational integrators of unconstrained mechanical systems, if \cref{assumption::abi} holds and $\qk{\JJ}^{-1}(\qk{\lq})$ exists, the Newton direction $\delta \qk{\lq}= -{\qk{\JJ}}^{-1}(\qk{\lq})\cdot\qk{r}$ can be computed with \cite[\cref{algorithm::dabi}]{fan2018wafr_app} in $O(s^3n)$ time.
\end{prop}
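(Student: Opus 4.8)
The plan is to prove the claim constructively, by exhibiting a factorization of $\JJ^k(\qk{\lq})$ that respects the tree topology and then bounding the cost of the associated forward/backward substitution. First I would establish the \emph{sparsity pattern} of the Jacobian. Differentiating the residues of \cref{prop::eval}, one sees that $\partial r_i^{k,\alpha}/\partial q_j^{k,\beta}$ can propagate only through the recursive definitions of $\lv_i^{k,\gamma}$, $\lmu_i^{k,\gamma}$ and $\lO_i^{k,\gamma}$; since $\lv$ accumulates contributions from ancestors (via $\lv_i=\lv_{\pa(i)}+\lS_i\dot q_i$) and $\lmu$ from descendants (via the child sum in \cref{def::abmom}), the $(i,j)$ entry vanishes unless $i$ and $j$ are comparable in the tree order, i.e.\ unless one lies in the $\anc$ set of the other. \cref{assumption::abi} is exactly what keeps this pattern tight: because $\lF_i^{k,\alpha}$ and $Q_i^{k,\alpha}$ depend only on the local quantities $g_i^{k,\alpha},\lv_i^{k,\alpha},q_i^{k,\alpha},\dot q_i^{k,\alpha}$ (the control input being frozen during the Newton solve), their derivatives contribute only articulated-body-local terms and introduce no coupling between bodies on different branches.

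Grouping the $s$ unknown control-point components $q^{k,1},\dots,\qks{q}$ of each joint into a block, $\JJ^k$ becomes an $n\times n$ block matrix whose nonzero $s\times s$ blocks sit exactly on comparable pairs---equivalently, on the transitive closure of the tree, which is a chordal graph for which the leaf-to-root ordering is a perfect elimination ordering, since the ancestors of any node form a clique. The second and central step is to introduce a generalized \emph{articulated-body inertia} $\lLam_i\in\R^{6s\times 6s}$, indexed over the $6$ spatial coordinates and the $s$ control points, with the inter-control-point coupling carried by the coefficients $a^{\alpha\beta}$ and by the $\ad_{\lv_i}^T$ terms inside $\lO_i$. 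This operator encodes the Schur complement of the articulated subtree rooted at $i$ as seen through the handle's spatial velocity. I would show $\lLam_i$ obeys a recursion in which it is built from a local inertia term and a sum over children $j\in\chd(i)$ of articulated contributions, each obtained by eliminating that child's scalar joint variable at every control point---a Schur complement requiring one $s\times s$ joint-space solve per node and a conjugation of a $6s\times 6s$ operator by the joint twists $\lS_j$.

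With the recursion in hand, the Newton direction follows from two sweeps: a backward pass (leaves to root) assembling the $\lLam_i$ together with an articulated residual vector, and a forward pass (root to leaves) back-substituting each $\delta q_i^{k,\alpha}$ from the already-computed ancestor increments. For the complexity count, each node performs a fixed number of products and one linear solve on objects of size $6s$ and $s$; since the spatial dimension $6$ is constant, every such operation is $O(s^3)$, and with $n$ nodes the total is $O(s^3 n)$. The compact representation $\lLam_i$ is essential: a naive dense elimination on the transitive-closure pattern would cost $O(\mathrm{depth}^2\, s^3)$ per leaf, so the linear-in-$n$ bound hinges on never forming the dense Schur complement among the ancestors explicitly.

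The step I expect to be the main obstacle is verifying the articulated-inertia recursion itself. One must differentiate the nested definitions of $\lv_i$, $\lmu_i$ and $\lO_i$, sort the resulting chain-rule terms by control-point index, and check that eliminating a subtree reproduces exactly a $6s\times 6s$ update of the claimed form on the parent---that is, that the Schur complement stays within the same class of articulated operators. This closure property is the higher-order, variational analog of the rank-one articulated-inertia update in Featherstone's forward-dynamics recursion, and tracking the $a^{\alpha\beta}$ and $\ad^T$ cross-terms through the elimination is where the bulk of the algebra lives.
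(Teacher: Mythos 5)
Your proposal is correct and follows essentially the same route as the paper's proof: a leaves-to-root backward pass that builds, at each body, an articulated Schur-complement operator through which all coupling to ancestors factors via the handle's spatial velocity and configuration variation at each control point (the paper realizes your $\lLam_i$ as the coefficient families $\qkap{D_i},\qkan{G_i},\qka{l_i}$ and $\qkap{\Pi_i},\qkan{\Psi_i},\qka{\zeta_i}$, eliminating each joint's $s$ unknowns with one $s\times s$ inverse per body), followed by a root-to-leaves back-substitution, giving the same $O(s^3)$-per-node count. The closure step you flag as the main obstacle is exactly where the paper spends its algebra, and it handles the $\ad^T$ cross-terms you mention by introducing the ``spatial variation'' $\ldh\overline{(\cdot)}=\delta\overline{(\cdot)}\pm\ad_{\le}^{(T)}\overline{(\cdot)}$ so that varied spatial quantities remain related to body quantities by $\Ad_g$ and the recursion stays within the claimed class of articulated operators.
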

\begin{proof}
See \cite[\cref{subsection::pp2}]{fan2018wafr_app}.
\end{proof} 

In \cite[\cref{algorithm::dabi}]{fan2018wafr_app}, the forward and backward passes of the tree structure take $O(s^2n)$ time, and the $n$ computations of the $s\times s$ matrix inverse takes $O(s^3n)$ time, thus the overall complexity of \cite[\cref{algorithm::dabi}]{fan2018wafr_app} is $O(s^3n+s^2n)$. In contrast, automatic differentiation and our prior methods in \cite{johnson2009scalable,johnson2015structured} take $O(s^2n^3)$ time to compute $\qk{\JJ}(\qk{\lq})$ and another $O(s^3n^3)$ time to compute the $sn\times sn$ matrix inverse $\qk{\JJ}^{-1}(\qk{\lq})$, and the overall complexity is $O(s^3n^3+s^2n^3)$. Though the quasi-Newton method \cite{lee2016linear} is $O(n)$ time for second-order variational integrator in which $s=1$, it requires small time steps and can not be used for third- or higher-order variational integrators.

{Therefore, both \cref{algorithm::drha1} and \cite[\cref{algorithm::dabi}]{fan2018wafr_app} have $O(n)$ complexity for a given $s$, which results in a linear-time variational integrator. Furthermore, \cref{algorithm::drha1} and \cite[\cref{algorithm::dabi}]{fan2018wafr_app} have no restrictions on the number of control points, which indicates that the resulting linear-time variational integrator can be arbitrarily high order. \textit{To our knowledge, this is the first exactly linear-time third- or higher-order implicit integrator for mechanical systems.}}

\subsection{Extension to Constrained Mechanical Systems}
Thus far all our discussions of linear-time variational integrators have been restricted to unconstrained mechanical systems. However, \cref{algorithm::drha1} and \cite[\cref{algorithm::dabi}]{fan2018wafr_app} can be extended to constrained mechanical systems as well.\par 

In terms of the the DEL equation evaluation, the extension to constrained mechanical systems is immediate. From \cref{eq::DEL_general_c}, we only need to add the constraint term $\qka{A}(\qka{q})\cdot \qka{\lambda}$ to the results of using \cref{algorithm::drha1}.

If the variational integrator is second-order and the mechanical system has $m$ constraints, it is possible to compute the Newton direction $\delta q^{k+1}$ and $\delta\lambda^k$ in $O(mn)+O(m^3)$ time using \cite[\cref{algorithm::dabi}]{fan2018wafr_app}. In accordance with \cref{eq::DEL_general_c}, $\delta q^{k+1}$ and $\delta\lambda^k$ should satisfy
$\qk{\JJ}(\qk{q})\cdot\delta q^{k+1} + \qk{A}(\qk{q})\cdot\delta\qk{\lambda}= -r_q^k$ and $\tD h^k(q^{k+1},\dot{q}^{k+1})\cdot\delta q^{k+1}= -r_c^k $
in which $\qk{r_q}$ and $\qk{r_c}$ are equation residues. Then $\delta q^{k+1}$ and $\delta\lambda^k$ can be computed as follows: i) compute $\delta q_r^{k+1}=-{\qk{\JJ}}^{-1}\cdot r_q^k$ with \cite[\cref{algorithm::dabi}]{fan2018wafr_app} which takes $O(n)$ time; ii) compute ${\qk{\JJ}}^{-1}\cdot\qk{A}$ by using \cite[\cref{algorithm::dabi}]{fan2018wafr_app} $m$ times which takes $O(mn)$ time; iii) compute $\delta \lambda^{k}=\big(\tD h^k\cdot\qk{\JJ}\cdot \qk{A}\big)^{-1}(r_c^k+\tD h^k\cdot\delta q_{r}^{k+1})$ which takes $O(m^3)$ time; iv) compute $\delta q^{k+1}=\delta q_r^{k+1}-{\qk{\JJ}}^{-1}\cdot\qk{A}\cdot\delta\lambda^k$.

In regard to third- or higher-order variational integrators, if the constraints are of $h_i^k(\qka{g_i},\qka{\lv_i})=0$ or $h_i^k(\qka{q_i},\qka{\dot{q}_i})=0$ or both for each $i=1,\,2,\,\cdots,\,n$, \cite[\cref{algorithm::dabi}]{fan2018wafr_app} can be used to compute the Newton direction $\delta\qk{\lq}$ and $\delta\qk{\lambda}$ in a similar procedure to the second-order variational integrator.

In next section, we will discuss the linearization of higher-order variational integrators in $O(n^2)$ time.

\section{The Linearization of Higher-Order Variational Integrators}\label{section::quad_lin}

The linearization of discrete time systems is useful for trajectory optimization, stability analysis, controller design, etc., which are import tools in robotics.

From \cref{eq::DEL_general,eq::DEL_general_c}, the linearization of variational integrators is comprised of the computation of $\tD^2 \cL_d(\qk{\lq})$, $\tD \qka{\cF_d}(\qka{t})$ and $\tD \qka{A}(\qka{q})$. In most cases, $\tD\qka{\cF_d}(\qka{t})$ and $\tD \qka{A}(\qka{q})$ can be efficiently computed in $O(n^2)$ time, therefore, the linearization efficiency is mostly affected by $\tD^2 \cL_d(\qk{\lq})$.

It is by definition that the Lagrangian of a mechanical system is $\cL(q,\dot{q})= K(q,\dot{q})-V(q)$
in which $K(q,\dot{q})$ is the kinetic energy and $V(q)$ is the potential energy, and from \cref{eq::ddl}, the computation of $\tD^2 \cL_d(\qk{\lq})$ is actually to compute $\frac{\partial K}{\partial \dot{q}^2}$, $\frac{\partial^2 K }{\partial \dot{q}\partial q}$, $\frac{\partial^2 K }{\partial q\partial \dot{q}}$, $\frac{\partial^2 K }{\partial q^2}$ and $\frac{\partial V}{\partial q^2}$, for which we have \cref{prop::dkdq} and \cref{prop::dvkdq} as follows.

\begin{prop}\label{prop::dkdq}
For the kinetic energy $K(q,\dot{q})$ of a mechanical system, $\frac{\partial^2 K }{\partial \dot{q}^2}$, $\frac{\partial^2 K }{\partial \dot{q}\partial q}$, $\frac{\partial^2 K }{\partial q\partial \dot{q}}$, $\frac{\partial^2 K}{\partial q^2}$ can be recursively computed with \cref{algorithm::dkdv} in $O(n^2)$ time.
\end{prop}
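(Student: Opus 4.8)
The plan is to expand the kinetic energy in articulated-body form, $K=\frac12\sum_{i=1}^n \lv_i^T\lM_i\lv_i$, and exploit the fact that $\lv_i$, $\lS_i$ and $\lM_i$ depend on the coordinates $q_j,\dot q_j$ only for bodies $j$ on the path from the root to $i$ (that is, $j\in\anc(i)$ or $j=i$). Since the recursion $\lv_i=\lv_{\pa(i)}+\lS_i\dot q_i$ unrolls to $\lv_i=\sum_{j\in\anc(i)\cup\{i\}}\lS_j\dot q_j$, we have $\partial\lv_i/\partial\dot q_a=\lS_a$ whenever $i\in\des(a)\cup\{a\}$ and $0$ otherwise, and each twist $\lS_a$ is independent of $\dot q$. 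First I would compute the gradient in $\dot q$: differentiating and collapsing the subtree sum using \cref{def::abmom} gives $\partial K/\partial\dot q_a=\lS_a^T\lmu_a$, the very articulated-body momentum already appearing in \cref{prop::eval}. Differentiating once more in $\dot q_b$, and using that in a tree two bodies $a,b$ can both be ancestors of a common $i$ only if one is itself an ancestor of the other, collapses the double sum to a single subtree sum over the deeper index, yielding $\partial^2 K/\partial\dot q_a\partial\dot q_b=\lS_a^T\lI_b^A\lS_b$ (with the roles ordered so that $b\in\des(a)\cup\{a\}$), where $\lI_b^A=\sum_{j\in\des(b)\cup\{b\}}\lM_j$ is the composite inertia obeying $\lI_b^A=\lM_b+\sum_{j\in\chd(b)}\lI_j^A$. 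These composite inertias are accumulated in one backward pass, after which filling the $n\times n$ block costs $O(n^2)$.

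The heart of the argument, and the main obstacle, is the three blocks involving $q$, which force differentiation of the configuration-dependent quantities $\lS_i$, $\lv_i$ and $\lM_i$. Here I would first establish the Lie-algebraic propagation identities, all consistent with the paper's $\dot{\lS}_i=\ad_{\lv_i}\lS_i$: for $b$ strictly ancestral to $i$, $\partial\lS_i/\partial q_b=\ad_{\lS_b}\lS_i$ (and $\partial\lS_i/\partial q_i=0$), hence $\partial\lv_i/\partial q_b=\ad_{\lS_b}(\lv_i-\lv_b)$ for $b\in\anc(i)\cup\{i\}$, and $\partial\lM_i/\partial q_b=-\ad_{\lS_b}^T\lM_i-\lM_i\,\ad_{\lS_b}$ for $b\in\anc(i)\cup\{i\}$, each derivative vanishing when $b$ is not on the root-to-$i$ path. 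Substituting these into $\partial(\lS_a^T\lmu_a)/\partial q_b$ produces $\partial^2 K/\partial q_b\partial\dot q_a$, and differentiating the gradient $\partial K/\partial q_a$ a second time produces $\partial^2 K/\partial q^2$.

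The delicate part is that each of these second derivatives is a double sum over supports, so one must separate the cases $a\prec b$, $a=b$ and $a\succ b$ (the incomparable case giving zero) and check that, after the bracket identities are inserted, every entry can still be assembled from a fixed collection of subtree-accumulated quantities, namely the composite inertias, the articulated momenta, and their $\ad_{\lS_b}$- and $\ad_{\lS_b}^T$-propagated counterparts, each computed once per body in $O(n)$ total forward and backward passes. The proposition then follows by verifying that each of the $O(n^2)$ entries costs only $O(1)$ to read off from these accumulations, so that no entry triggers a fresh subtree traversal; this is exactly what keeps the overall cost at $O(n^2)$ rather than $O(n^3)$. I would close by confirming that the passes of \cref{algorithm::dkdv} realize precisely these recursive accumulations.
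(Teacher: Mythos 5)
Your proposal takes essentially the same route as the paper's proof: the same gradients $\frac{\partial K}{\partial \dot{q}_i}=\lS_i^T\lmu_i$ and $\frac{\partial K}{\partial q_i}=\dot{\lS}_i^T\lmu_i$, the same ancestor/descendant case split with the incomparable case vanishing, the same Lie-algebra derivative identities for $\lS_i$, $\lv_i$, $\lM_i$, and the same backward-pass accumulation of composite inertias and articulated momenta (the paper's $\M_i$, $\lmu_i$, $\M_i^A$, $\M_i^B$) giving $O(1)$ per entry and $O(n^2)$ overall. The one step you leave as a claim to be checked --- that after inserting the bracket identities each entry factors into a $j$-dependent row times a single per-body subtree-accumulated column, e.g.\ via $\lmu_i^T\ad_{\lS_j}\lS_i=-\lS_j^T\ad_{\lmu_i}^D\lS_i$ collapsing the $\ad_{\lS_j}$ terms into $\dot{\lS}_j$ and $\M_i^B=\M_i\dot{\lS}_i-\ad_{\lmu_i}^D\lS_i$ --- is exactly the algebra the paper carries out, and it goes through.
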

\begin{proof}
See \cite[\cref{subsection::pp3}]{fan2018wafr_app}.
\end{proof}

In the matter of potential energy $V(q)$, we only consider the gravitational potential energy $V_\gv(q)$, and the other types of potential energy can be computed in a similar way.

\begin{prop}\label{prop::dvkdq}
	If $\gv\in \R^3$ is gravity, then for the gravitational potential energy $V_{\vec{g}}(q)$,  $\frac{\partial^2 V_{\vec{g}} }{\partial q^2}$ can be recursively computed with \cref{algorithm::dvdq} in $O(n^2)$ time.
\end{prop}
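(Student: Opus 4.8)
The plan is to write the gravitational potential explicitly in terms of mass–center positions and then differentiate twice, exploiting the Lie-group kinematics of the tree so that the resulting double sum collapses onto a single leaf-to-root accumulation. Since the origin of each frame $\{i\}$ is the mass center of body $i$, the position $p_i\in\R^3$ of that mass center in frame $\{0\}$ is exactly the translation block of $g_i$, so that
\begin{equation*}
V_{\gv}(q)=-\sum_{i=1}^n m_i\,\gv^T p_i(q).
\end{equation*}
Consequently $\partial^2 V_{\gv}/\partial q^2$ reduces to assembling the weighted sums $\sum_i m_i\,\gv^T\big(\partial^2 p_i/\partial q_j\partial q_k\big)$, and the entire question becomes how cheaply these can be produced across all $(j,k)$.

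First I would establish the kinematic derivatives of $p_i$. Because $g_i=g_{\pa(i)}g_{\pa(i),i}(q_i)$ depends on $q_j$ only when $j$ lies on the root-to-$i$ path, and on that path $\partial g_i/\partial q_j=\hat{\lS}_j g_i$, reading off the translation block gives
\begin{equation*}
\frac{\partial p_i}{\partial q_j}=\overline{s}_j\times p_i+\overline{n}_j,\qquad j\in\anc(i)\cup\{i\},
\end{equation*}
and $0$ otherwise, where $\lS_j=(\overline{s}_j^T,\overline{n}_j^T)^T$. Differentiating once more and using $\partial\lS_j/\partial q_k=\ad_{\lS_k}\lS_j$ for a strict ancestor $k\in\anc(j)$ (which is the configuration-space analogue of the velocity relation $\dot{\lS}_i=\ad_{\lv_i}\lS_i$ in \cref{subsection::tree}, and which vanishes when $k$ is not comparable to $j$) yields a closed form for $\partial^2 p_i/\partial q_j\partial q_k$ built only from the constant body twists and the positions $p_i$. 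The immediate consequence is the sparsity that makes $O(n^2)$ attainable: $\partial^2 V_{\gv}/\partial q_j\partial q_k$ vanishes unless $j$ and $k$ are comparable in the tree (one an ancestor of the other, or equal), so there are at most $O(n^2)$ nonzero entries.

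To reach $O(n^2)$ rather than $O(n^3)$ I would introduce the subtree accumulators $P_j=\sum_{i\in\des(j)\cup\{j\}}m_i p_i$ together with the subtree masses $\sum_{i\in\des(j)\cup\{j\}}m_i$, each obtained by one leaf-to-root pass in $O(n)$ time via $P_j=m_j p_j+\sum_{c\in\chd(j)}P_c$. The gradient then collapses to
\begin{equation*}
\frac{\partial V_{\gv}}{\partial q_j}=-\overline{s}_j\cdot\big(P_j\times\gv\big)-\big(\gv\cdot\overline{n}_j\big)\!\!\sum_{i\in\des(j)\cup\{j\}}\!\! m_i,
\end{equation*}
and, because $\partial P_j/\partial q_k$ is again expressible through $P_k$ when $k\in\des(j)$ and through $P_j$ when $k\in\anc(j)$, every Hessian entry costs $O(1)$ once these accumulators and the twists are in hand. \cref{algorithm::dvdq} is then just the bookkeeping that walks the comparable pairs and emits each entry, for a total of $O(n^2)$.

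The hard part will be organizing the two cases consistently. When $k$ is a descendant of $j$, the second derivative only reshapes the subtree sum via a $\partial P_k/\partial q_k$-type term, whereas when $k$ is an ancestor of $j$ it additionally differentiates the twist $\lS_j$, producing the $\ad_{\lS_k}\lS_j$ contribution that must be combined with the position derivative of $P_j$. Keeping these two contributions aligned while guaranteeing that each reuses a precomputed subtree accumulator---so the per-entry work never degrades back to $O(n)$---is the step demanding care, but it is bookkeeping rather than a genuine analytic obstacle once the first- and second-derivative identities above are established.
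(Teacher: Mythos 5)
Your proposal is correct and follows essentially the same route as the paper's proof: write $V_{\gv}=-\sum_i m_i\gv^Tp_i$, use $\partial p_i/\partial q_j=\hat{\overline{s}}_jp_i+\overline{n}_j$ and $\partial\lS_i/\partial q_j=\ad_{\lS_j}\lS_i$ on ancestor pairs, exploit the comparability sparsity, and collapse the double sum onto the leaf-to-root accumulators $\lsig_{m_i}$ and $\lsig_{p_i}$ (your subtree mass and $P_j$) so each Hessian entry costs $O(1)$. The only difference is cosmetic: the paper handles the "descendant" case by symmetry of the Hessian rather than by your separate case analysis, and it carries the algebra through to the closed form $\partial^2V_\gv/\partial q_i\partial q_j=\overline{s}_j^T\hat{\gv}(\lsig_{m_i}\overline{n}_i-\hat{\lsig}_{p_i}\overline{s}_i)$.
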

\begin{proof}
	See \cite[\cref{subsection::pp4}]{fan2018wafr_app}.
\end{proof}

 In regard to \cref{prop::dvkdq} and \cref{algorithm::dvdq}, we remind the reader of the notation introduced in \cref{subsection::lie,subsection::tree} that $m_i\in\R$ is the mass of rigid body $i$, $p_i\in \R^3$ is the mass center of rigid body $i$ as well as the origin of frame $\{i\}$, and $\lS_i=\begin{bmatrix}
\overline{s}_i^T & \overline{n}_i^T
\end{bmatrix}^T\in \R^6$ is the spatial Jacobian of joint $i$ with respect to frame $\{0\}$.
\begin{figure}[!htb]
	\vspace{-2em}
	\begin{minipage}{\columnwidth}
	\begin{algorithm}[H]
		\caption{Recursive Computation of $\frac{\partial^2 K}{\partial \dot{q}^2}$, $\frac{\partial^2 K }{\partial \dot{q}\partial q}$, $\frac{\partial^2 K }{\partial q\partial \dot{q}}$, $\frac{\partial^2 K }{\partial q^2}$}
		\label{algorithm::dkdv}
		\begin{algorithmic}[1]
			\State initialize $g_0=\I$ and $\lv_0=0$
			\For{$i=1\rightarrow n$}
			\State $g_{i}=g_{\pa(i)}g_{\pa(i),i}(q_i)$
			\vspace{0.2em}
			\State $\lM_i = \Ad_{g_{i}}^{-T} M_i { \Ad^{-1}_{g_{i}}}$,\;\; $\lS_i = \Ad_{g_{i}} S_i$
			\State $\lv_{i}= \lv_{\pa(i)}+ {{\lS_i}}\cdot \dot{q}_i$,\;\; $\dot{\lS}_i = \ad_{\lv_{i}} \lS_i$
			\EndFor
			\vspace{0.1em}
			\State initialize $\frac{\partial^2 K}{\partial \dot{q}^2}=\0$,\;\; $\frac{\partial^2 K }{\partial \dot{q}\partial q}=\0$,\;\; $\frac{\partial^2 K }{\partial q\partial \dot{q}}=\0$,\;\; $\frac{\partial^2 K }{\partial q^2}=\0$
			\vspace{0.1em}
			\For{$i=n\rightarrow 1$}
			\State 	$\lmu_i = {\lM_i}\lv_i+\sum\limits_{j\in\chd(i)}\lmu_j$,\quad $\M_i=\lM_i+\sum\limits_{j\in\chd(i)}\M_{j}$
			\State $\M_i^A=\M_i {\lS}_i$,\;\; $\M_i^B=\M_i \dot{\lS}_i-\ad_{\lmu_i}^D\lS_i$
			\vspace{0.2em}
			\For{$j\in \mathrm{anc}(i)\cup\{i\}$}
			\State $\frac{\partial^2 K}{\partial \dot{q}_i\partial \dot{q}_j}=\frac{\partial^2 K}{\partial \dot{q}_j\partial \dot{q}_i}=\lS_j^T \M_i^A$
			\vspace{0.1em}
			\State $\frac{\partial^2 K}{\partial \dot{q}_i \partial q_j}=\frac{\partial^2 K}{\partial q_j \partial \dot{q}_i}=\dot{\lS}_j^T \M_i^A$,\quad $\frac{\partial^2 K}{\partial q_i \partial \dot{q}_j}=\frac{\partial^2 K}{\partial \dot{q}_j \partial q_i}=\lS_j^T \M_i^B$
			\vspace{0.1em}
			\State $\frac{\partial^2 K}{\partial q_i\partial q_j}=\frac{\partial^2 K}{\partial q_j\partial q_i}=\dot{\lS}_j^T \M_i^B$
			\EndFor
			\EndFor
		\end{algorithmic}
	\end{algorithm}
	\vspace{-3.5em}
	\end{minipage}
	\begin{minipage}{\columnwidth}
	\begin{algorithm}[H]
		\caption{Recursive Computation of $\frac{\partial^2 V_{\vec{g}} }{\partial q^2}$}
		\label{algorithm::dvdq}
		\begin{algorithmic}[1]
			\State initialize $g_0=\I$
			\For{$i=1\rightarrow n$}
			\State $g_{i}=g_{\pa(i)}g_{\pa(i),i}(q_i)$,\;\; $\lS_i = \Ad_{g_{i}} S_i$
			\vspace{0.2em}
			\EndFor
			\vspace{0.2em}
			\State initialize $\frac{\partial^2 V_{\vec{g}} }{\partial q^2}=\0$
			\vspace{0.2em}
			\For{$i=n\rightarrow 1$}
			\vspace{0.1em}
			\State 	$\lsig_{m_i} = m_i+\sum\limits_{j\in\chd(i)}\lsig_{m_j}$,\quad $\lsig_{p_i} = m_ip_i+\sum\limits_{j\in\chd(i)}\lsig_{p_j}$
			\State $\lsig_i^A = \hat{\gv}\left(\lsig_{m_i}\cdot\overline{n}_i-\hat{\lsig}_{p_i}\cdot \overline{s}_i\right)$
			\vspace{0.2em}
			\For{$j\in \mathrm{anc}(i)\cup\{i\}$}
			\vspace{0.2em}
			\State $\frac{\partial^2 V_\gv}{\partial q_i\partial q_j}=\frac{\partial^2 V_\gv}{\partial q_j\partial q_i}=\overline{s}_j^T\cdot\lsig_i^A$
			\vspace{0.2em}
			\EndFor
			\EndFor
		\end{algorithmic}
	\end{algorithm}
\vspace{-3em}
\end{minipage}
\end{figure}

If $\frac{\partial K}{\partial \dot{q}^2}$, $\frac{\partial^2 K }{\partial \dot{q}\partial q}$, $\frac{\partial^2 K }{\partial q\partial \dot{q}}$, $\frac{\partial^2 K }{\partial q^2}$ and $\frac{\partial V}{\partial q^2}$ are computed in $O(n^2)$ time, then according to \cref{eq::qdot,eq::ddl}, the remaining computation of $\tD^2 \cL_d(\qka{\lq})$ is simply the application of the chain rule. Therefore, if the variational integrator has $s+1$ control points, the complexity of the linearization is $O(s^2n^2)$. In contrast, automatic differentiation and our prior methods \cite{johnson2009scalable,johnson2015structured} take $O(s^2n^3)$ time to linearize the variational integrators. 
\vspace{-0.5em}

\section{Comparison with Existing Methods}\label{section::discussion}
The variational integrators using \cref{algorithm::dkdv,algorithm::drha1,algorithm::dvdq} and \cite[\cref{algorithm::dabi}]{fan2018wafr_app} are compared with the linear-time quasi-Newton method \cite{lee2016linear}, automatic differentiation and the Hermite-Simpson direct collocation method, which verifies the accuracy, efficiency and scalability of our work. All the tests are run in C++ on a 3.1GHz Intel Core Xeon Thinkpad P51 laptop.
\vspace{-1em}
\subsection{Comparison with the Linear-Time Quasi-Newton Method}
\begin{figure}[!htbp]
	\vspace{-3.5em}
	\centering
	\begin{tabular}{ccc}
		\subfloat[][]{\includegraphics[trim =0mm 0mm 0mm 0mm,width=0.32\textwidth]{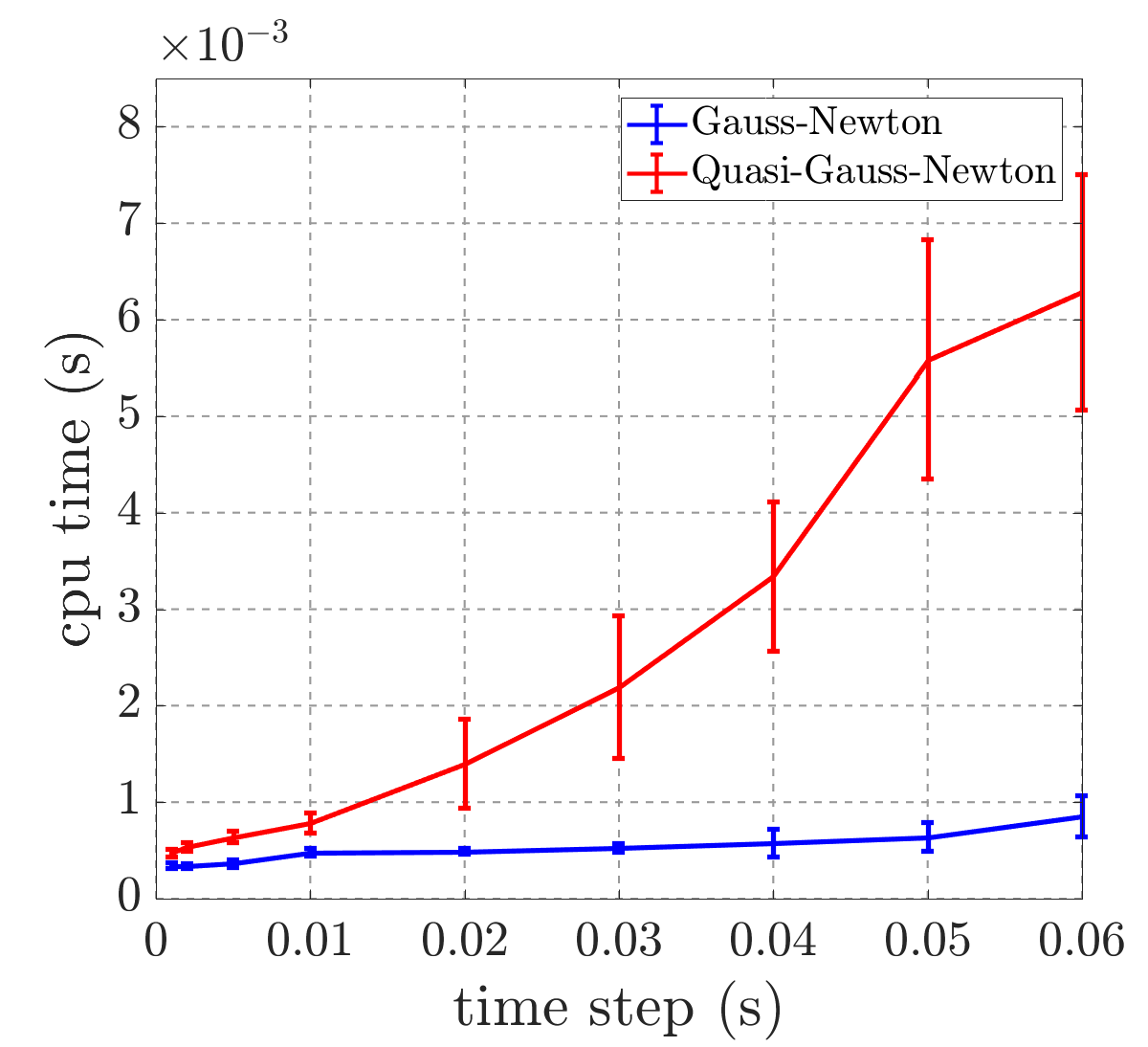}} &
		\subfloat[][]{\includegraphics[trim =0mm 0mm 0mm 0mm,width=0.32\textwidth]{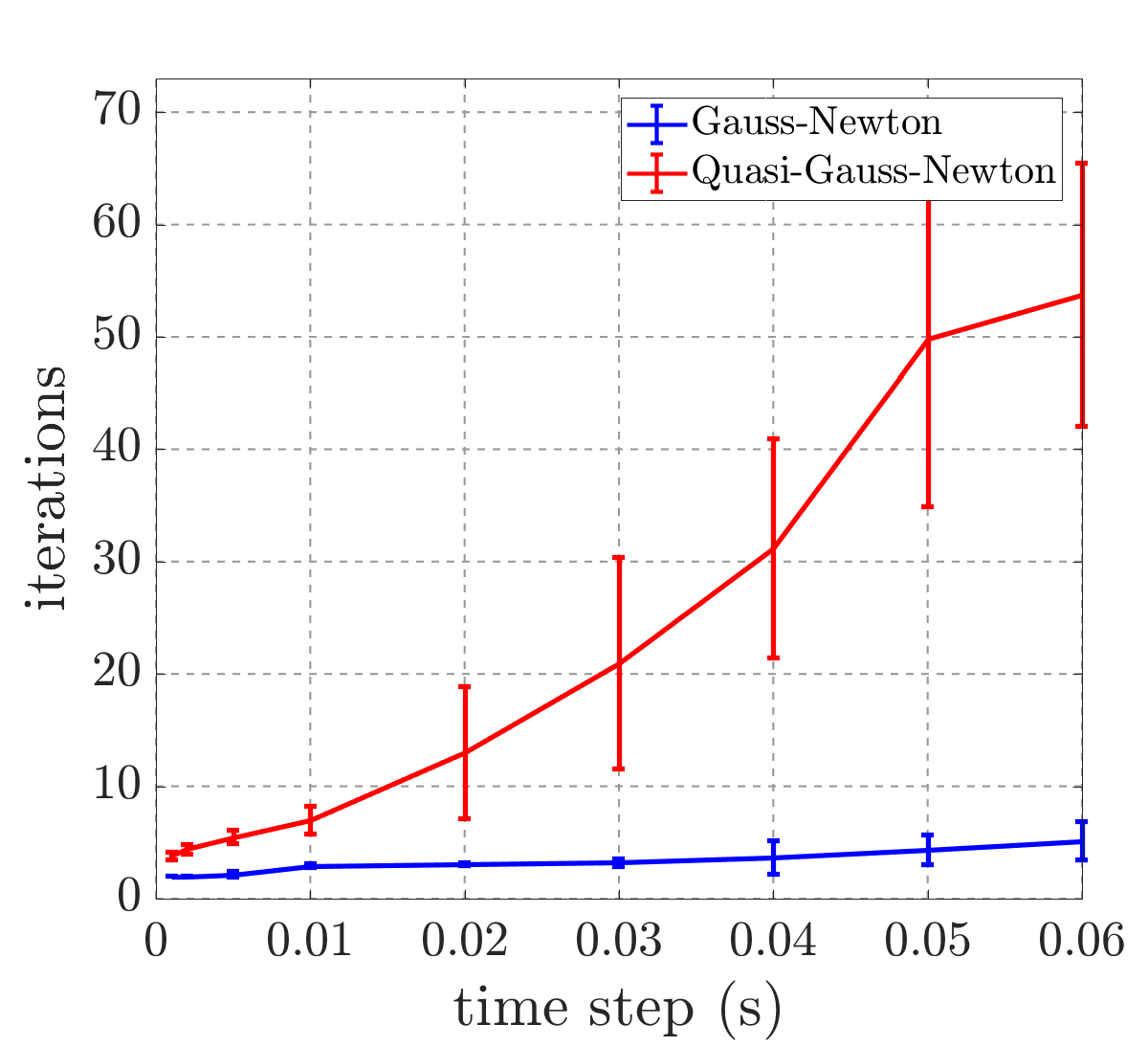}} &
		\subfloat[][]{\includegraphics[trim =0mm 0mm 0mm 0mm,width=0.32\textwidth]{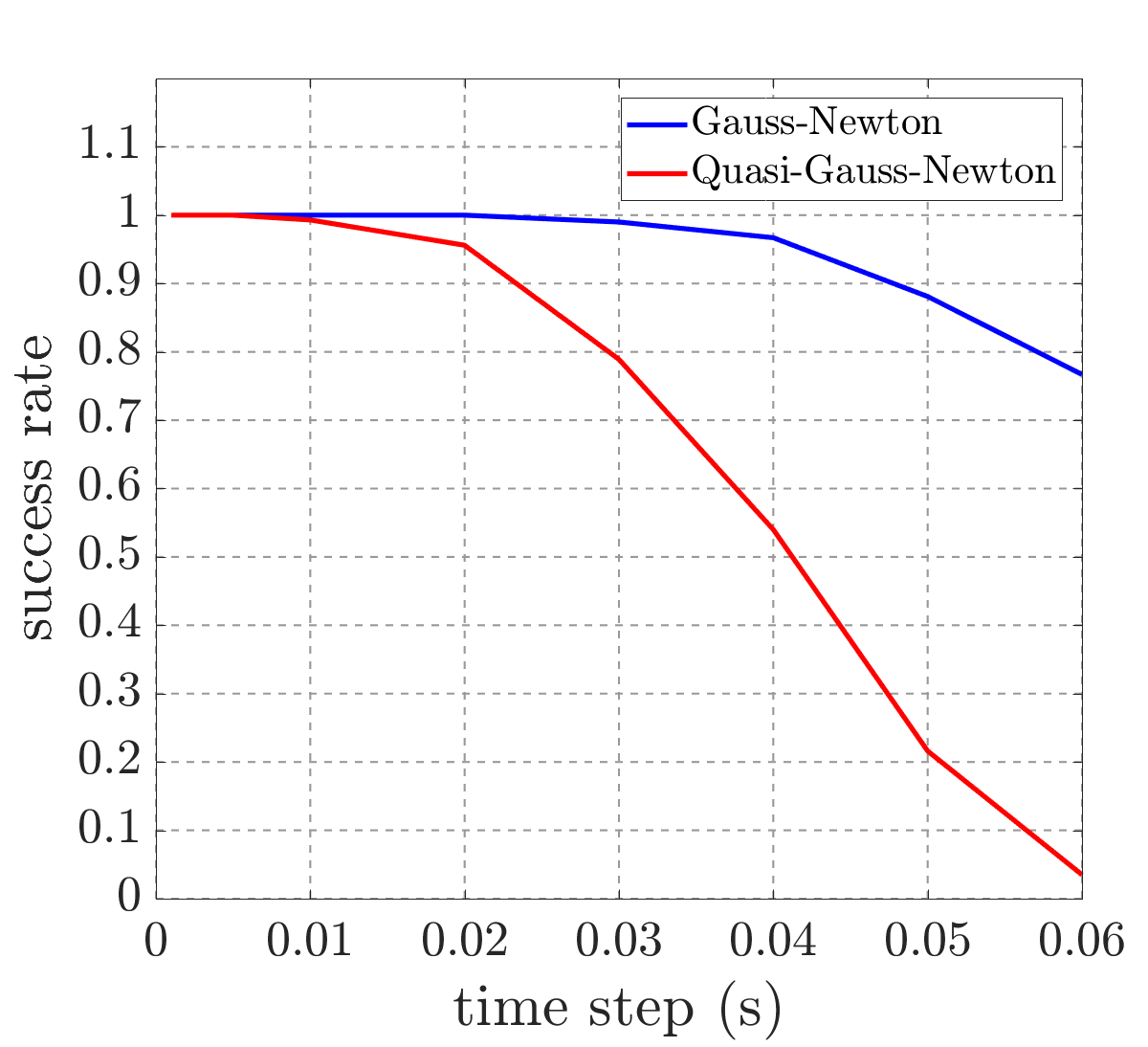}}
	\end{tabular}
\vspace{-0.5em}
	\caption{The comparison of the $O(n)$ Newton method with the $O(n)$ quasi-Newton method \cite{lee2016linear} for the trapezoidal variational integrator of a $32$-link pendulum with different time steps. The results of computational time are in (a), number of iterations in (b) and success rates in (c). Each result is calculated over 1000 initial conditions.}
	\label{fig::pendulum_time} 
	\vspace{-1.5em}
\end{figure}

In this subsection, we compare the $O(n)$ Newton method using \cref{algorithm::drha1} and \cite[\cref{algorithm::dabi}]{fan2018wafr_app} with the $O(n)$ quasi-Newton method in \cite{lee2016linear} on the trapezoidal variational integrator (\cref{example::tvi}) of a $32$-link pendulum with different time steps.

In the comparison, $1000$ initial joint angles $q^0$ and joint velocities $\dot{q}^0$ are uniformly sampled from $[-\frac{\pi}{2},\,\frac{\pi}{2}]$ for each of the selected time steps, which are $0.001$s, $0.002$s, $0.005$s, $0.01$s, $0.02$s, $0.03$s, $0.04$s, $0.05$s and $0.06$s, and the Newton and quasi-Newton methods are used to solve the DEL equations for one time step. The results are in \cref{fig::pendulum_time}, in which the computational time and the number of iterations are calculated only over initial conditions that the DEL equations are successfully solved. It can be seen that the Newton method using \cref{algorithm::drha1} and \cite[\cref{algorithm::dabi}]{fan2018wafr_app} outperforms the quasi-Newton method in \cite{lee2016linear} in all aspects, especially for relatively large time steps.
\vspace{-0.5em}
\subsection{Comparison with Automatic Differentiation}
In this subsection, we compare \cref{algorithm::dkdv,algorithm::dvdq,algorithm::drha1} and \cite[\cref{algorithm::dabi}]{fan2018wafr_app} with automatic differentiation for evaluating the DEL equations, computing the Newton direction and linearizing the DEL equations. The variational integrator used is the Simpson variational integrator (\cref{example::svi}). \par

In the comparison, we use pendulums with different numbers of links as benchmark systems. For each pendulum, $100$ initial joint angles $q^0$ and joint velocities $\dot{q}^0$ are uniformly sampled from $[-\frac{\pi}{2},\,\frac{\pi}{2}]$. The results are in \cref{fig::pendulum_autodiff} and it can be seen that our recursive algorithms are much more efficient, which is consistent with the fact that \cref{algorithm::dkdv,algorithm::dvdq,algorithm::drha1} and \cite[\cref{algorithm::dabi}]{fan2018wafr_app} are $O(n)$ for evaluating the DEL equations, $O(n)$ for computing the Newton direction, and $O(n^2)$ for linearizing the DEL equations, whereas automatic differentiation are $O(n^2)$, $O(n^3)$ and $O(n^3)$, respectively. 

\begin{figure}[!htbp]	
	\vspace{-2em}
	\centering
	\begin{tabular}{ccc}
		\subfloat[][]{\includegraphics[trim =0mm 0mm 0mm 0mm,width=0.32\textwidth]{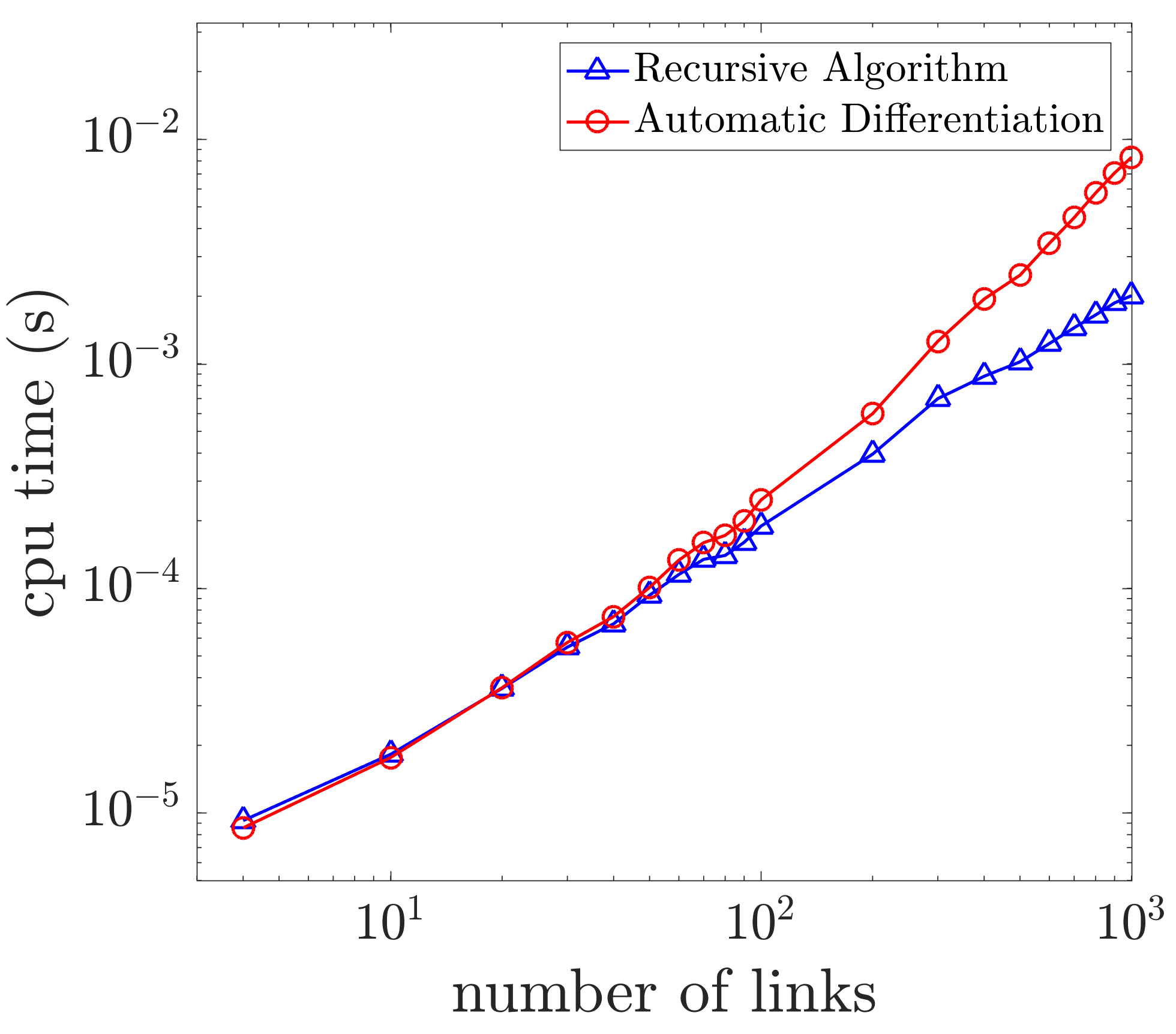}} &
		\subfloat[][]{\includegraphics[trim =0mm 0mm 0mm 0mm,width=0.32\textwidth]{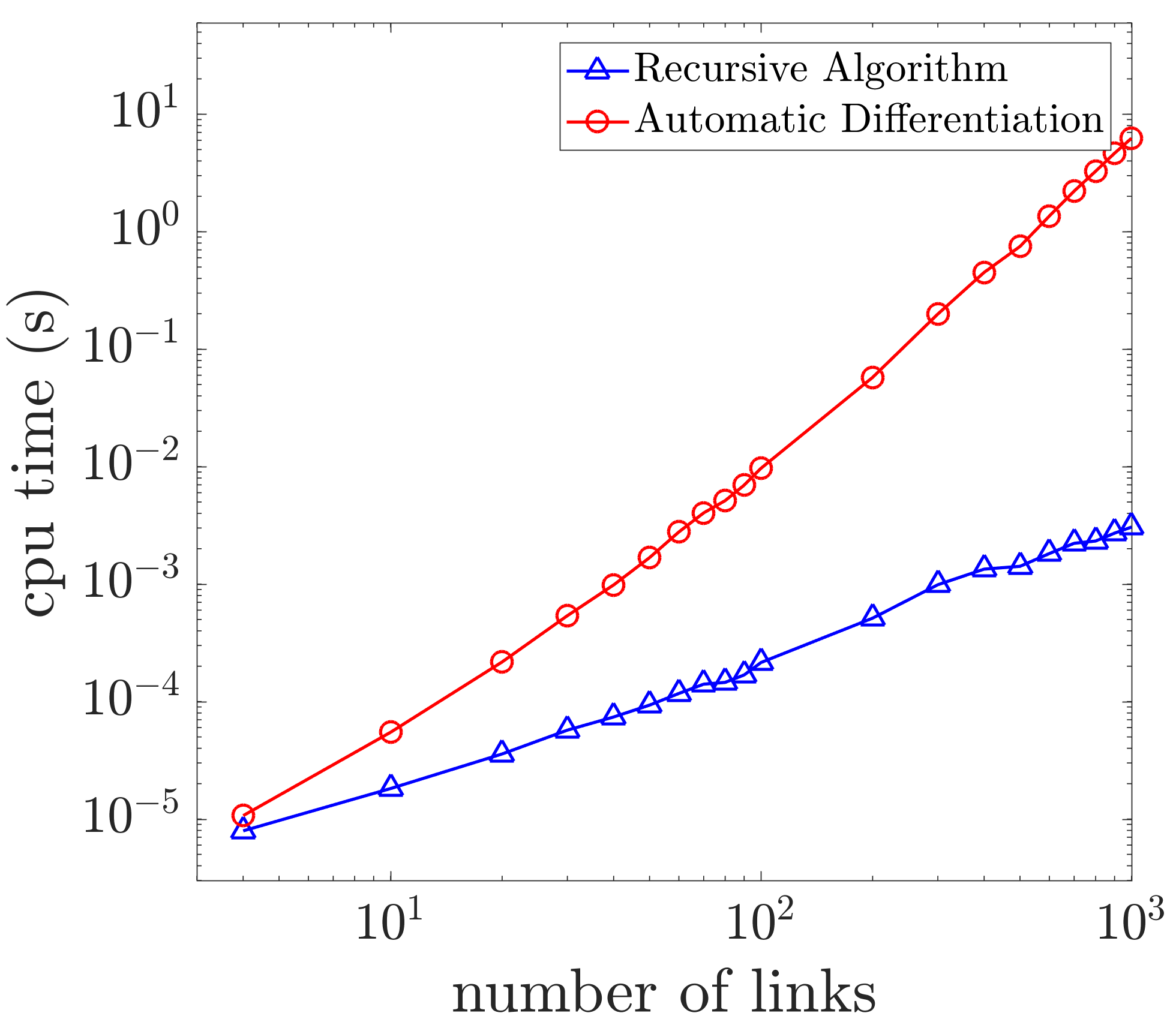}}&
		\subfloat[][]{\includegraphics[trim =0mm 0mm 0mm 0mm,width=0.32\textwidth]{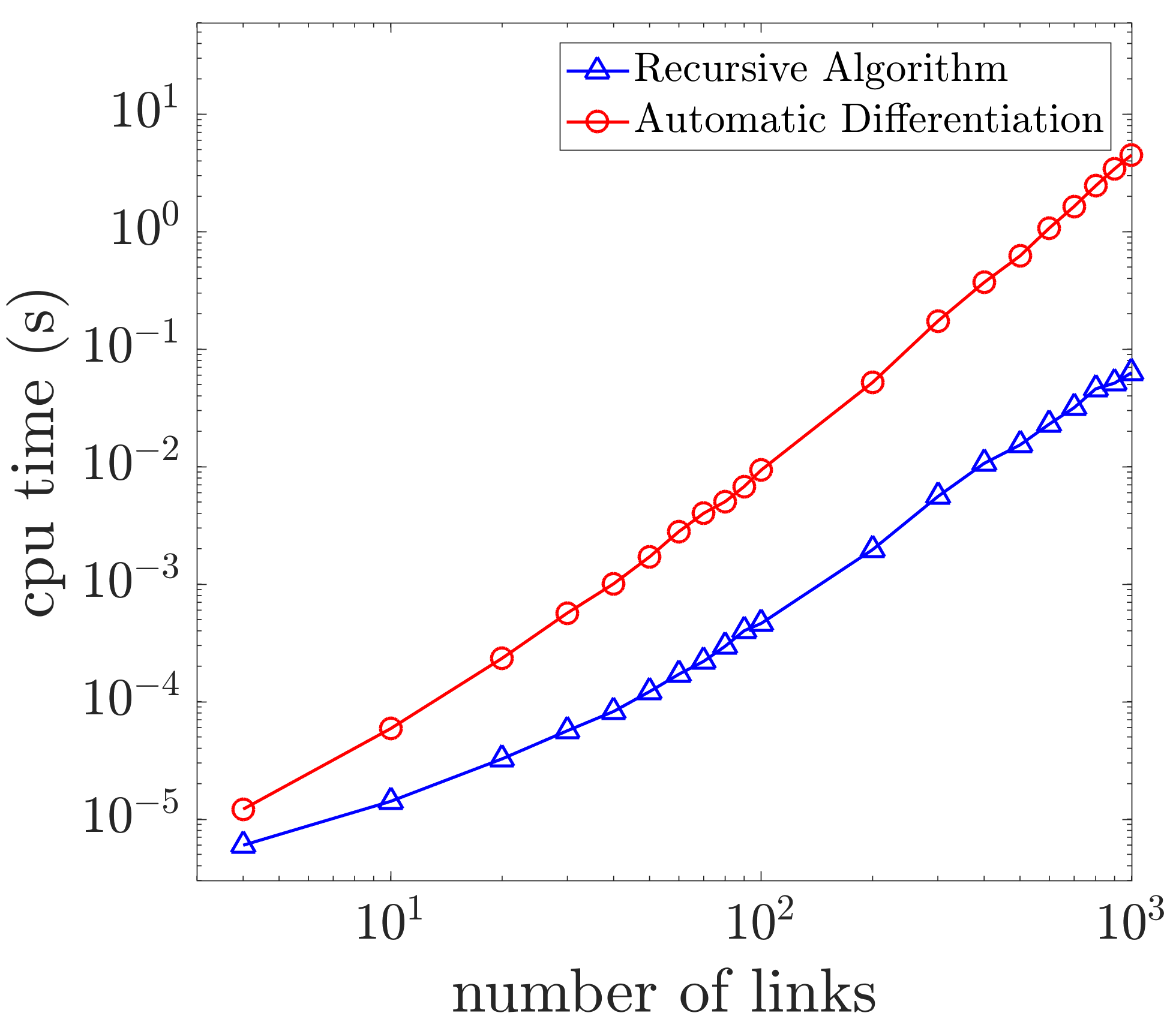}} 	
	\end{tabular}
\vspace{-0.5em}
	\caption{The comparison of our recursive algorithms with automatic differentiation for pendulums with different numbers of links. The variational integrator used is the Simpson variational integrator. The results of evaluating the DEL equations are in (a), computing the Newton direction in (b) and linearizing the DEL equations in (c). Each result is calculated over 100 initial conditions.}
	\label{fig::pendulum_autodiff} 
	\vspace{-1em}
\end{figure}

\subsection{Comparison with the Hermite-Simpson Direct Collocation Method}
{In this subsection, we compare the fourth-order Simpson variational integrator (\cref{example::svi}) with the Hermite-Simpson direct collocation method, which is a third-order implicit integrator commonly used in robotics for trajectory optimization \cite{posa2016optimization,hereid2018dynamic}.\footnote{The Hermite-Simpson direct collocation methods used in \cite{posa2016optimization,hereid2018dynamic} are actually implicit integrators that integrate the trajectory as a second-order system in the $(q,\dot{q})$ space, whereas the variational integrators integrate the trajectory in the $(q,p)$ space.} Note that both integrators use three control points for integration.
	
The strict comparison of the two integrators for trajectory optimization is usually difficult since it depends on a number of factors, such as the target problem, the optimizers used, the optimality and feasibility tolerances, etc. Therefore, we compare the Simpson variational integrator and the Hermite-Simpson direct collocation method by listing the order of accuracy, the number of variables and the number of constraints for trajectory optimization. In general, the computational loads of optimization depends on the problem size that is directly related with the number of variables and the the number of constraints. The higher-order accuracy suggests the possibility of large time steps in trajectory optimization, which reduces not only the problem size but the computational loads of optimization as well. The results are in \cref{table::compare}.\footnote{The explicit and implicit formulations of the Hermite-Simpson direct collocation methods differ in whether the joint acceleration $\ddot{q}$ is explicitly computed or implicitly involved as extra variables. Even though the explicit formulation of the Hermite-Simpson direct collocation has less variables and constraints than the implicit formulation, it is usually more complicated for the evaluation and linearization, therefore, the implicit formulation is usually more efficient and more commonly used in trajectory optimization \cite{hereid2018dynamic}.} It can be concluded that the Simpson variational integrator is more accurate and has less variables and constraints in trajectory optimization, especially for constrained mechanical systems.
\begin{table}[!htbp]
	\begin{center}
		\begin{tabular}{ c | c | c| c }
			\hline
			integrator \quad &accuracy & \quad $\#$ of variables & \quad $\#$ of constraints \quad \\
			\hline
			variational integrator        & $4$th-order& $(4N+3)n+(2N+1)m$  &  $3Nn+(2N+1)m$\\
			direct collocation (explicit) & $3$rd-order& $(6N+3)n+(2N+1)m$  &  $4Nn+(6N+3)m$ \\
			direct collocation (implicit) & $3$rd-order& $(8N+4)n+(2N+1)m$  &  $(6N+1)n+(6N+3)m$\\
			\hline
		\end{tabular}
	\end{center}
	\caption{The comparison of the Simpson variational integrator with the Hermite-Simpson direct collocation method for trajectory optimization. The trajectory optimization problem has $N$ stages and the mechanical system has $n$ degrees of freedom, $m$ holonomic constraints and is fully actuated with $n$ control inputs. Note that both integrators use three control points for integration.}
	\vspace{-1.25em}
	\label{table::compare}
\end{table}
\begin{figure}[!htbp]	
	\vspace{-1.5em}
	\begin{tabular}{ccc}
		\subfloat[][]{\includegraphics[trim =0mm 0mm 0mm 0mm,width=0.32\textwidth]{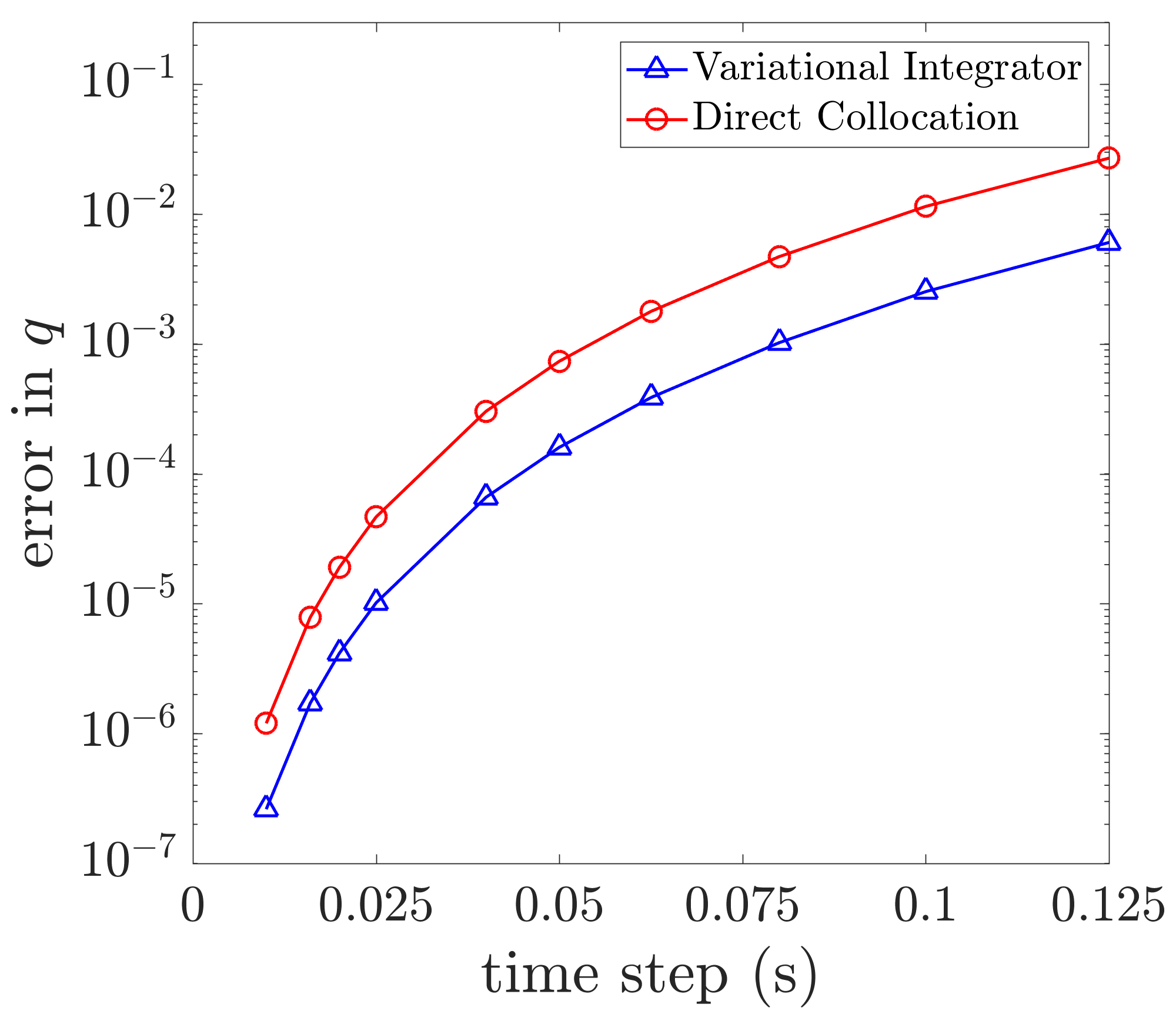}} &
		\subfloat[][]{\includegraphics[trim =0mm 0mm 0mm 0mm,width=0.32\textwidth]{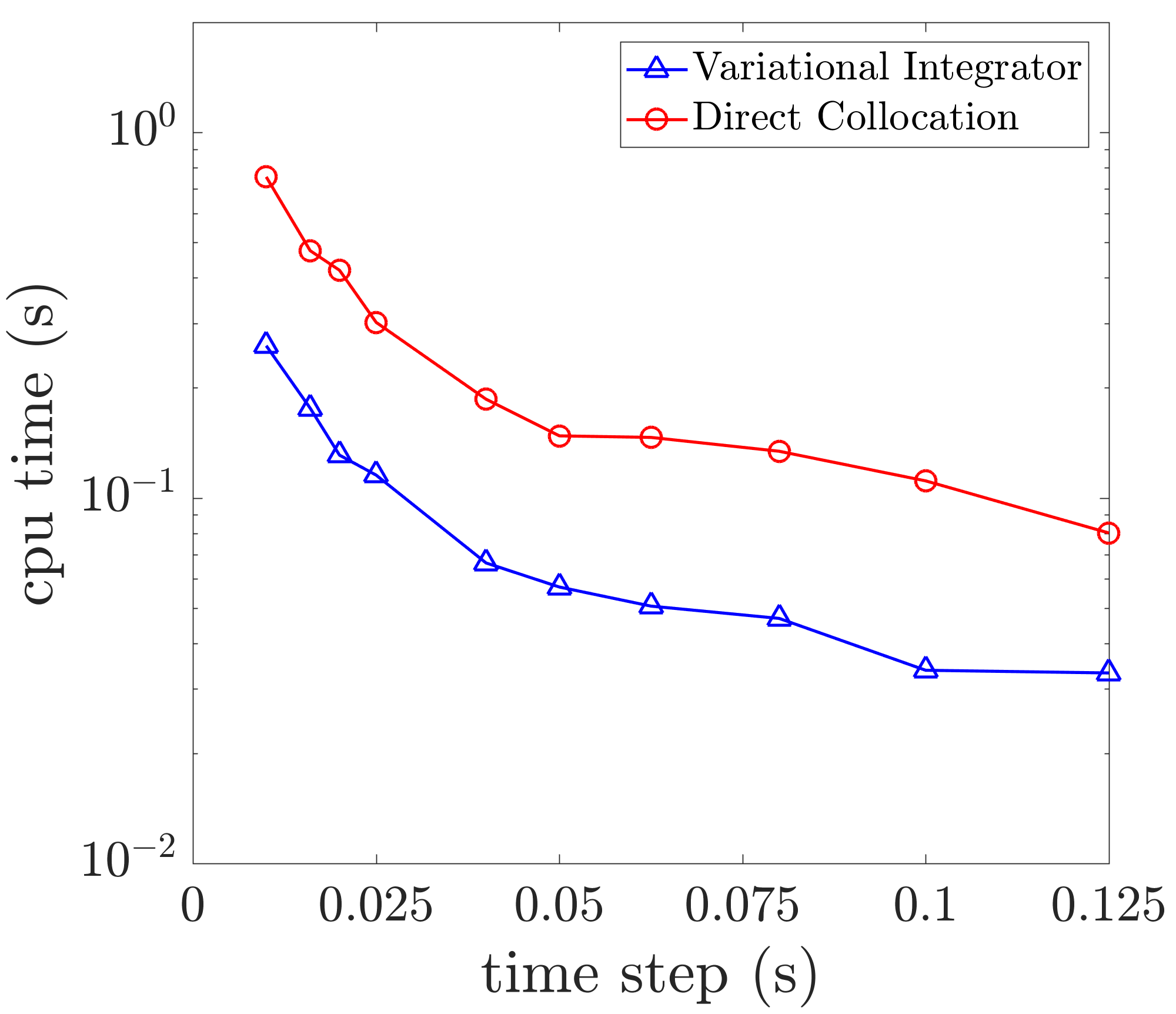}}& 
		\subfloat[][]{\includegraphics[trim =0mm 0mm 0mm 0mm,width=0.32\textwidth]{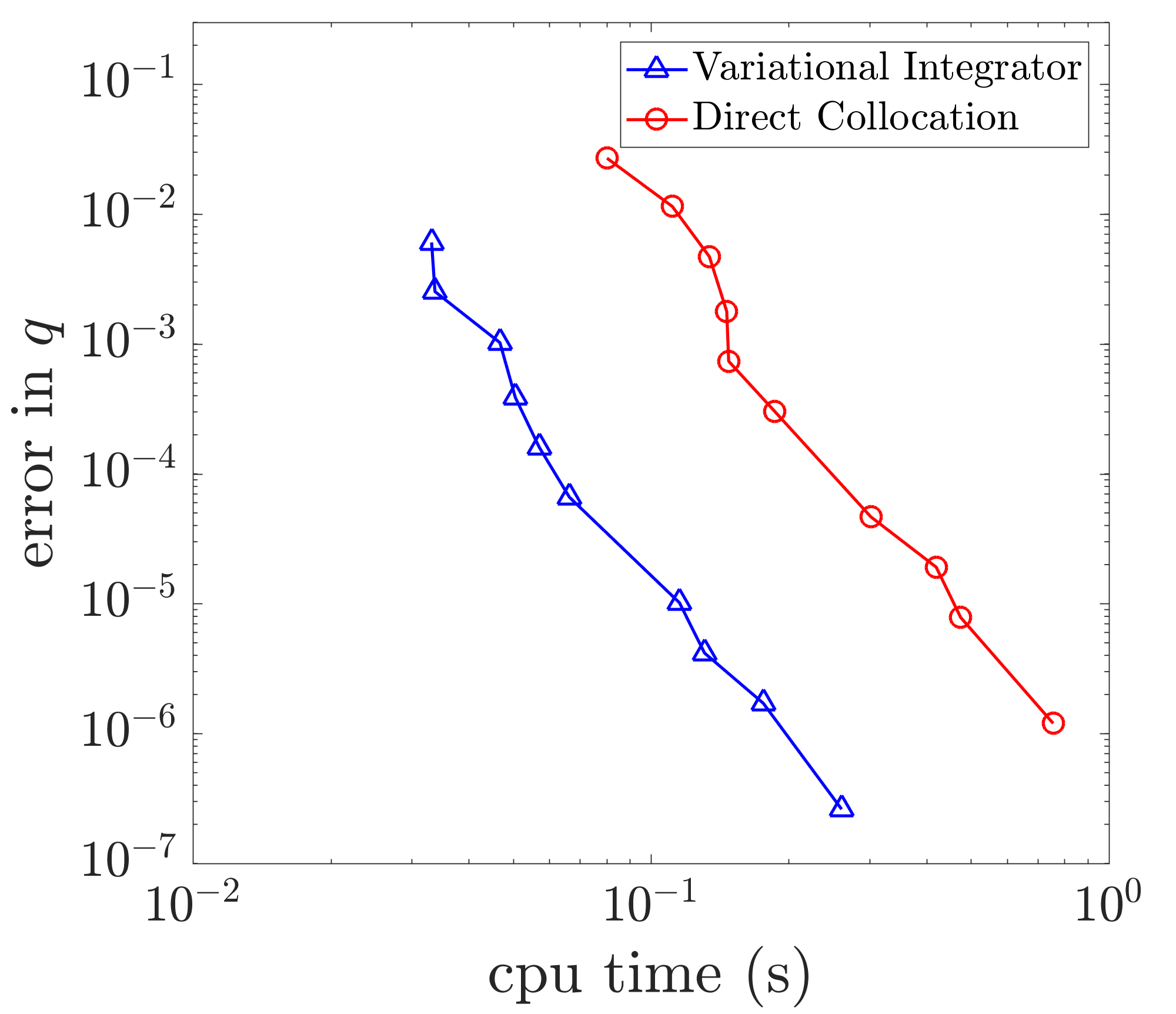}}
	\end{tabular}
\vspace{-0.5em}
	\caption{The comparison of the Simpson variational integrator with the Hermite-Simpson direction collocation method on a $12$-link pendulum with different time steps. The results of the integrator error are in (a), the computational time in (b) and the integration error v.s. computational time in (c). Each result is calculated over 100 initial conditions.}
	\label{fig::pendulum_time2} 
	\vspace{-1.75em}
\end{figure}  

The accuracy comparison in \cref{table::compare} of the Simpson variational integrator with the Hermite-Simpson direct collocation method is further numerically validated on a $12$-link pendulum. In the comparison, different time steps are used to simulate $100$ trajectories with the final time $T=10$ s, and the initial joint angles $q^0$ are uniformly sampled from $[-\frac{\pi}{12},\,\frac{\pi}{12}]$ and the initial joint velocities $\dot{q}^0$ are zero. Moreover, the Simpson variational integrator uses \cref{algorithm::drha1} and \cite[\cref{algorithm::dabi}]{fan2018wafr_app} which has $O(n)$ complexity for the integrator evaluation and the Newton direction computation, whereas the Hermite-Simpson direct collocation method uses \cite{featherstone2014rigid,carpentier2018analytical} which is $O(n)$ for the integrator evaluation and $O(n^3)$ for the Newton direction computation. For each initial condition, the benchmark solution $q_d(t)$ is created from the Hermite-Simpson direct collocation method with a time step of $5\times 10^{-4}$ s and the simulation error in $q(t)$ is evaluated as  $\frac{1}{T}\int_{0}^T\|q(t)-q_d(t)\|dt$. The running time of the simulation is also recorded. The results are in \cref{fig::pendulum_time2}, which indicates that the Simpson variational integrator is more accurate and more efficient in simulation, and more importantly, a better alternative to the Hermite-Simpson direction collocation method for trajectory optimization.} 

{In regard to the integrator evaluation and linearization, for unconstrained mechanical systems, experiments (not shown) suggest that the Simpson variational integrator using \cref{algorithm::drha1,algorithm::dkdv,algorithm::dvdq} is usually faster than the Hermite-Simpson direct collocation method using \cite{featherstone2014rigid,carpentier2018analytical} even though theoretically both integrators have the same order of complexity. However, for constrained mechanical systems, if there are $m$ holonomic constraints, the Simpson variational integrator is $O(mn)$ for the evaluation and $O(mn^2)$ for the linearization while the Hermite-Simpson direct collocation method in \cite{posa2016optimization,hereid2018dynamic} is respectively $O(mn^2)$ and $O(mn^3)$, the difference of which results from that the Hermite-Simpson direct collocation method is more complicated to model the constrained dynamics.}

\vspace{-0.5em}
\section{Implementation for Trajectory Optimization}\label{section::examples}
In this section, we implement the fourth-order Simpson variational integrator (\cref{example::svi}) with \cref{algorithm::dkdv,algorithm::drha1,algorithm::dvdq} on the Spring Flamingo robot \cite{pratt1998intuitive}, the LittleDog robot \cite{shkolnik2011bounding} and the Atlas robot \cite{nelson2012petman} for trajectory optimization, the results of which are included in our supplementary videos. It should be noted that the variational integrators used in \cite{johnson2009scalable,johnson2015structured,kobilarov2009lie,manchestercontact,junge2005discrete} for trajectory optimization are second order. In \cref{subsection::dog,subsection::jump}, a LCP formulation similar to \cite{manchestercontact} is used to model the discontinuous frictional contacts with which no contact mode needs to be prespecified. These examples indicate that higher-order variational integrators are good alternatives to the direct collocation methods \cite{posa2016optimization,hereid2018dynamic}. The trajectory optimization problems are solved with SNOPT \cite{gill2005snopt}. 
\subsection{Spring Flamingo}\label{subsection::jump}
\begin{figure}[!htpb]
	\centering
	\vspace{-2.5em}
	\begin{tabular}{cccc}
		\subfloat[][$t=0$ s]{\includegraphics[trim =0mm 0mm 0mm 0mm,width=0.23\textwidth]{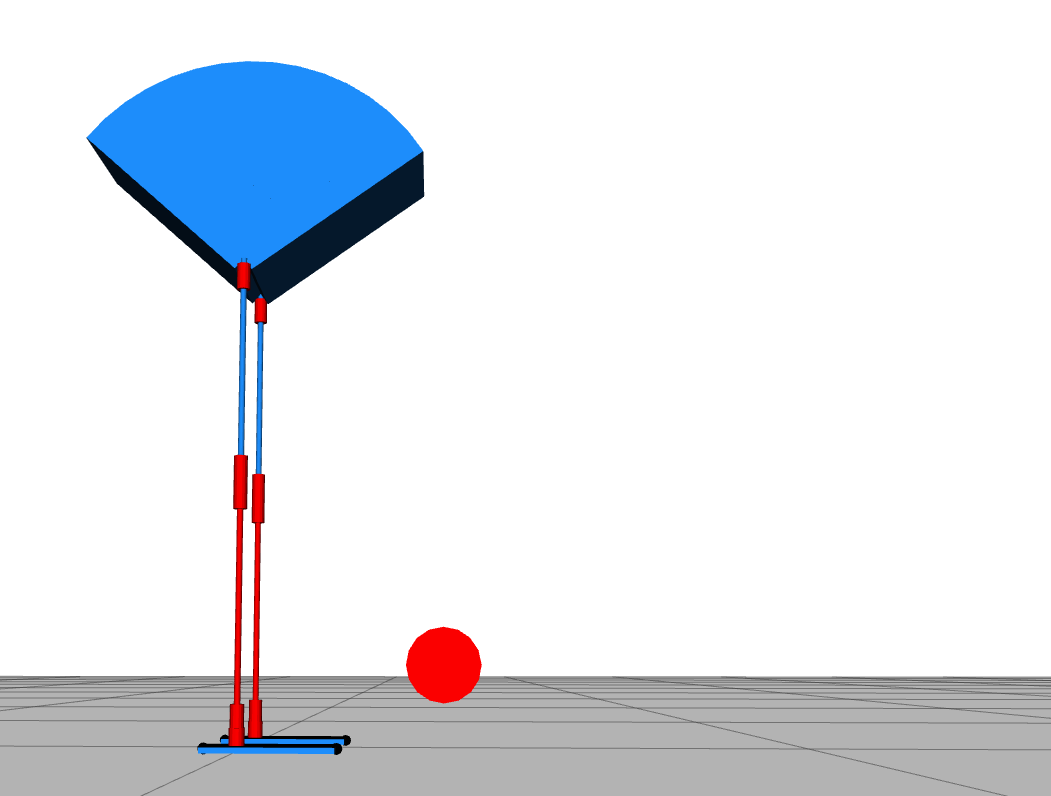}} &
		\subfloat[][$t=0.13$ s]{\includegraphics[trim =0mm 0mm 0mm 0mm,width=0.23\textwidth]{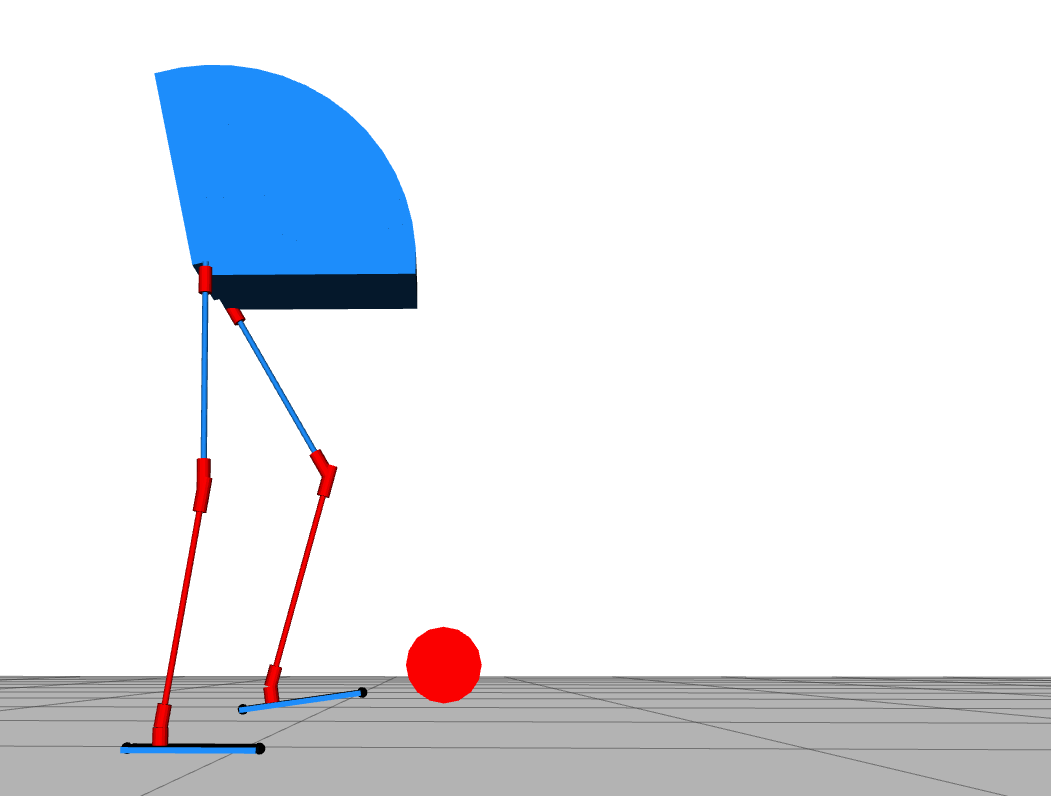}} &
		\subfloat[][$t=0.33$ s]{\includegraphics[trim =0mm 0mm 0mm 0mm,width=0.23\textwidth]{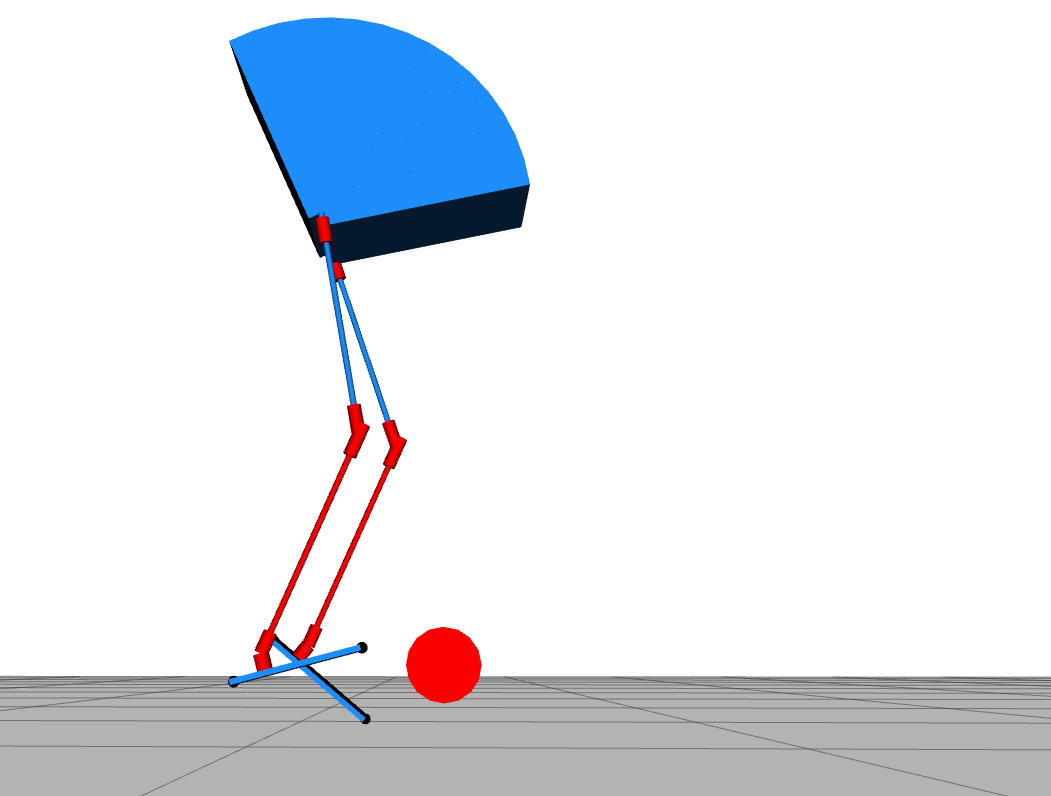}} &
		\subfloat[][$t=0.44$ s]{\includegraphics[trim =0mm 0mm 0mm 0mm,width=0.23\textwidth]{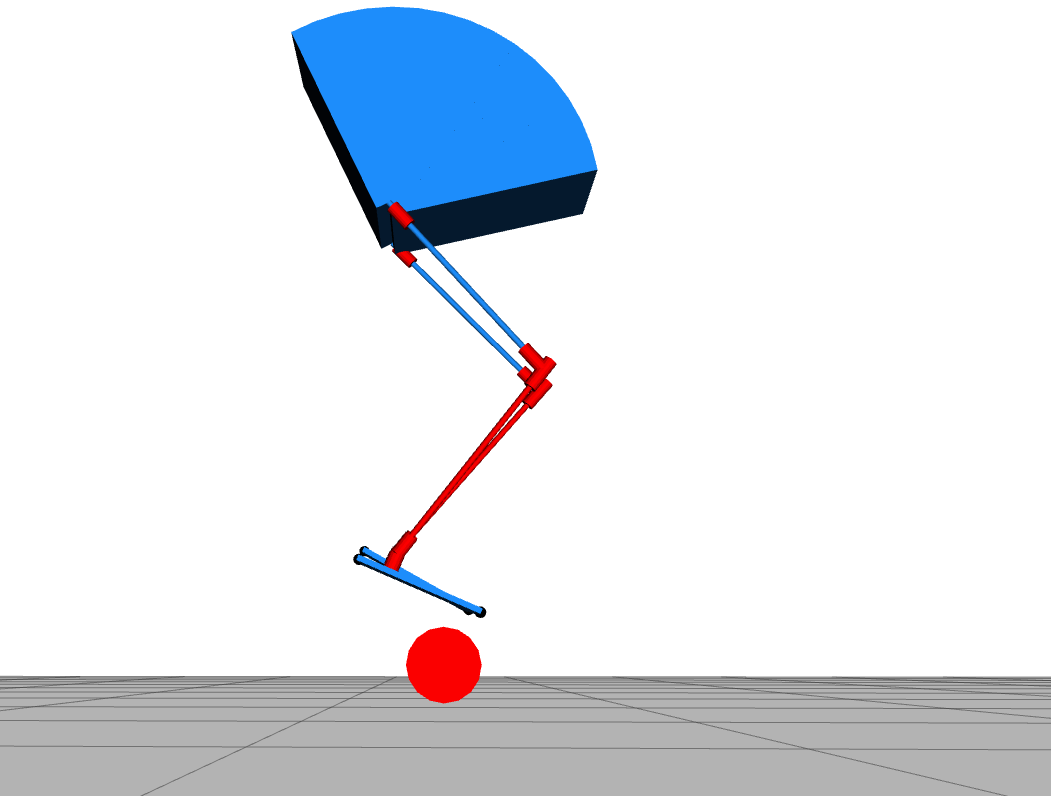}}\\[-0.7em]
		\subfloat[][$t=0.57$ s]{\includegraphics[trim =0mm 0mm 0mm 0mm,width=0.23\textwidth]{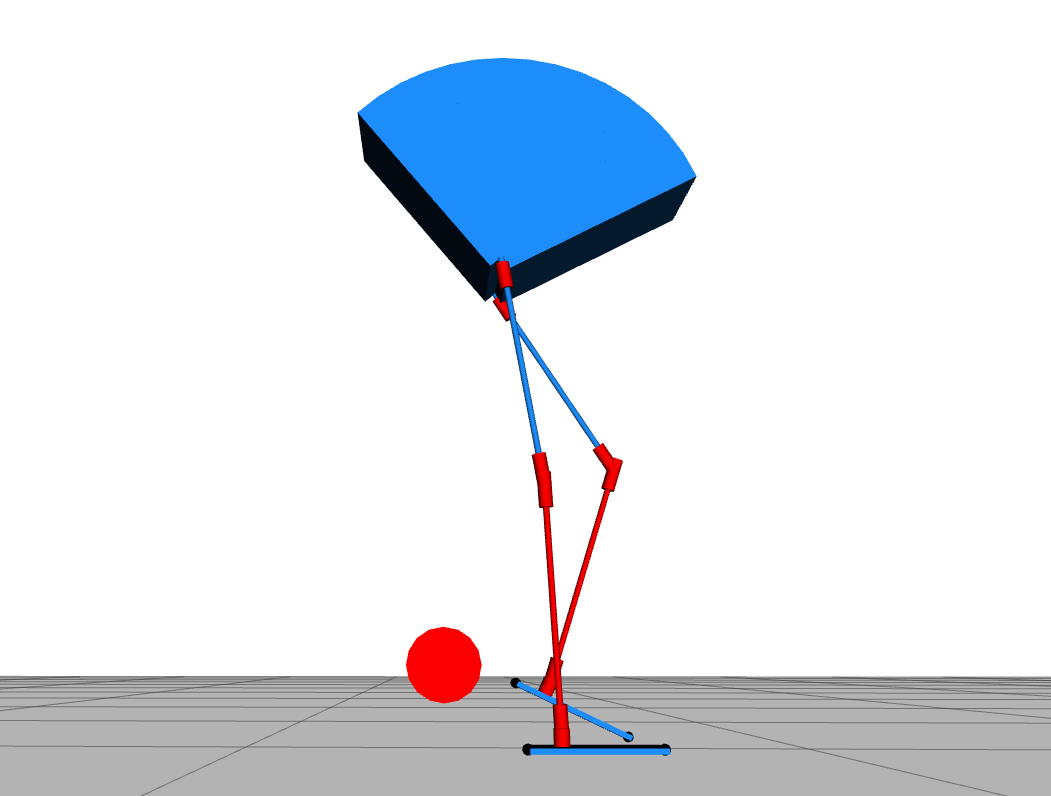}} &
		\subfloat[][$t=0.68$ s]{\includegraphics[trim =0mm 0mm 0mm 0mm,width=0.23\textwidth]{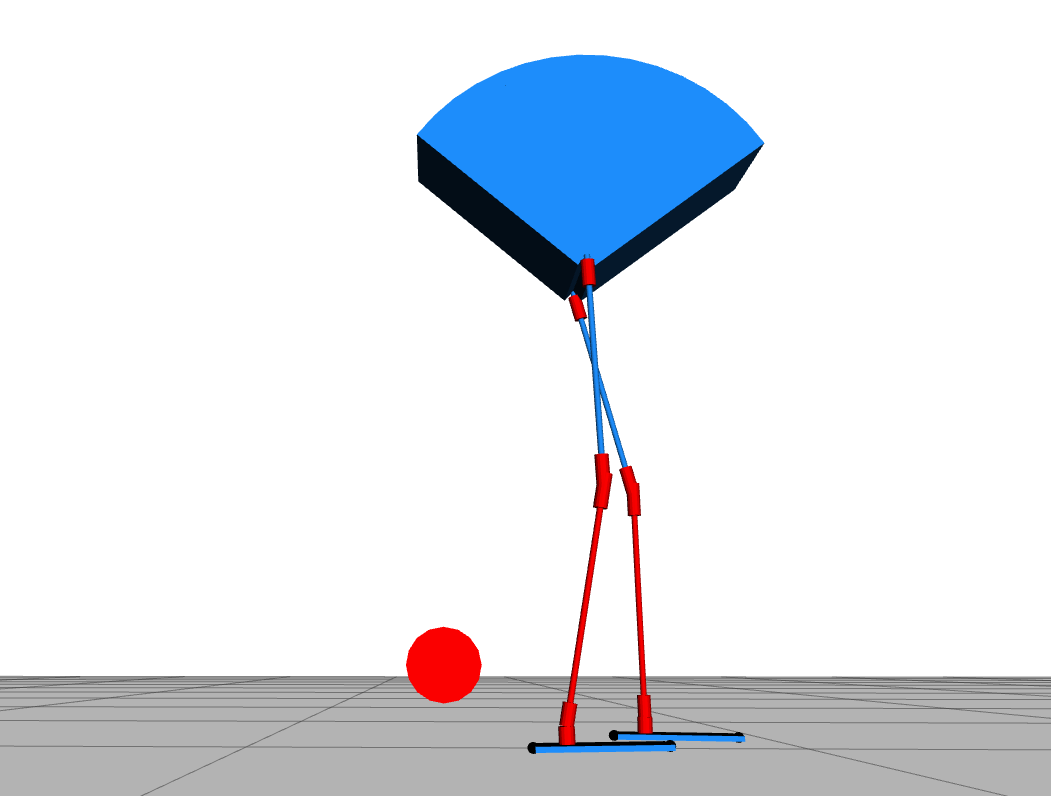}} &
		\subfloat[][$t=0.88$ s]{\includegraphics[trim =0mm 0mm 0mm 0mm,width=0.23\textwidth]{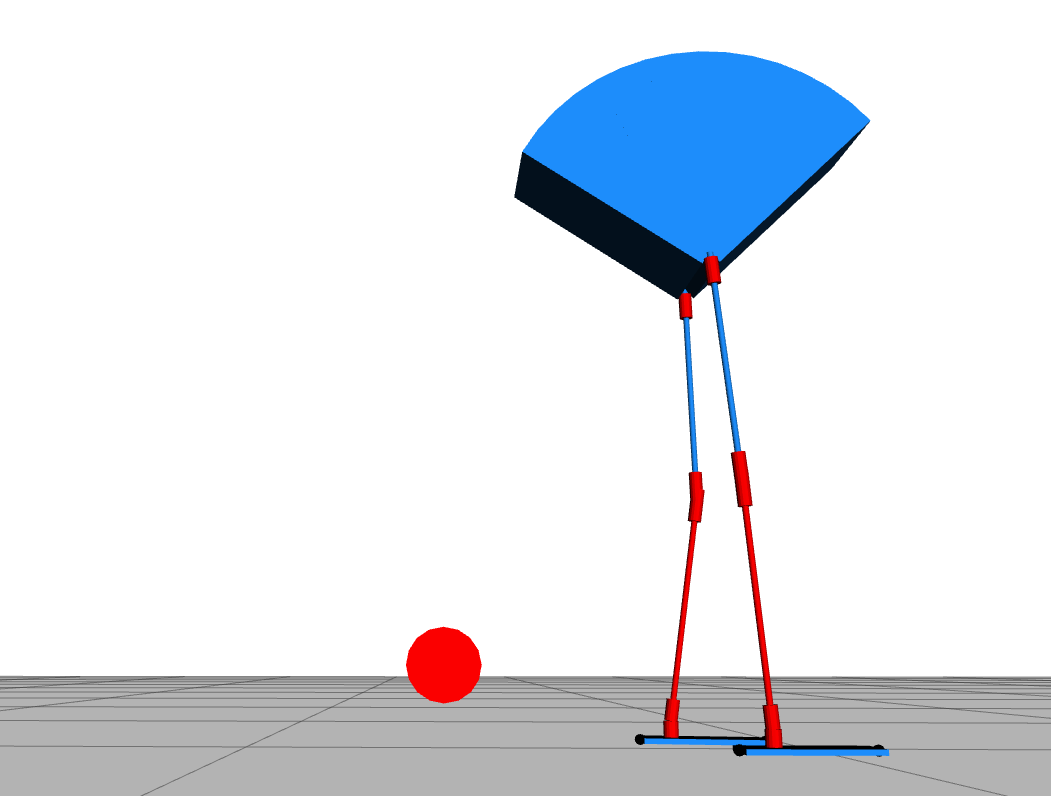}} &
		\subfloat[][$t=1.1$ s]{\includegraphics[trim =0mm 0mm 0mm 0mm,width=0.23\textwidth]{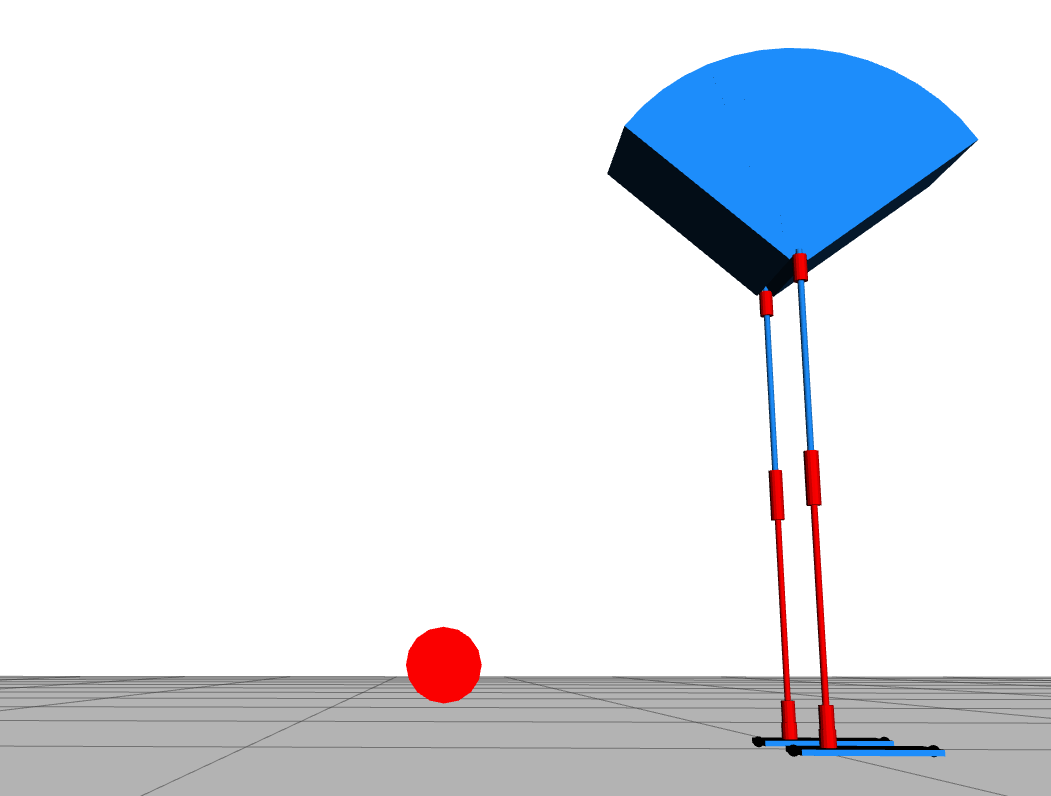}}
	\end{tabular}
	\caption{The Spring Flamingo robot jumps over a obstacle of $0.16$ meters high.}
	\vspace{-1em}
	\label{fig::walker} 
\end{figure}
The Spring Flamingo robot is a 9-DoF flat-footed biped robot with actuated hips and knees and passive springs at ankles \cite{pratt1998intuitive}. In this example, the Spring Flamingo robot is commanded to jump over an obstacle that is $0.16$ m high while walking horizontally from one position to another. The results are in \cref{fig::walker}, in which the initial walking velocity is $0.26$ m/s and the average walking velocity is around $0.9$ m/s.

\subsection{LittleDog}\label{subsection::dog}
The LittleDog robot is 18-DoF quadruped robot used in research of robot walking \cite{shkolnik2011bounding}. In this example, the LittleDog robot is required to walk over terrain with two gaps. The results are in \cref{fig::dog}, in which the average walking velocity is $0.25$ m/s. 

\subsection{Atlas}\label{subsection::atlas}

The Atlas robot is a 30-DoF humanoid robot used in the DARPA Robotics Challenge \cite{nelson2012petman}. In this example, the Atlas robot is required to pick a red ball with its left hand while keeping balanced only with its right foot. Moreover, the contact wrenches applied to the supporting foot should satisfy contact constraints of a flat foot \cite{hereid2018dynamic}. The results are in \cref{fig::atlas} and it takes around $1.3$ s for the Atlas robot to pick the ball.

\begin{figure}[t]
	\vspace{-2.5em}
	\centering
	\begin{tabular}{cccc}
		\subfloat[][$t=0$ s]{\includegraphics[trim =0mm 0mm 0mm 0mm,width=0.23\textwidth]{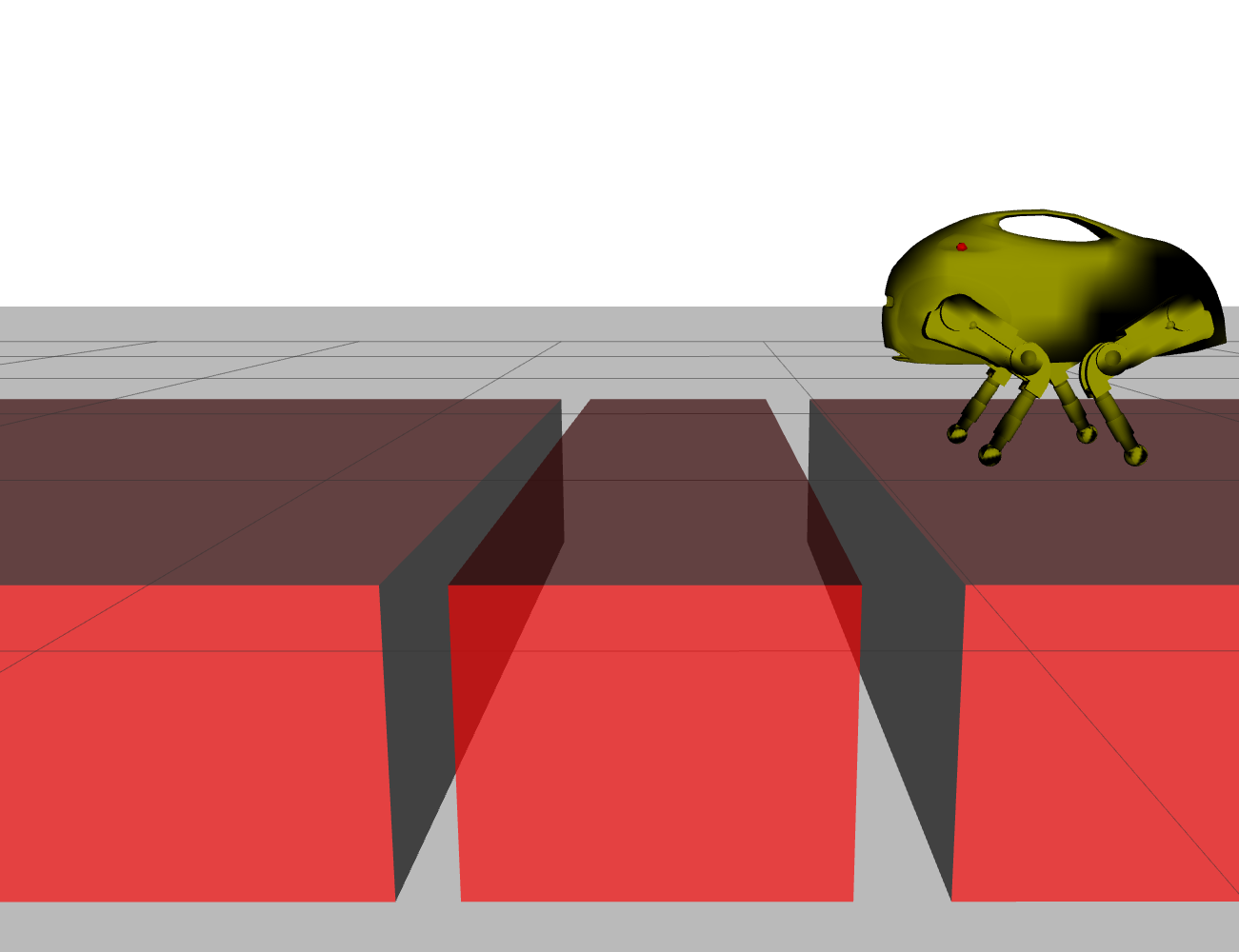}} &
		\subfloat[][$t=0.48$ s]{\includegraphics[trim =0mm 0mm 0mm 0mm,width=0.23\textwidth]{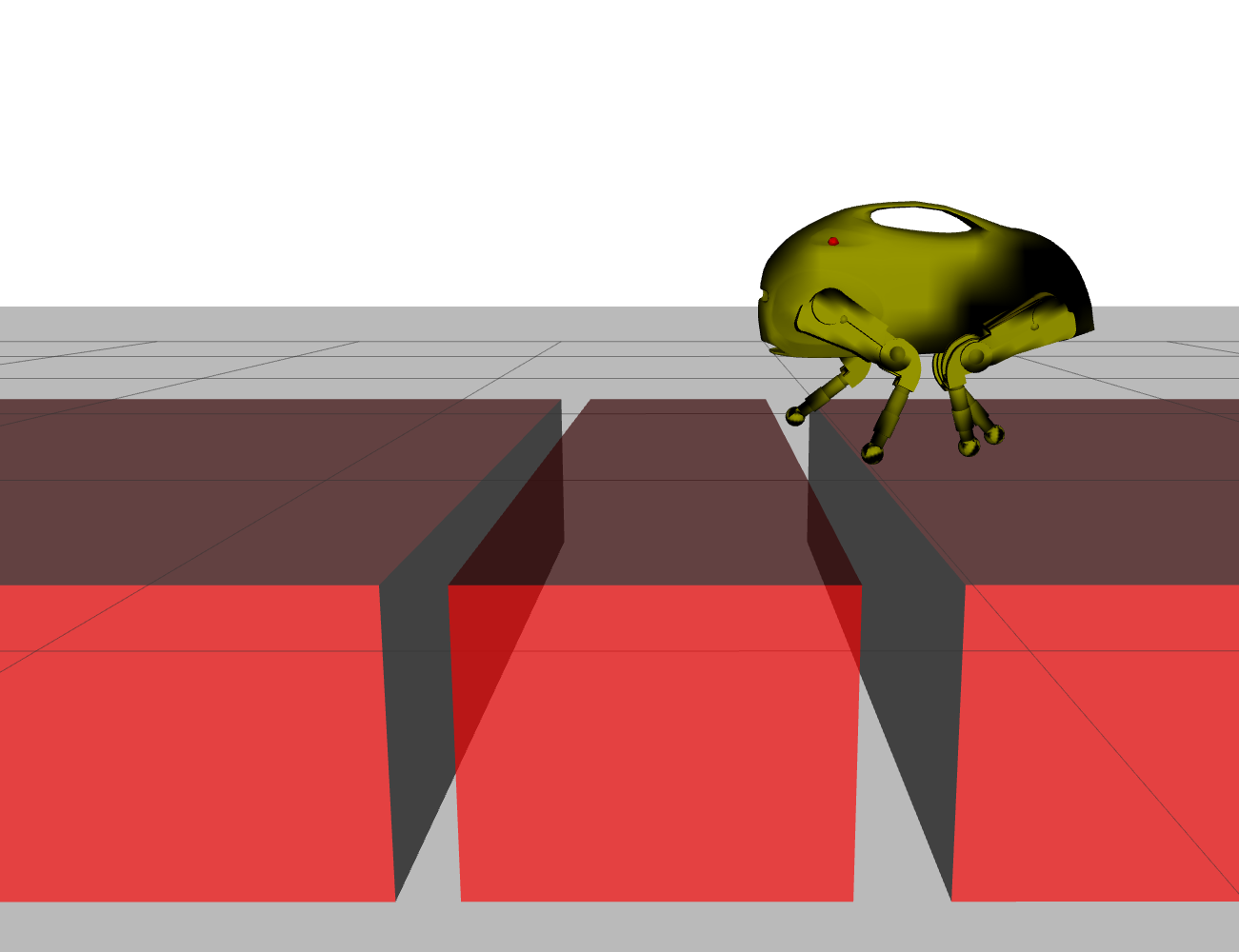}} &
		\subfloat[][$t=0.56$ s]{\includegraphics[trim =0mm 0mm 0mm 0mm,width=0.23\textwidth]{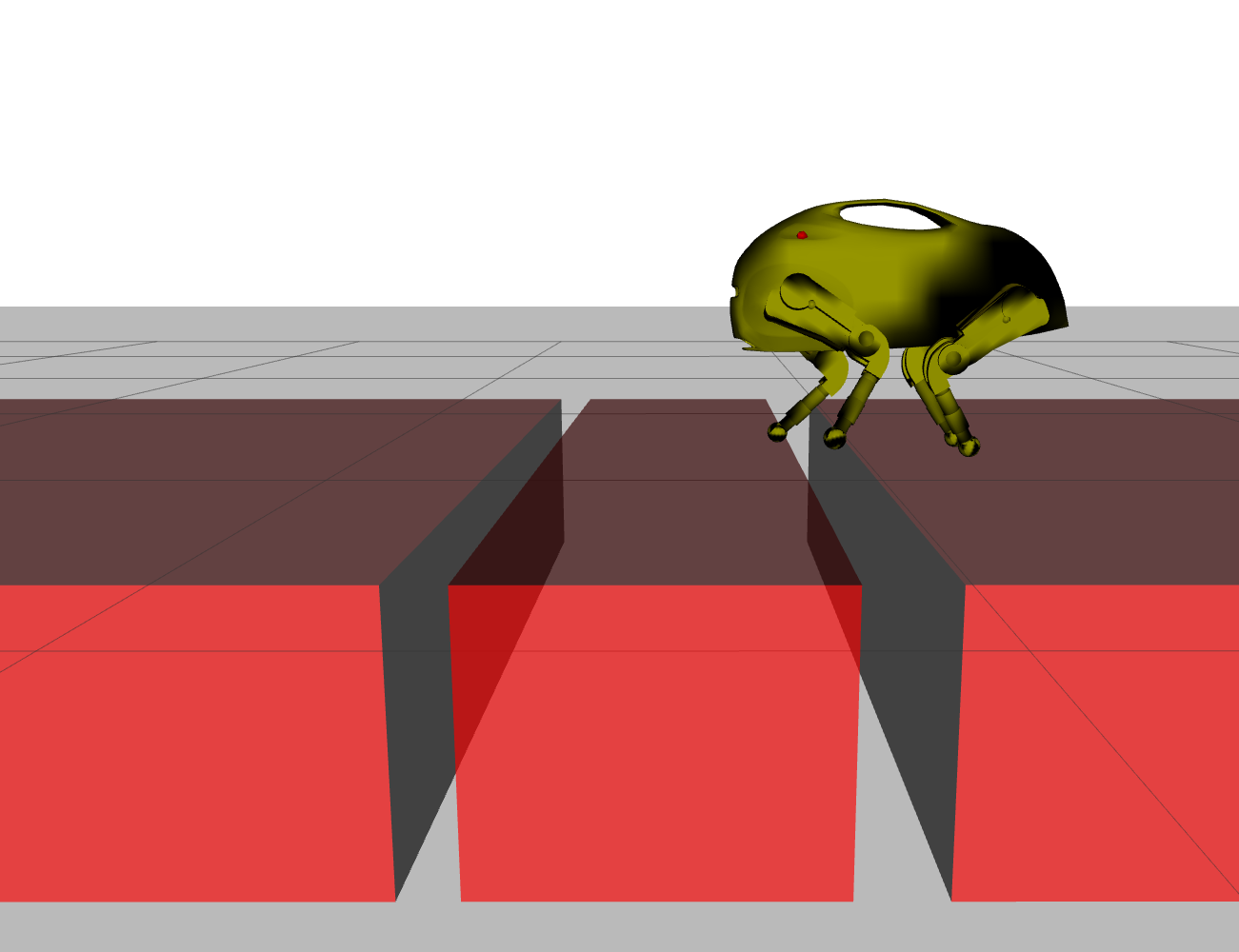}} &
		\subfloat[][$t=1.04$ s]{\includegraphics[trim =0mm 0mm 0mm 0mm,width=0.23\textwidth]{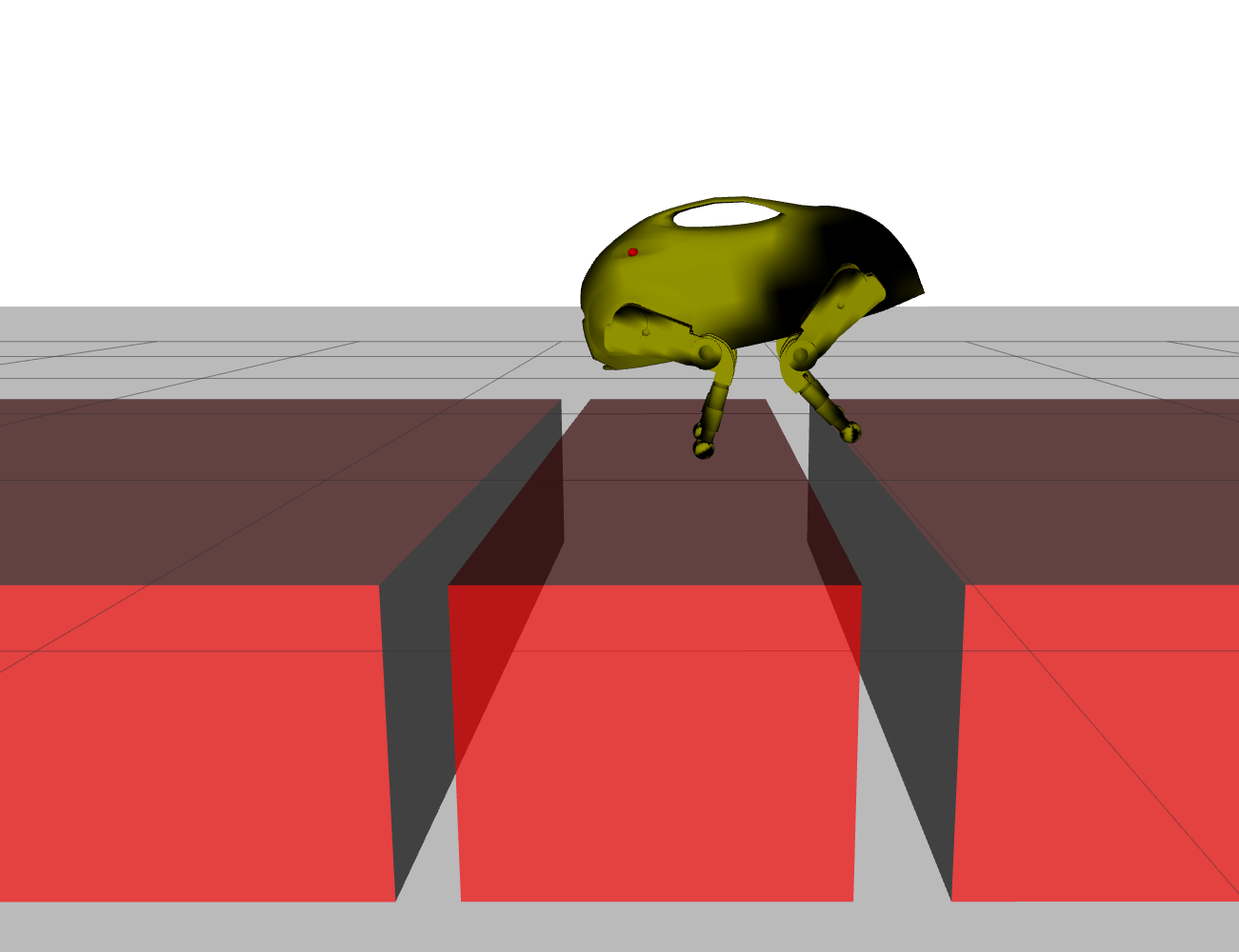}}\\[-1em]
		\subfloat[][$t=1.84$ s]{\includegraphics[trim =0mm 0mm 0mm 0mm,width=0.23\textwidth]{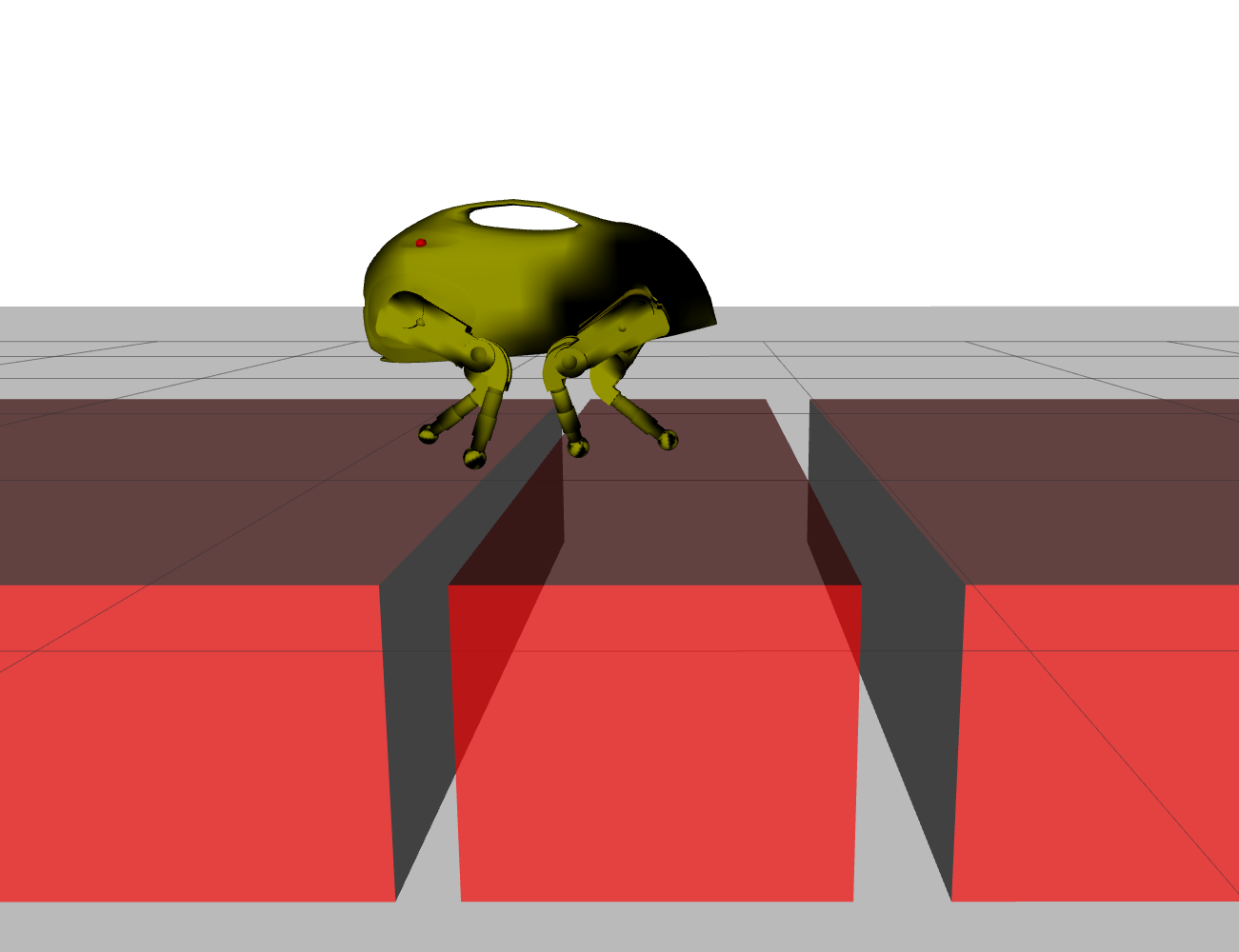}} &
		\subfloat[][$t=2.16$ s]{\includegraphics[trim =0mm 0mm 0mm 0mm,width=0.23\textwidth]{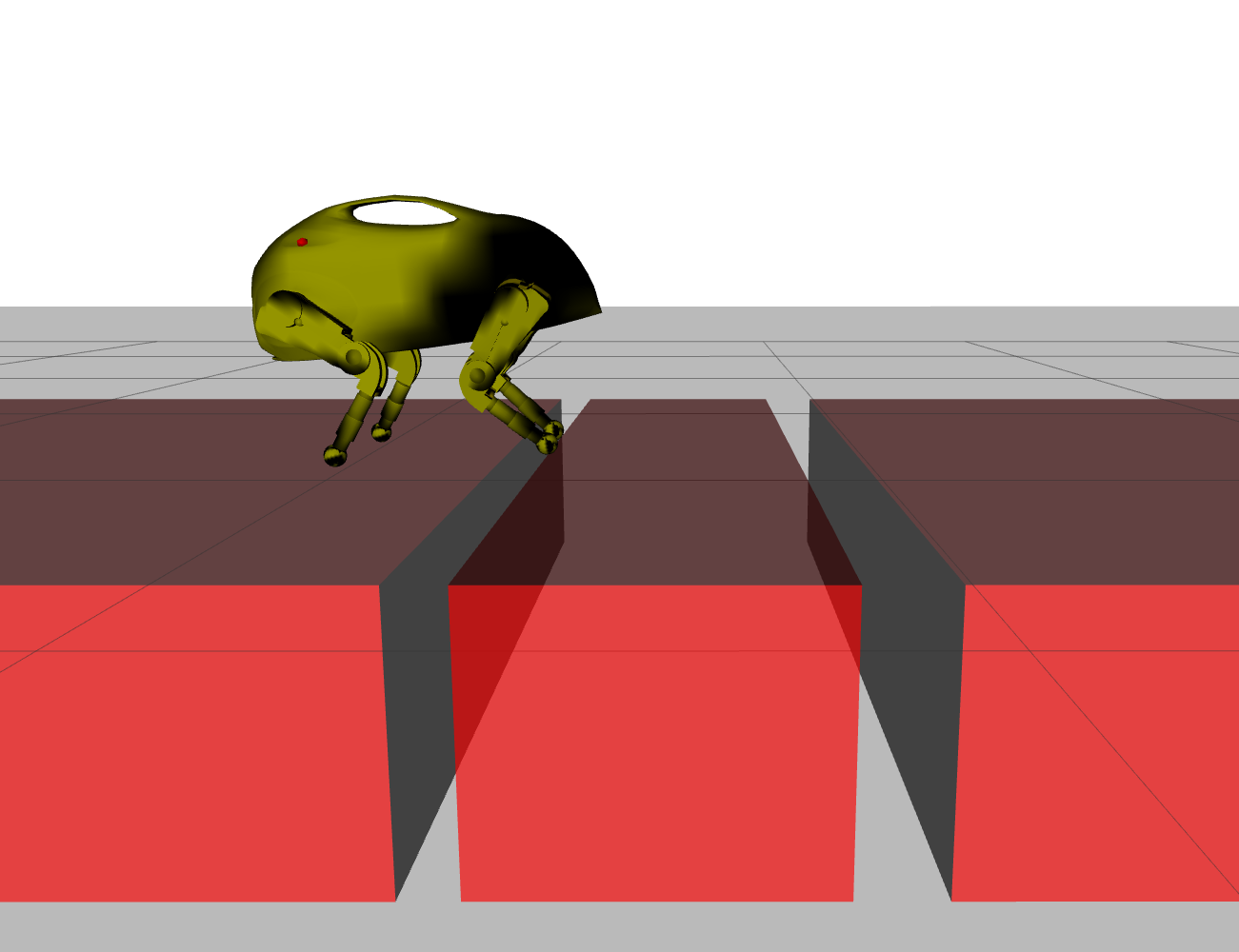}} &
		\subfloat[][$t=2.72$ s]{\includegraphics[trim =0mm 0mm 0mm 0mm,width=0.23\textwidth]{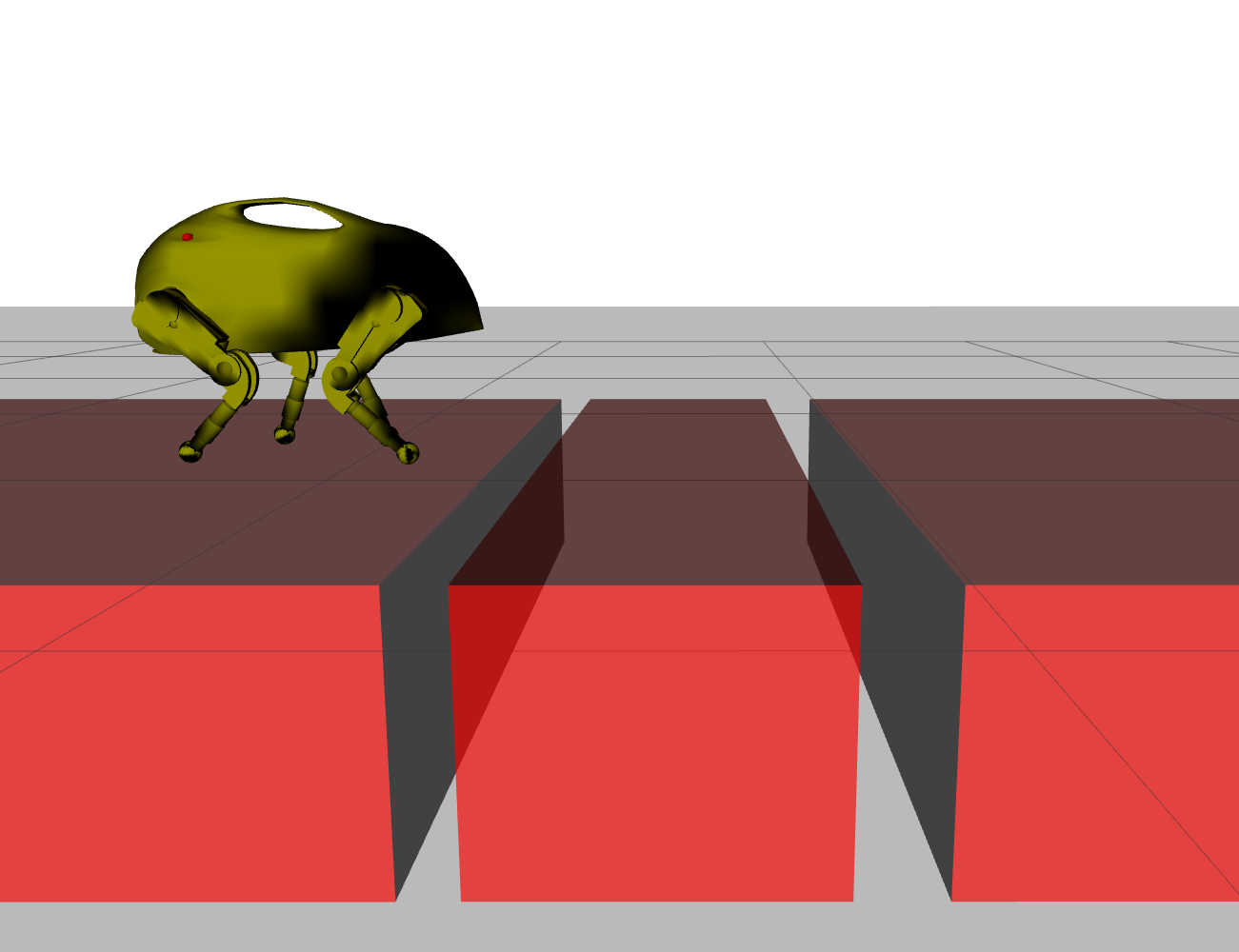}} &
		\subfloat[][$t=3.2$ s]{\includegraphics[trim =0mm 0mm 0mm 0mm,width=0.23\textwidth]{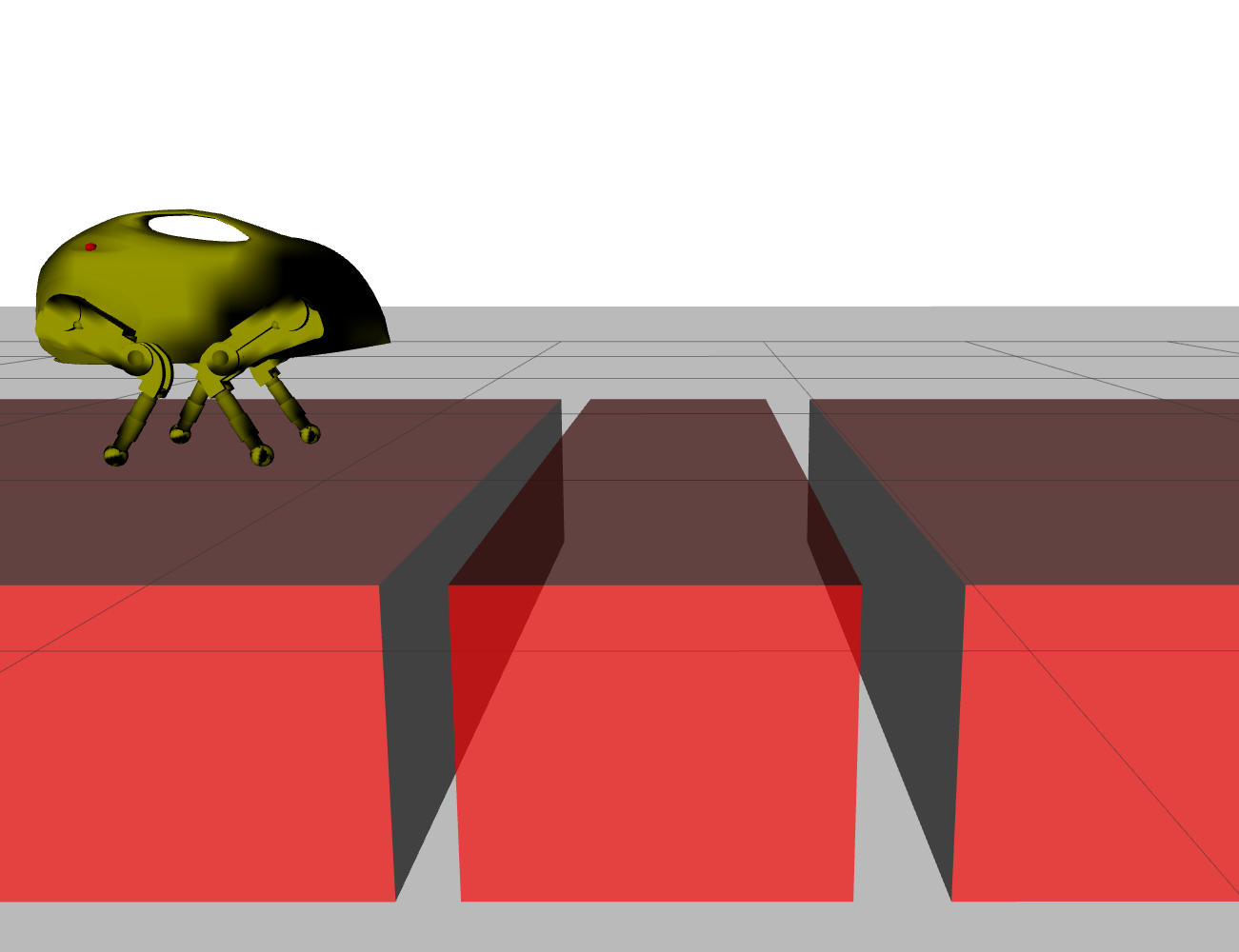}}
	\end{tabular}
	\caption{The LittleDog robot walks over terrain with gaps.}
	\label{fig::dog}
	\vspace{0.5em}
	\begin{tabular}{cccc}
		\hspace{1em}\subfloat[][$t=0$s]{\includegraphics[trim =0mm 0mm 0mm 0mm,width=0.23\textwidth]{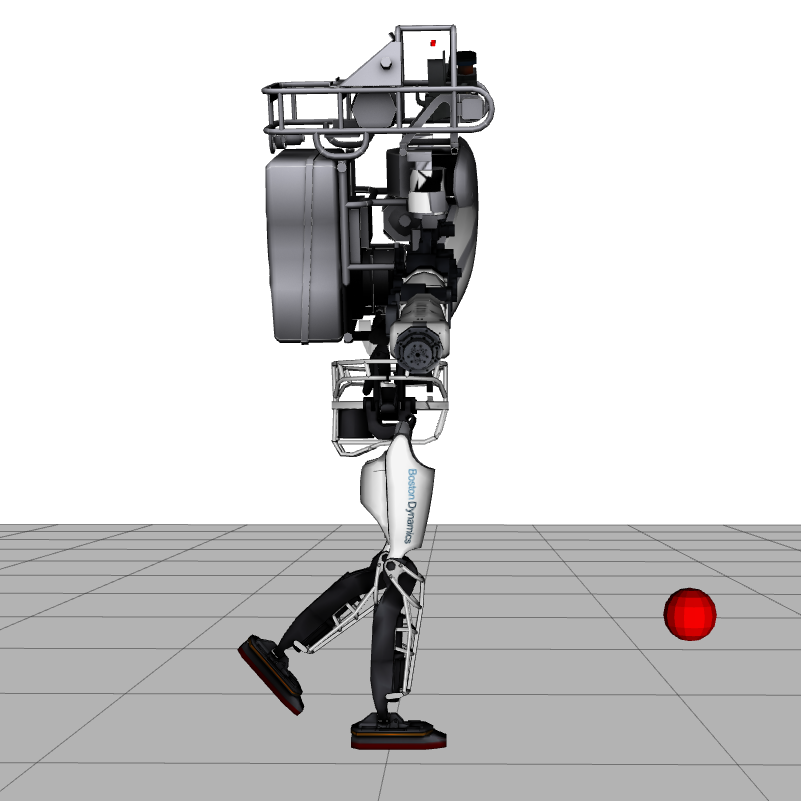}} &
		\subfloat[][$t=0.4$ s]{\includegraphics[trim =0mm 0mm 0mm 0mm,width=0.23\textwidth]{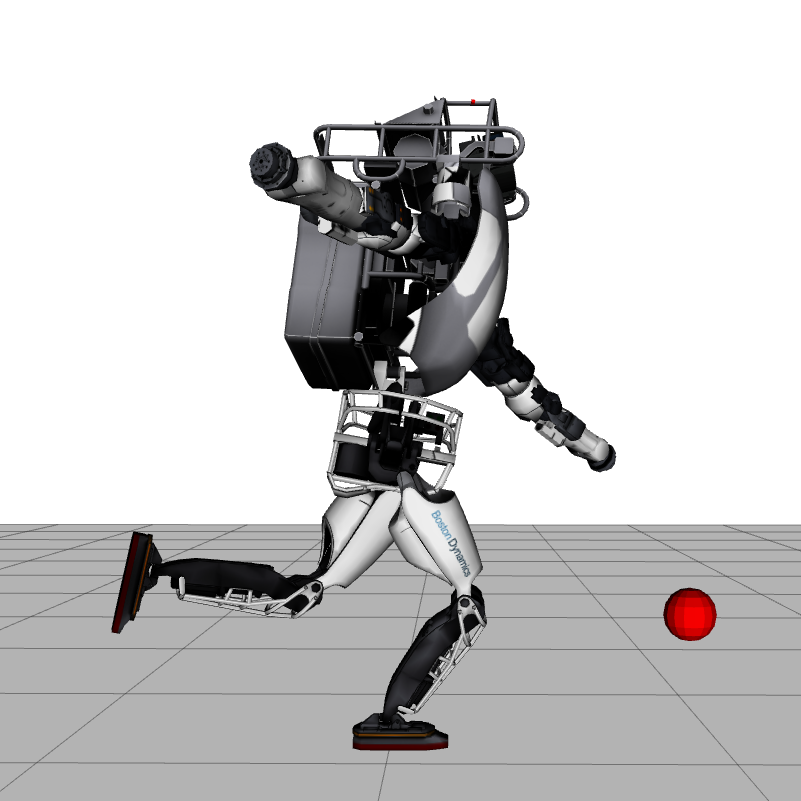}}&
		\subfloat[][$t=0.6$ s]{\includegraphics[trim =0mm 0mm 0mm 0mm,width=0.23\textwidth]{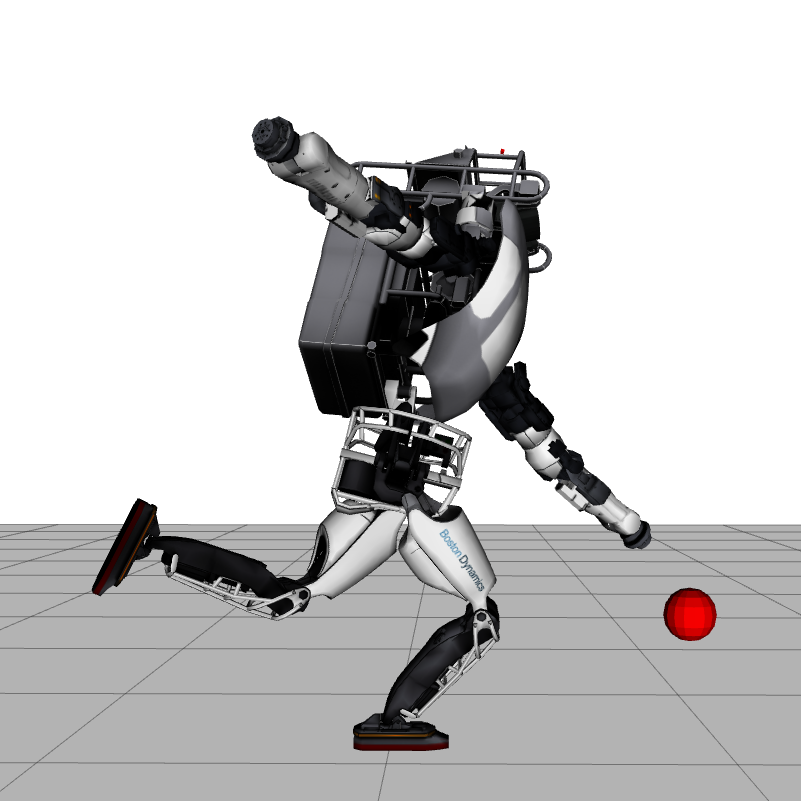}}& 
		\subfloat[][$t=1.3$ s]{\includegraphics[trim =0mm 0mm 0mm 0mm,width=0.23\textwidth]{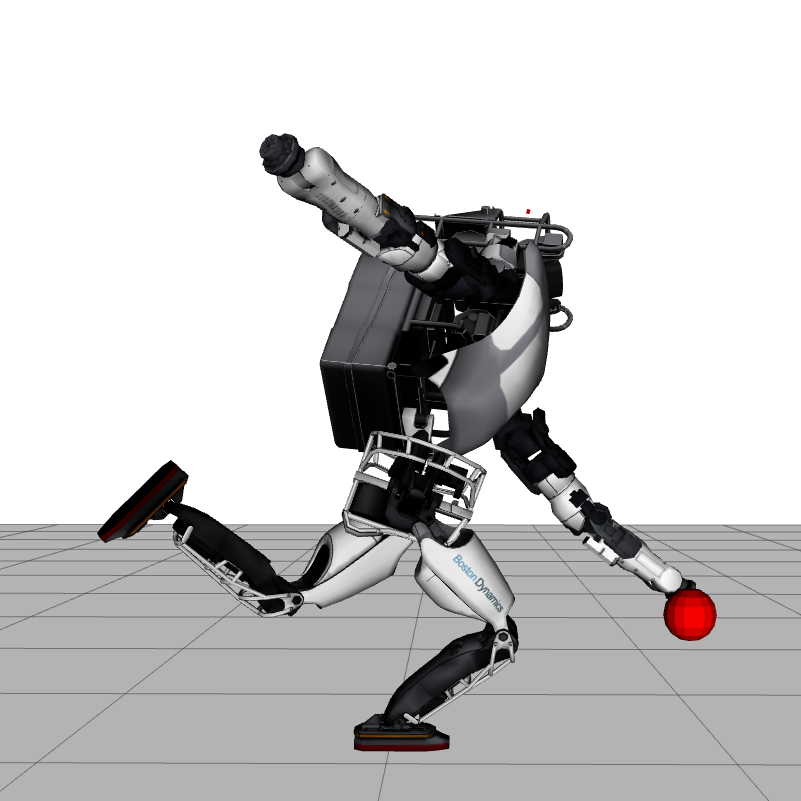}}		
	\end{tabular}
	\caption{The Atlas robot picks a red ball while keeping balanced with a single foot.}
	\label{fig::atlas} 
	\vspace{-1em}
\end{figure}

\section{Conclusion}\label{section::conclusion}
{In this paper, we present $O(n)$ algorithms for the linear-time higher-order variational integrators and $O(n^2)$ algorithms to linearize the DEL equations for use in trajectory optimization. The proposed algorithms are validated through comparison with existing methods and implementation on robotic systems for trajectory optimization. The results illustrate that the same integrator can be used for simulation and trajectory optimization in robotics, preserving mechanical properties while achieving good scalability and accuracy. Furthermore, thought not presented in this paper, these $O(n)$ algorithms can be regularized for parallel computation, which results in $O(\log(n))$ algorithms with enough processors.}

\bibliographystyle{unsrt}
\bibliography{mybib}

\begin{thebibliography}{10}

\bibitem{marsden2001discrete}
Jerrold~E Marsden and Matthew West.
\newblock Discrete mechanics and variational integrators.
\newblock {\em Acta Numerica}, 10:357--514, 2001.

\bibitem{johnson2009scalable}
Elliot~R Johnson and Todd~D Murphey.
\newblock Scalable variational integrators for constrained mechanical systems
  in generalized coordinates.
\newblock {\em IEEE Transactions on Robotics}, 25(6):1249--1261, 2009.

\bibitem{kobilarov2009lie}
Marin Kobilarov, Keenan Crane, and Mathieu Desbrun.
\newblock {L}ie group integrators for animation and control of vehicles.
\newblock {\em ACM Transactions on Graphics (TOG)}, 28(2):16, 2009.

\bibitem{johnson2015structured}
Elliot Johnson, Jarvis Schultz, and Todd Murphey.
\newblock Structured linearization of discrete mechanical systems for analysis
  and optimal control.
\newblock {\em IEEE Transactions on Automation Science and Engineering}, 2015.

\bibitem{fan2015structured}
Taosha Fan and Todd Murphey.
\newblock Structured linearization of discrete mechanical systems on lie
  groups: A synthesis of analysis and control.
\newblock In {\em IEEE Conference on Decision and Control (CDC)}, pages
  1092--1099, 2015.

\bibitem{junge2005discrete}
Oliver Junge, Jerrold~E Marsden, and Sina Ober-Bl{\"o}baum.
\newblock Discrete mechanics and optimal control.
\newblock {\em IFAC Proceedings Volumes}, 38(1):538--543, 2005.

\bibitem{lacoursiere2007ghosts}
Claude Lacoursiere.
\newblock {\em Ghosts and machines: regularized variational methods for
  interactive simulations of multibodies with dry frictional contacts}.
\newblock PhD thesis, Datavetenskap, 2007.

\bibitem{manchestercontact}
Zachary Manchester and Scott Kuindersma.
\newblock Variational contact-implicit trajectory optimization.
\newblock In {\em International Symposium on Robotics Research (ISRR)}, 2017.

\bibitem{lee2016linear}
Jeongseok Lee, C~Karen Liu, Frank~C Park, and Siddhartha~S Srinivasa.
\newblock A linear-time variational integrator for multibody systems.
\newblock In {\em International Workshop on the Algorithmic Foundations of
  Robotics (WAFR)}, 2016.

\bibitem{posa2016optimization}
Michael Posa, Scott Kuindersma, and Russ Tedrake.
\newblock Optimization and stabilization of trajectories for constrained
  dynamical systems.
\newblock In {\em IEEE International Conference on Robotics and Automation
  (ICRA)}, 2016.

\bibitem{hereid2018dynamic}
Ayonga Hereid, Christian~M Hubicki, Eric~A Cousineau, and Aaron~D Ames.
\newblock Dynamic humanoid locomotion: A scalable formulation for hzd gait
  optimization.
\newblock {\em IEEE Transactions on Robotics}, 2018.

\bibitem{ober2015construction}
Sina Ober-Bl{\"o}baum and Nils Saake.
\newblock Construction and analysis of higher order {Galerkin} variational
  integrators.
\newblock {\em Advances in Computational Mathematics}, 41(6):955--986, 2015.

\bibitem{ober2017galerkin}
Sina Ober-Bl{\"o}baum.
\newblock {Galerkin} variational integrators and modified symplectic
  {Runge--Kutta} methods.
\newblock {\em IMA Journal of Numerical Analysis}, 37(1):375--406, 2017.

\bibitem{featherstone2014rigid}
Roy Featherstone.
\newblock {\em Rigid body dynamics algorithms}.
\newblock Springer, 2014.

\bibitem{yamane2002efficient}
Katsu Yamane and Yoshihiko Nakamura.
\newblock Efficient parallel dynamics computation of human figures.
\newblock In {\em IEEE International Conference on Robotics and Automation
  (ICRA)}. IEEE, 2002.

\bibitem{fijany1995parallel}
Amir Fijany, Inna Sharf, and Gabriele~MT D'Eleuterio.
\newblock Parallel {O} (log n) algorithms for computation of manipulator
  forward dynamics.
\newblock {\em IEEE Transactions on Robotics and Automation}, 11(3):389--400,
  1995.

\bibitem{fan2018wafr_app}
Taosha Fan, Jarvis Schultz, and Todd~D Murphey.
\newblock Efficient computation of variational integrators in robotic
  simulation and trajectory optimization: Appendix.
\newblock
  \href{https://northwestern.box.com/s/6mgjn3aka1owgem1wt086ashse734385}{https://northwestern.box.com/s/6mgjn3aka1owgem1wt086ashse734385}.

\bibitem{carpentier2018analytical}
Justin Carpentier.
\newblock Analytical derivatives of rigid body dynamics algorithms.
\newblock In {\em Robotics: Science and Systems (RSS)}, 2018.

\bibitem{pratt1998intuitive}
Jerry Pratt and Gill Pratt.
\newblock Intuitive control of a planar bipedal walking robot.
\newblock In {\em IEEE International Conference on Robotics and Automation
  (ICRA)}, 1998.

\bibitem{shkolnik2011bounding}
Alexander Shkolnik, Michael Levashov, Ian~R Manchester, and Russ Tedrake.
\newblock Bounding on rough terrain with the {LittleDog} robot.
\newblock {\em The International Journal of Robotics Research}, 30(2):192--215,
  2011.

\bibitem{nelson2012petman}
Gabe Nelson, Aaron Saunders, Neil Neville, Ben Swilling, Joe Bondaryk, Devin
  Billings, Chris Lee, Robert Playter, and Marc Raibert.
\newblock Petman: A humanoid robot for testing chemical protective clothing.
\newblock {\em Journal of the Robotics Society of Japan}, 30(4):372--377, 2012.

\bibitem{gill2005snopt}
Philip~E Gill, Walter Murray, and Michael~A Saunders.
\newblock {SNOPT}: An {SQP} algorithm for large-scale constrained optimization.
\newblock {\em SIAM review}, 47(1):99--131, 2005.

\end{thebibliography}


\begin{thebibliography}{9}
\bibitem{fan2018wafr2} Taosha Fan, Jarvis Schultz, and Todd D Murphey. Efficient computation of variational inte-
grators in robotic simulation and trajectory optimization. In International Workshop on the
Algorithmic Foundations of Robotics (WAFR), 2018.
\end{thebibliography}

\clearpage

\setcounter{page}{1}

\title{Efficient Computation of Higher-Order Variational Integrators in Robotic Simulation and Trajectory Optimization: Appendix}
\author{Taosha Fan \quad  Jarvis Shultz \quad Todd Murphey}

\institute{Department of Mechanical Engineering, Northwestern University,\\ 2145 Sheridan Road, Evanston, IL 60208, USA\\
\email{taosha.fan@u.northwestern.edu, jschultz@northwestern.edu, t-murphey@northwestern.edu}}
\maketitle
\begin{abstract}
	This appendix provides the complete $O(n)$ algorithms to compute the Newton direction for higher-order variational integrators and the proofs of the propositions in the paper ``Efficient Computation of Higher-Order Variational Integrators in Robotic Simulation and Trajectory Optimization'' [1], published in the 13th International Workshop on the Algorithmic Foundations of Robotics (WAFR'18). It is assumed that the reader has read the original paper and knows the problem statements and the notation used. The numbering of the equations, algorithms, propositions, etc., is consistent with the numbering used in the original paper. 
\end{abstract}

\renewcommand{\thesection}{\Alph{section}}
\numberwithin{equation}{section}
\numberwithin{algorithm}{section} 
\numberwithin{myalg}{section} 

\section{Introduction}
 In the paper ``Efficient Computation of Higher-Order Variational Integrators in Rob-otic Simulation and Trajectory Optimization'' [1], we present $O(n)$ algorithms to evaluate the discrete Euler-Lagrange (DEL) equations and compute the Newton direction for solving the DEL equations, and $O(n^2)$ algorithms to linearize the DEL equations. As an appendix to  [1], this document provides the complete $O(n)$ algorithms to compute the Newton direction for higher-order variational integrators and the proofs of the propositions in  [1], which are not covered in the original paper due to space limitations.
 
 In this appendix, we begin with the complete $O(n)$ algorithms to compute the Newton direction in \cref{sectiona::newton}. In \cref{sectiona::preliminary}, we give an overview of preliminaries used in the algorithms and proofs. \cref{prop::dabi,prop::dkdq,prop::dvkdq,prop::eval} in [1, \cref{section::lin_vi,section::quad_lin}] to compute the higher-order variational integrators are proved in \cref{section::proof}.\par
 
 For implementation only, the reader only needs to read \cref{algorithm::dabi,algorithm::dabi_b} in \cref{sectiona::newton} as well as \cref{algorithm::drha1,algorithm::dkdv,algorithm::dvdq} in [1, \cref{section::lin_vi,section::quad_lin}]. \cref{sectiona::preliminary,section::proof} are not required to read as they present the proofs of the propositions in  [1] that do not necessarily aid in implementation.
 
 Even though most of the important content in  [1] is reiterated, we still advise the reader to read the original paper to know the problem statements and the notation used. Moreover, as mentioned in the abstract, the numbering of the equations, algorithms, propositions, etc., is consistent with the numbering used in  [1]. Therefore, the original paper will not be explicitly cited in the rest of this appendix when we make references to anything in it. 

\section{The $O(n)$ Algorithms to Compute the Newton Direction}\label{sectiona::newton}

In this section, we present \cref{algorithm::dabi,algorithm::dabi_b} to compute the Newton direction for higher-order variational integrators. The algorithms are self-contained and we refer the reader to \cref{subsectiona::diff} for differentiation on Lie groups that is used to compute $\tD_1\qka{\lF_i}$ in \cref{eq::gammax2} of \cref{algorithm::dabi_b}. The correctness and the $O(n)$ complexity of \cref{algorithm::dabi,algorithm::dabi_b} are proved in \cref{subsection::pp2}, however, this is not required to read for implementation. We remind the reader that $\delta\qkg{q_i}$ is the Newton direction for $\qkg{q_i}$, and $\qkvp{r_i}$ is the residue of the DEL equations \cref{eq::eval1,eq::eval2}. Moreover, from \cref{prop::dabi}, \cref{algorithm::dabi,algorithm::dabi_b} assume that the inverse of the Jacobian $\JJ^{-1}(\qk{\lq})$ exists, and $\qka{\lF_i}$ and $\qka{Q_i}$ can be respectively formulated as $\qka{\lF_i}=\qka{\lF_i}(\qka{g_i},\qka{\lv_i},\qka{u}) $ and $\qka{Q_i}=\qka{Q_i}(\qka{q_i},\qka{\dot{q}_i},\qka{u})$.

There are a number of quantities, such as $\qkap{D_i}$, $\qkag{\Phi_i}$, $\qka{\zeta_i}$, $\qkg{H_i}$, etc., which are recursively introduced in \cref{algorithm::dabi_b} to compute the Newton direction. Since there is no influence on the implementation of the algorithms as long as these quantities are correctly computed, we leave the explanation of their meaning to \cref{subsection::pp2}. Similarly, the detailed explanation of $\qkn{\leta_i}$ and $\ld\qkp{\lv_i}$ in \cref{algorithm::dabi} is left to \cref{subsectiona::tree,subsection::spatial}, respectively. For purposes of implementation, the reader only needs to know that these quantities are recursively computed through \cref{algorithm::dabi_b,algorithm::dabi}.
\vspace{0.5em}
\begin{myalg}[Recursive Computation of the Newton Direction]
	\label{algorithm::dabi}
	\begin{algorithmic}[1]
		\State initialize $\qka{g_0}=\I$ and $\qka{\lv_0}=0$
		\For{$i=1\rightarrow n$}
		\vspace{0.2em}
		\For{$\alpha=0\rightarrow s$}
		\vspace{0.2em}
		\State $g^{k,\alpha}_{i}=g_{\pa(i)}^{k,\alpha}g^{k,\alpha}_{\pa(i),i}(q_i^{k,\alpha})$
		\vspace{0.2em}
		\State $\lS_i^{k,\alpha} = \Ad_{g_{i}^{k,\alpha}} S_i$,\quad $\lM_i^{k,\alpha} = \Ad_{g_{i}^{k,\alpha}}^{-T} M_i { \Ad^{-1}_{g_{i}^{k,\alpha}}}$
		\State $\dot{q}_i^{k,\alpha}=\frac{1}{\Delta t}\sum\limits_{\beta=0}^s\qab{b}{\qikb}$,\quad $\lv^{k,\alpha}_{i}= \lv^{k,\alpha}_{\pa(i)}+ {{\lS_i^{k,\alpha}}}\cdot \dot{q}_i^{k,\alpha}$
		\State $\qka{\dot{\lS}\overline{\vphantom{S}}_i}=\ad_{\qka{\lv_i}}\qka{\lS_i}$
		\EndFor
		\EndFor
		\For{$i=n\rightarrow 1$}
		\State use \cref{algorithm::dabi_b} to evaluate
		\vspace{0.3em}
		\begin{enumerate}[label=\alph*),leftmargin=4em]
			\item $\qkap{D_i}$, $\qkan{G_i}$, $\qka{l_i}$ and $\qka{\lmu_i}$\vspace{0.2em}
			\item $\qkap{\Pi_i}$, $\qkan{\Psi_i}$, $\qka{\zeta_i}$ and $\qka{\lG_i}$
			\item $\qka{H_i}$ and $\qka{\Phi_i}$
			\item $\qkap{X_i}$, $\qkan{Y_i}$ and $\qka{y_i}$
		\end{enumerate}
		\EndFor
		\vspace{0.2em}
		\State initialize $\qkn{\le_0}=0$ and $\ld\qkp{\lv_0}=0$
		\vspace{0.2em}
		\For{$i=1\rightarrow n$}
		\algrule
		\newpage
		\algrule
		\For{$\gamma=1\rightarrow s$}
		\State 
		$\delta\qkg{q_i} = \sum\limits_{\rho=0}^{s}\qkgp{X_i}\cdot\ld\qkp{\lv_{\pa(i)}} +
		\sum\limits_{\nu= 1}^s \qkgn{Y_i}\cdot\qkn{\le_{\pa(i)}}+\qkg{y_i}$
		\vspace{-0.25em}
		\EndFor
		\vspace{0.10em}
		\For{$\nu=1\rightarrow s$}
		\State $\qkn{\le_i}= \qkn{\le_{\pa(i)}} + \qkn{\lS_i}\cdot\delta\qkn{q_i}$
		\vspace{0.2em}
		\EndFor
		\For{$\rho=0\rightarrow s$}
		\vspace{0.1em}
		\State $\delta\qkp{\dot{q}_i}=\frac{1}{\Delta t}\sum\limits_{\gamma=1}^{s}\qpg{b}\cdot\delta\qkg{q_i}$
		\vspace{0.1em}
		\State $\ld\qkp{\lv_i}= \ld\qkp{\lv_{\pa(i)}} + \qkp{\dot{\lS}\overline{\vphantom{S}}_i}\cdot\delta \qkp{q_i}+\qkp{\lS_i}\cdot\delta\qkp{\dot{q}_i} $
		\EndFor
		\EndFor
		\vspace{-0.5em}
	\end{algorithmic}
\end{myalg}
\vspace{0.25em}
\begin{myalg}[Recursive Computation of the Newton Direction -- Backward Pass]
	\label{algorithm::dabi_b}
	\allowdisplaybreaks
	\begin{algorithmic}[1]
		\State $\forall \alpha=0,\,1,\,\cdots,\,s$, $\forall \rho=0,\,1,\,\cdots,\,s$ and $\forall \nu=0,\,1,\,\cdots,\, s-1$,
		\begin{subequations}\label{eq::mux}
			\begin{align}
			\nonumber&\qkap{D_i}= \qap{\sigma} \qka{\lM_i} + \sum_{j\in\chd(i)}\Big(\qkap{D_j}+\sum_{\gamma=1}^{s}\qkag{H_j}\qkgp{X_j}-\\
			&\hspace{19em}\overline{\sigma}^{\alpha 0}\ad_{\qka{\lmu_j}}^D \qka{\lS_j}\qkap{X_j}\Big),\\
			&\qkan{G_i}=\sum_{j\in\chd(i)} \Big(\qkan{G_j}+\sum_{\gamma=1}^{s}\qkag{H_j}\qkgn{Y_j} - \overline{\sigma}^{\alpha 0}\ad_{\qka{\lmu_j}}^D \qka{\lS_j}\qkan{Y_j}\Big),\\
			&\qka{l_i}=\sum_{j\in\chd(i)}\Big(\qka{l_j}+\sum_{\gamma=1}^s\qkag{H_j}\qkg{y_j}- \overline{\sigma}^{\alpha 0}\ad_{\qka{\lmu_j}}^D \qka{\lS_j}\qka{y_j}\Big),\\[0.3em]
			&\nonumber \qka{\lmu_i}=\qka{\lM_i}\qka{\lv_i}+\sum_{j\in \chd(i)}\qka{\mu_j}
			\end{align}
		\end{subequations}
		in which
		\begin{equation}\label{eq::sigma}
		\qap{\sigma}=\begin{cases}
		1 & \alpha=\rho,\\
		0 & \alpha\neq\rho
		\end{cases}\quad\quad\text{and}\quad\quad
		\overline{\sigma}^{\alpha 0}=\begin{cases}
		1 & \alpha\neq 0,\\
		0 & \alpha = 0
		\end{cases}
		\end{equation}
		\vspace{0.2em}

		\State $\forall \alpha=0,\,1,\,\cdots,\,s-1$, $\forall \rho=0,\,1,\,\cdots,\,s$ and $\forall \nu=0,\,1,\,\cdots,\, s-1$,
		\vspace{-0.2em}
		\begin{subequations}\label{eq::gammax}
			\begin{align}
			&\nonumber\qkap{\Pi_i} = \qap{\sigma} \tD_2 \qka{\lF_i} + \sum_{j\in\chd(i)}\Big(\qkap{\Pi_j}+\sum_{\gamma=1}^s\qkag{\Phi_j}\qkgp{X_j}-\\
			&\hspace{19em}\overline{\sigma}^{\alpha 0}\ad_{\qka{\lG_j}}^D \qka{\lS_j}\qkap{X_j}\Big),
			\end{align}
		\end{subequations}
	\algrule
	\newpage
	\algrule
	\vspace{-0.5em}
		\begin{subequations}[resume]
			\begin{align}
			&\nonumber\qkan{\Psi_i} = \qan{\sigma} \Big(\tD_1\qka{\lF_i}+\ad_{\qka{\lF_i}}^D-\tD_2\qka{\lF_i}\ad_{\qka{\lv_i}}\Big) +\\
			&\label{eq::gammax2}\hspace{4em}\sum_{j\in\chd(i)}\Big(\qkan{{\Psi}_j}+\sum_{\gamma=1}^s\qkag{\Phi_j}\qkgn{Y_j}-\overline{\sigma}^{\alpha 0}\ad_{\qka{\lG_j}}^D\qka{\lS_j}\qkan{Y_j}\Big),\\[0.5em]
			&\qka{\zeta_i}=\sum_{j\in\chd(i)}\Big(\qka{\zeta_j}+\sum_{\gamma=1}^s\qkag{\Phi_j}\qkg{y_j}-
			\overline{\sigma}^{\alpha 0}\ad_{\qka{\lG_j}}^D \qka{\lS_j}\qka{y_j}\Big),\\[0.3em]
			&\nonumber\qka{\lG_i}= \qka{\lF_i}+\sum_{j\in \chd(i)}\qka{\lG_j}
			\end{align}
		\end{subequations}
	\vspace{0.2em}
		\State $\forall\alpha=0,\,1,\,\cdots,\,s$ and $\forall\gamma=1,\,2,\,\cdots,\, s$,
		\begin{equation}
		\vspace{-1em}
		\nonumber
		\qkag{H_i}= \qkag{D_i}\qkg{\dot{\lS}\overline{\vphantom{S}}_i} +\qkag{G_i}\qkg{\lS_i} +\dfrac{1}{\Delta t}\sum\limits_{\rho=0}^s\qpg{b} \qkap{D_i}\qkp{\lS_i}.
		\end{equation}
		\vspace{0.2em}
		\State $\forall\alpha=0,\,1,\,\cdots,\,s-1$ and $\forall\gamma=1,\,2,\,\cdots,\, s$,
		\begin{equation}
		\nonumber
		\qkag{\Phi_i}= \qkag{\Pi_i}\qkg{\dot{\lS}\overline{\vphantom{S}}_i} +\qkag{\Psi_i}\qkg{\lS_i} +\dfrac{1}{\Delta t}\sum\limits_{\rho=0}^s\qpg{b} \qkap{\Pi_i}\qkp{\lS_i}.
		\end{equation}

		\State $\forall \alpha=0,\,1,\,\cdots,\,s-1$, $\forall \rho=0,\,1,\,\cdots,\,s$ and $\forall \nu=0,\,1,\,\cdots,\, s-1$,  
		\begin{align*}
		\qkap{\Theta_i}=\;&\qa{w}\Delta t\cdot\big({\qka{\dot{\lS}\overline{\vphantom{S}}_i}{}}^T \qkap{D_i}+ \qap{\sigma}\qka{\lS_i}^T\ad_{\qka{\lmu_i}}^D\big)+\qka{\lS_i}^T\qkap{\Pi_i},\\
		\qkan{\Xi_i}=\;&\qa{w}\Delta t\cdot{\qka{\dot{\lS}\overline{\vphantom{S}}_i}{}}^T \qkan{G_i} + \qka{\lS_i}^T\qkan{\Psi_i}.
		\end{align*}
		\State $\forall \alpha=0,\,1,\,\cdots,\,s-1$, $\forall \rho=0,\,1,\,\cdots,\,s$ and $\forall \nu=0,\,1,\,\cdots,\, s-1$,
		\begin{align*}
		&\qkap{\overline{\Theta}_i}=\qkap{\Theta_i} + \sum_{\beta=0}^s \qab{a}\qkb{\lS_i}^T \qkbp{D_i},\\
		&\qkan{\overline{\Xi}_i}=\qkan{\Xi_i} +  \sum_{\beta=0}^s \qab{a}\qkb{\lS_i}^T \qkbn{G_i},\\
		&\qka{\overline{\xi}_i} = \qa{w}\Delta t\cdot{\qka{\dot{\lS}\overline{\vphantom{S}}_i}{}}^T  \qka{l_i} + \qka{\lS_i}^T\qka{\zeta_i} +\sum_{\beta=0}^{s}\qab{a}\qkb{\lS_i}^T \qkb{l_i}.
		\end{align*}
		\vspace{-0.5em}
		\State 	$\forall \alpha=0,\,1,\,\cdots,\,s-1$ and $\forall\gamma=1,\,2,\,\cdots,\, s$,
		\begin{equation*}
		\begin{aligned}
		\qkag{\Lambda_{i}}= &\qa{w}\Delta t\cdot\qka{\dot{\lS}\overline{\vphantom{S}}_i}^T \qkag{H_i}+\qka{\lS_i}^T\qkag{\Phi_i} +\sum_{\beta=0}^s \qab{a} \qkb{\lS_i}^T\qkbg{H_i}+\\
		&{\sigma}^{\alpha\gamma}\left(\tD_1\qka{Q_i}+\qa{w}\Delta t\cdot \qka{\lS_i}^T\ad_{\qka{\lmu_i}}^D\qka{\dot{\lS}\overline{\vphantom{S}}_i}\right) +\frac{1}{\Delta t} \qag{b}\cdot\tD_2\qka{Q_i}
		\end{aligned}
		\end{equation*}
		with which $\qk{\Lambda_{i}}=\begin{bmatrix}
		\qkag{\Lambda_{i}}
		\end{bmatrix}\in\R^{s\times s}$
		\algrule
		\newpage
		\algrule
		\State $\forall\gamma=1,\,2,\,\cdots,\, s$ and $\forall\varrho=0,\,1,\,\cdots,\, s-1$, compute $\qkgvp{\lLam_{i}}$ such that $\qk{\Lambda_{i}}^{-1}=\begin{bmatrix}
		\qkgvp{\lLam_{i}}
		\end{bmatrix}\in\R^{s\times s}$  
		\vspace{0.65em}
		\State $\forall \gamma=1,\,2,\,\cdots,\,s$, $\forall \rho=0,\,1,\,\cdots,\,s$ and $\forall \nu=1,\,2,\,\cdots,\,s$ 
		\begin{align*}
		\qkgp{X_i}&=-\sum_{\varrho=0}^{s-1}\qkgvp{\lLam_{i}}\cdot\qkvpp{\overline{\Theta}_i},\\
		\qkgn{Y_i}&=-\sum_{\varrho=0}^{s-1}\qkgvp{\lLam_{i}}\cdot\qkvpn{\overline{\Xi}_i},\\
		\qkg{y_i}&=-\sum_{\varrho=0}^{s-1}\qkgvp{\lLam_{i}}\left(\qkvp{r_i}+\qkvp{\overline{\xi}_i}\right)
		\end{align*}
	\end{algorithmic}
\end{myalg}

\section{Preliminaries}\label{sectiona::preliminary}
In this section, we present additional preliminaries used in \cref{algorithm::dabi,algorithm::dabi_b} and the proofs of \cref{prop::dabi,prop::dkdq,prop::dvkdq,prop::eval}. In \cref{subsectiona::tree}, we extend the contents of \cref{subsection::tree} for the computation of variations and derivatives. In \cref{subsection::spatial,subsectiona::diff}, we respectively introduce the notion of the spatial variation for spatial quantities and the differentiation on Lie groups, which are mainly used in \cref{algorithm::dabi,algorithm::dabi_b} and the proof of \cref{prop::dabi}.

\subsection{The Tree Representation Revisited}\label{subsectiona::tree}

In addition to the computation of rigid body dynamics as those in \cref{subsection::tree}, the tree representation can also be used to compute the variations and derivatives.

As is known, in the tree representation, the configuration $g_i\in SE(3)$ of rigid body $i$ is
\begin{equation}\label{eqa::g}
g_i=g_{\pa(i)}g_{\pa(i),i}(q_i)
\end{equation}
in which $g_{\pa(i),i}(q_i)=g_{\pa(i),i}(0)\exp(\hat{S}_i q_i )$ and $S_i$ is the body Jacobian of joint $i$ with respect to frame $\{i\}$. In addition, the spatial Jacobian of joint $i$ with respect to frame $\{0\}$ is
\begin{equation}\label{eqa::S}
	\lS_i = \Ad_{g_i}S_i
\end{equation}
in which $S_i$ is constant by definition. Using \cref{eqa::g,eqa::S} as well as $\Ad_{g_i}S_i = \big(g_i \hat{S}_i g_i^{-1}\big)^\vee$, we obtain $\leta_i = (\delta g_i g_i^{-1})^\vee$ as
\begin{equation}\label{eqa::eta}
	\leta_i=\leta_{\pa(i)}+\lS_i\cdot\delta q_i,
\end{equation}
or equivalently, 
\begin{equation}\label{eqa::etas}
\leta_i = \lS_i\cdot\delta q_i + \sum_{j\in\anc(i)}^n \lS_j\cdot\delta q_j
\end{equation}
and furthermore,
\begin{subequations}\label{eqa::dgdq}
\begin{equation}\label{eqa::dgdq1}
\left(\frac{\partial g_i}{\partial q_j} g_i^{-1}\right)^\vee = \begin{cases}
\lS_j & j\in\anc(i) \cup\{i\},\\
0 & \text{otherwise},
\end{cases}
\end{equation}
\begin{equation}\label{eqa::dgdq2}
\left(\frac{\partial g_j}{\partial q_i} g_i^{-1}\right)^\vee = \begin{cases}
\lS_i & j\in\des(i) \cup\{i\},\\
0 & \text{otherwise}.
\end{cases}
\end{equation}
\end{subequations}
In addition, from \cref{eqa::S,eqa::eta}, $\delta \Ad_{g_i} = \ad_{\leta_i} \Ad_{g_i}$ and $\ad_{\lS_i} \lS_i =0$, we obtain
\begin{equation}\label{eqa::dS}
	\delta \lS_i = \ad_{\leta_i}\lS_i=-\ad_{\lS_i}\le_i=\ad_{\le_{\pa(i)}}\lS_i=-\ad_{\lS_i}\le_{\pa(i)}.
\end{equation}

Moreover, as a result of \cref{eqa::dS,eqa::etas,eqa::dgdq}, we further obtain
\begin{subequations}\label{eqa::dSdq}
	\begin{equation}\label{eqa::dSdq1}
	\frac{\partial \lS_i}{\partial q_j} = \begin{cases}
	\ad_{\lS_j}\lS_i & j\in\anc(i),\\
	0 & \text{otherwise},
	\end{cases}
	\end{equation}
	\begin{equation}\label{eqa::dSdq2}
	\frac{\partial \lS_j}{\partial q_i} = \begin{cases}
	\ad_{\lS_i}\lS_j & j\in\des(i),\\
	0 & \text{otherwise}.
	\end{cases}
	\end{equation}
\end{subequations}
Since the spatial velocity $\lv_i$ of rigid body $i$ is
\begin{equation}\label{eqa::v}
\begin{aligned}
	\lv_i &=\lS_i\cdot\dot{q}_i+\sum_{j\in\anc(i)}{\lS_j}\cdot\dot{q}_j\\
		  &= \lv_{\pa(i)} + \lS_i\cdot\dot{q}_i,
\end{aligned}
\end{equation}  
we obtain
$$
\begin{aligned}
\delta \lv_i &=\delta \lS_i\cdot \dot{q}_i + \lS_i \cdot\delta \dot{q}_i+\sum_{j\in\anc(i)}\left(\delta \lS_j\cdot \dot{q}_j + \lS_j \cdot\delta \dot{q}_j\right)\\
&=\delta \lv_{\pa(i)} + \delta \lS_i\cdot \dot{q}_i + \lS_i \cdot\delta \dot{q}_i.
\end{aligned}
$$
Substitute \cref{eqa::dS} into the equation above, the result is
\begin{equation}\label{eqa::dv}
\begin{aligned}
\delta \lv_i &=\ad_{\leta_i} \lS_i\cdot \dot{q}_i + \lS_i \cdot\delta \dot{q}_i+\sum_{j\in\anc(i)}\left(\ad_{\leta_j} \lS_j\cdot \dot{q}_j + \lS_j \cdot\delta \dot{q}_j\right) \\
&=\delta \lv_{\pa(i)} + \ad_{\leta_i} \lS_i\cdot \dot{q}_i + \lS_i \cdot\delta \dot{q}_i.
\end{aligned}
\end{equation}
From \cref{eqa::dS,eqa::dSdq,eqa::dv,eqa::v}, we obtain
\begin{subequations}\label{eqa::dvdqdot}
	\begin{equation}\label{eqa::dvdqdot1}
	\frac{\partial \lv_i}{\partial \dot{q}_j}=\begin{cases}
	S_j & j\in \anc(i)\cup \{i\},\\
	0 & \text{otherwise},
	\end{cases}
	\end{equation}
	\begin{equation}\label{eqa::dvdqdot2}
	\frac{\partial \lv_j}{\partial \dot{q}_i}=\begin{cases}
	S_i & j\in \des(i)\cup \{i\},\\
	0 & \text{otherwise},
	\end{cases}
	\end{equation}
\end{subequations}
and
\begin{subequations}\label{eqa::dvdq}
	\begin{equation}\label{eqa::dvdq1}
	\frac{\partial \lv_i}{\partial q_j}=\begin{cases}
	\ad_{\lS_j}(\lv_i-\lv_j)& j\in\anc(i)\cup\{i\},\\
	0 & \text{otherwise},
	\end{cases}
	\end{equation}
	\begin{equation}\label{eqa::dvdq2}
	\frac{\partial \lv_j}{\partial q_i}=\begin{cases}
	\ad_{\lS_i}(\lv_j-\lv_i)& j\in\des(i)\cup\{i\},\\
	0 & \text{otherwise}.
	\end{cases}
	\end{equation}
\end{subequations}
In addition, from \cref{eqa::v,eqa::S}, $\Ad_{\dot{g}_i} = \ad_{\lv_i} \Ad_{g_i}$ and $\ad_{\lS_i} \lS_i =0$, we obtain
\begin{equation}\label{eqa::Sdot}
\dot{\lS}_i = \ad_{\lv_i} \lS_i =-\ad_{\lS_i}\lv_i=\ad_{\lv_{\pa(i)}}\lS_i  = -\ad_{\lS_i}\lv_{\pa(i)}.
\end{equation} 
As for the spatial inertia matrix $\lM_i = \Ad_{g_i}^{-T} M_i\Ad_{g_{i}}^{-1}$, algebraic manipulation shows that
\begin{equation}\label{eqa::dM}
\delta \lM_i = -\ad_{\leta_i}^T\cdot \lM_i -\lM_i\cdot\ad_{\leta_i},
\end{equation}
and from Eqs. \eqref{eqa::eta} to \eqref{eqa::dgdq} and \cref{eqa::dM}, we obtain
\begin{subequations}\label{eqa::dMdq}
	\begin{equation}\label{eqa::dMdq1}
	\frac{\partial \lM_i}{\partial q_j} = \begin{cases}
	-\ad_{\lS_j}^T \lM_i-\lM_i\ad_{\lS_j} & j\in\anc(i)\cup\{i\},\\
	0 & \text{otherwise},
	\end{cases}
	\end{equation}
	\begin{equation}\label{eqa::dMdq2}
	\frac{\partial \lM_j}{\partial q_i} = \begin{cases}
	-\ad_{\lS_i}^T \lM_j-\lM_j\ad_{\lS_i} & j\in\des(i)\cup\{i\},\\
	0 & \text{otherwise}.
	\end{cases}
	\end{equation}
\end{subequations}

In \cref{subsection::prop1,subsection::pp2,subsection::pp3,subsection::pp4}, \cref{eqa::eta} to \eqref{eqa::dMdq} will be used to prove \cref{prop::dabi,prop::dvkdq,prop::dkdq,prop::eval}.

\subsection{The Spatial Variation}\label{subsection::spatial}
In this subsection, we introduce the \textit{spatial variation} $\ldh\overline{(\cdot)}$ that is used in \cref{algorithm::dabi,algorithm::dabi_b} and the proof of \cref{propa::dabi}. Note that the notion of the spatial variation $\ldh\overline{(\cdot)}$ only applies to the spatial quantities $\overline{(\cdot)}$ of $T_eSE(3)$ or $T_e^*SE(3)$ that are described in the spatial frame.

If $\la,a\in T_e SE(3) $ are related as $\la= \Ad_g a$ in which $g\in SE(3)$, we have
 \begin{equation}
\nonumber
	\delta \overline{a} = \Ad_g \delta a + \ad_{\le}\overline{a}
\end{equation}
in which $\le = (\delta g g^{-1})^\vee$. For numerical simplicity, it is sometimes preferable to have the variations of $\la$ and $a$ still related by $\Ad_g$. Therefore, we define the spatial variation $\ld\la$ to be

\begin{equation}\label{eqa::da2}
\ld\la=\delta \la - \ad_{\le}\la
\end{equation}
such that $\ld\la = \Ad_g \delta a$ as long as $\la = \Ad_ g a$. In a similar way, if $\lb^*,b^*\in T_e^*SE(3)$ are related as $\lb^*=\Ad_g^{-T}b^*$, we obtain
$$\delta \lb^* = \Ad_g^{-T} \delta b^* - \ad_{\le}^T\lb^*.$$
Similar to \cref{eqa::da2}, the spatial variation $\ldh\lb^*$ is defined to be

\begin{equation}\label{eqa::db2}
\ldh\lb^* = \delta \lb^* + \ad_{\leta}^T\lb^*
\end{equation} 
such that $\ldh\lb^* = \Ad_g^{-T}\delta b^*$ as long as $\lb^*=\Ad_g^{-T}b^*$. In addition, note that $\delta\big({b^*}^T a\big) = \delta {b^*}^T a+ {b^*}^T\delta a= \ldh {\lb^*}^T\la + {\lb^*}^T\ld\la$ and $\delta({\lb^*}^T\la)=\delta({b^*}^Ta)$, we have
\begin{equation}\label{eqa::lddot}
\delta ({\lb^*}^T \la)=\ldh {\lb^*}^T\la + {\lb^*}^T\ld\la.
\end{equation}

In general, the spatial variations $\ldh\overline{(\cdot)}$ are the infinitesimal changes of spatial quantities in either the Lie algebra $T_e SE(3)$ or the dual Lie algebra $T_e^*SE(3)$ after canceling out the influences of the frame change.

In \cref{section::lin_vi}, we have a number of spatial quantities that are defined in $T_eSE(3)$ and $T_e^*SE(3)$, whose spatial variations $\ldh\overline{(\cdot)}$ can be computed in the tree representation.

Following \cref{eqa::da2,eqa::dS,eqa::S}, for $\qka{\lS_i} = \Ad_{\qka{g_i}}S_i$, the spatial variation $\ldh\qka{\lS_i}$ is
\begin{equation}\label{eqa::ldS}
\ldh \qka{\lS_i} = 0
\end{equation}
though $\delta \qka{\lS_i} = \ad_{\qka{\le_i}}\qka{\lS_i}$ is usually    not zero. In addition, according to \cref{eqa::dv,eqa::da2}, we have 
$$
\ld\qka{\lv_i} = \delta\qka{\lv_{\pa(i)}} + \ad_{\qka{\le_i}}\qka{\lS_i}\cdot\qka{\dot{q}_i} +\qka{\lS_i}\cdot\delta \qka{\dot{q}_i}-\ad_{\qka{\le_i}}\qka{\lv_i}
$$
Substitute \cref{eqa::eta,eqa::v} into the equation above to expand $\ad_{\qka{\le_i}}\qka{\lv_i}$ and apply \cref{eqa::Sdot,eqa::dS}, it can be shown that
\begin{equation}\label{eqa::ldv}
\ld\qka{\lv_i} = \ld\qka{\lv_{\pa(i)}} + \qka{\dot{\lS}\overline{\vphantom{S}}_{i}}\cdot\delta \qka{q_i} + \qka{\lS_i}\cdot \delta \qka{\dot{q}}.
\end{equation}

In terms of $\qka{\lmu_i}$, $\qka{\lG_{i}}$ and $\qka{\lO_i}$ in \cref{eq::eval}, which are spatial quantities in $T_e^*SE(3)$, we can still implement the tree representation to compute the spatial variation. According to \cref{defa::abmom}, we have
$$\delta \qka{\lmu_i} = \delta(\qka{\lM_i}\qka{\lv_i}) + \sum_{j\in\chd(i)}\delta \qka{\lmu_j}. $$
From \cref{eqa::db2}, the spatial variation $\ld\qka{\lmu_i}$ is
$$\ld\qka{\lmu_i}=\delta(\qka{\lM_i}\qka{\lv_i})+\sum_{j\in\chd(i)}\delta \qka{\lmu_j} +\ad_{\qka{\le_i}}^T\qka{\lmu_i}. $$
Using $\qka{\lmu_i}=\qka{\lM_i}\qka{\lv_i}+\sum_{j\in \chd(i)}\qka{\lmu_j}$ and $\qka{\le_i}= \qka{\le_j}- \qka{\lS_j}\cdot\delta \qka{q_j}$, we have
\begin{multline}\label{eqa::dmu}
\ld\qka{\lmu_i}=\delta(\qka{\lM_i}\qka{\lv_i})+\ad_{\qka{\le_i}}^T(\qka{\lM_i}\qka{\lv_i})+\\
\sum_{j\in\chd(i)}\left(\delta\qka{\lmu_j}+\ad_{\qka{\le_j}}^T\qka{\lmu_j}-\ad_{\qka{\lS_j}}^T\qka{\lmu_j}\cdot\delta \qka{q_j}\right)
\end{multline}
As a result of \cref{eqa::dM,eqa::da2}, $\delta(\qka{\lM_i}\qka{\lv_i})+\ad_{\qka{\le_i}}^T(\qka{\lM_i}\qka{\lv_i})$ is
\begin{equation}\label{eqa::dmv}
\begin{aligned}
\delta(\qka{\lM_i}\qka{\lv_i})+\ad_{\qka{\le_i}}^T(\qka{\lM_i}\qka{\lv_i})&=\qka{\lM_i}(\delta \qka{\lv_i}-\ad_{\qka{\le_i}}\qka{\lv_i})\\
&=\qka{\lM_i}\ld\qka{\lv_i}.
\end{aligned}
\end{equation}

From \cref{eqa::db2,eqa::dmv} and $\ad_{\qka{\lS_j}}^T\qka{\lmu_j}=\ad_{\qka{\lmu_j}}^D\qka{\lS_j}$, \cref{eqa::dmu} is simplified to
\begin{equation}\label{eqa::ldmu}
\begin{aligned}
\ld\qka{\lmu_i}&= \qka{\lM_i}\ld\qka{\lv_i}+\sum_{j\in\chd(i)}\left(\ld\qka{\lmu_j}-\ad_{\qka{\lS_j}}^T\qka{\lmu_j}\cdot\delta \qka{q_j}\right)\\
&=\qka{\lM_i}\ld\qka{\lv_i}+\sum_{j\in\chd(i)}\big(\ld\qka{\lmu_j}-\ad_{\qka{\lmu_j}}^D\qka{\lS_j}\cdot\delta \qka{q_j}\big).
\end{aligned}
\end{equation}
In a similar way, for the spatial variation $\ldh\qka{\lG_i}$, we obtain
\begin{equation}\label{eqa::ldG}
\begin{aligned}
\ldh\qka{\lG_i}&= \ldh\qka{\lF_i}+\sum_{j\in\chd(i)}\left(\ldh\qka{\lG_j}-\ad_{\qka{\lS_j}}^T\qka{\lG_j}\cdot\delta \qka{q_j}\right)\\
&= \ldh\qka{\lF_i}+\sum_{j\in\chd(i)}\big(\ldh\qka{\lG_j}-\ad_{\qka{\lG_j}}^D\qka{\lS_j}\cdot\delta \qka{q_j}\big).
\end{aligned}
\end{equation}
As for $\qka{\lO_i}= w^{\alpha}\Delta t\cdot \ad_{\qka{\lv_{i}}}^T\cdot\qka{\lmu_i}+\qka{\lG_{i}}$, from \cref{eqa::da2,eqa::db2}, algebraic manipulation shows that
\begin{equation}\label{eqa::ldo}
\begin{aligned}
	\ldh\qka{\lO_i}&=\delta \qka{\lO_i}+ \ad_{\qka{\le_i}}^T\qka{\lO_i}\\
	&=w^\alpha\Delta t\cdot\big(\ad_{\qka{\lv_i}}^T\cdot\ld\qka{\lmu_i}+ \ad_{\ld\qka{\lv_i}}^T \qka{\lmu_i}\big)+\ldh\qka{\lG_i}\\
	&=w^\alpha\Delta t\cdot\big(\ad_{\qka{\lv_i}}^T\cdot\ld\qka{\lmu_i}+ \ad_{\qka{\lmu_i}}^D \ld\qka{\lv_i}\big)+\ldh\qka{\lG_i}.
\end{aligned}
\end{equation}

In \cref{subsection::pp2}, \cref{eqa::ldS,eqa::ldv,eqa::ldmu,eqa::ldG,eqa::ldo} will be used to prove \cref{propa::dabi}. 

\subsection{Differentiation on Lie Groups}\label{subsectiona::diff}
For an analytical function $f:\R^n\rightarrow \R$, the directional derivative at $x\in\R^n$ in the direction $\delta x$ is defined to be 
\begin{equation*}
	\tD f(x)\cdot\delta x = \left.\frac{\mathrm{d}}{\mathrm{d}t} f(x+t\cdot\delta x)\right|_{t=0}
\end{equation*}
in which $\tD f(x)=\begin{bmatrix}
\frac{\partial f}{\partial x_1} & \frac{\partial f}{\partial x_2} &\cdots & \frac{\partial f}{\partial x_n} 
\end{bmatrix}^T\in\R^n$.

In a similar way, we might define the directional derivative on Lie groups using the Lie algebra and the exponential map as follows.

\numberwithin{definition}{section}
\begin{definition}\label{def::df}
	If $G$ is a $n$-dimensional smooth Lie group and $f:G\longrightarrow \R$ is a smooth function on $G$, the directional derivative at $g\in G$ in the direction $\le= \delta g g^{-1}\in T_e G$ is defined to be
	$$\tD f(g)\cdot \le = \left.\frac{\mathrm{d}}{\mathrm{d}t} f\left(\exp\left(t\cdot\le\right)g\right)\right|_{t=0}. $$
	Moreover, if $\overline{e}_1$, $\overline{e}_2$, $\cdots$, $\overline{e}_n$ is a basis for the Lie algebra $T_e G$, then $\tD f(g)$ can be explicitly written as
	\begin{equation*}
	\tD f(g) = \left.\frac{\mathrm{d}}{\mathrm{d}t}\begin{bmatrix}
	 f\left(\exp\left(t\cdot\overline{e}_1\right)g\right) && f\left(\exp\left(t\cdot\overline{e}_2\right)g\right) && \cdots && f\left(\exp\left(t\cdot\overline{e}_n\right)g\right)
	\end{bmatrix}^T\right|_{t=0}.
	\end{equation*}
\end{definition}

In regard to Lie group theory, $\R^n$ is also a smooth Lie group for which the binary operation is addition, the Lie algebra is itself and the exponential map is the identity map. Furthermore, the definition of directional derivatives on Lie groups in \cref{def::df} is consistent with the definition of directional derivatives in $\R^n$. Therefore, it is without loss of any generality to interpret all the quantities in this paper as elements of Lie groups and all the derivatives in this paper as derivatives on Lie groups that are defined by \cref{def::df}. 

In this paper, following the notation in multivariate calculus, if $f:G_1\times G_2\times \cdots \times G_d\rightarrow \R$ is a smooth function in which $G_1$, $G_2$, $\cdots$, $G_d$ are Lie groups, we use $\tD_i f$ to denote the derivative with respect to $G_i$. In particular, for $\qka{\lF_i}=\qka{\lF_i}(\qka{g_i},\qka{\lv_i},\qka{u_i})$ that is used for the computation of the Newton direction in \cref{algorithm::dabi_b}, note that $\tD_1\qka{\lF_i}$ is the derivative with respect to $\qka{g_i}$ and $\tD_2\qka{\lF_i}$ is the derivative with respect to $\qka{\lv_i}$.

\counterwithout{definition}{section}
\section{Proof of Propositions}\label{section::proof}
\setcounter{definition}{0}
\setcounter{prop}{0}

In this section, we review and prove \cref{prop::dabi,prop::dkdq,prop::dvkdq,prop::eval} in  [1] though these proofs are not necessary for implementation.

\subsection{Proof of \cref{prop::eval}}\label{subsection::prop1}
In \cref{subsection::eval}, we define the discrete articulated body momentum and discrete articulated body impulse are respectively as follows.

\begin{definition}\label{defa::abmom}
	The discrete articulated body momentum $\lmu^{k,\alpha}_i\in \R^6$ for articulated body $i$ is defined to be
	\begin{equation}\label{eqa::abmom}
	\qka{\lmu_i} = \qka{\lM_i}\qka{\lv_i}+\sum_{j\in\chd(i)} \qka{\lmu_j}\;\;\;\quad
	\forall \alpha=0,\,1,\cdots,\,s
	\end{equation}
	in which $\qka{\lM_i}$ and $\qka{\lv_i}$ are respectively the spatial inertia matrix and spatial velocity of rigid body $i$.
\end{definition}
\begin{definition}\label{defa::abF}
	Suppose $\lF_i(t)\in\R^6$ is the sum of all the wrenches directly acting on rigid body $i$, which does not include those applied or transmitted through the joints that are connected to rigid body $i$. The discrete articulated body impulse $\qka{\lG_i}\in\R^6$ for articulated body $i$ is defined to be
	\begin{equation}\label{eqa::abF}
		\qka{\lG_i}=\lF_{i}^{k,\alpha} + \sum_{j\in\chd(i)}\lG_{j}^{k,\alpha}
	\end{equation}
	in which $\qka{\lF_i}=\qa{\omega} \lF_i(\qka{t})\Delta t\in\R^6$ is the discrete impulse acting on rigid body $i$. Note that $\lF_i(t)$, $\qka{\lF_i}$ and $\qka{\lG_i}$ are expressed in frame $\{0\}$. 
\end{definition}

The DEL equations \cref{eq::DEL_general} can be recursively evaluated with $\qka{\lmu_i}$ and $\qka{\lF_i}$ as \cref{propa::eval} indicates.

\begin{prop}\label{propa::eval}
	If $Q_i(t)\in\R$ is the sum of all joint forces applied to joint $i$ and $p^k=\begin{bmatrix}
	p_1^k & p_2^k &\cdots & p_n^k
	\end{bmatrix}^T\in \R^n$ is the discrete momentum, the DEL equations \cref{eq::DEL_general} can be evaluated as
	\begin{subequations}\label{eqa::eval}
		\begin{align}
		&\label{eqa::eval1}r_i^{k,0}=p_i^{k} + {\lS_i^{k,0}}^T\cdot\lO_i^{k,0}+\sum_{\beta= 0}^s a^{0\beta}{\lS_i^{k,\beta}}^T\cdot\lmu_i^{k,\beta}+Q_i^{k,0},\\
		&\label{eqa::eval2}r_i^{k,\alpha}={\lS_i^{k,\alpha}}^T\cdot\lO_i^{k,\alpha}+\sum_{\beta= 0}^s a^{\alpha\beta}{\lS_i^{k,\beta}}^T \cdot\lmu_i^{k,\beta}+\qka{Q_i}
		\quad\forall\alpha=1,\cdots,s-1,\\
		&p_i^{k+1}={\qks{\lS_i}}^T\cdot\qks{\lO_i}+\sum_{\beta= 0}^s a^{s\beta}{\lS_i^{k,\beta}}^T \cdot\lmu_i^{k,\beta}+\qks{Q_i}
		\end{align}
	\end{subequations}
	in which $\qka{r_i}$ is the residue of the DEL equations \cref{eq::DEL_general1,eq::DEL_general2}, $a^{\alpha\beta}=w^\beta b^{\beta\alpha}$, $\qka{\lO_i}= w^{\alpha}\Delta t\cdot {\ad}_{\qka{\lv_{i}}}^T\cdot\qka{\lmu_i}+\qka{\lG_{i}}$, and $\qka{Q_i}=\qa{\omega} Q_i(\qka{t})\Delta t$ is the discrete joint force applied to joint $i$. 
\end{prop}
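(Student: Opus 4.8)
The plan is to reduce the DEL residue for joint $i$ to the continuous Euler--Lagrange quantities $\partial K/\partial q_i$ and $\partial K/\partial\dot q_i$ by the chain rule, and then to recognize each of these as a projection of the articulated-body momentum $\lmu_i$ onto the joint twist $\lS_i$. Throughout I absorb gravity and all body wrenches into the external forcing $\lF_i$ (as in \cref{defa::abF}), so the Lagrangian is purely kinetic, $\cL=K=\tfrac12\sum_j\lv_j^T\lM_j\lv_j$, and the joint torques and wrenches enter only through $\cF_d$. First I would differentiate $\cL_d=\sum_\beta \qb{w}\,\cL(\qkb{q},\qkb{\dot q})\Delta t$ with respect to $\qka{q}$. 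Since $\qka{q}$ enters $\cL$ directly at control point $\alpha$ and enters every $\qkb{\dot q}=\tfrac1{\Delta t}\sum_\gamma b^{\beta\gamma}\qkg{q}$ with coefficient $b^{\beta\alpha}/\Delta t$, the chain rule gives, in the component conjugate to $q_i$, $\;\big(\tD_{\alpha+1}\cL_d\big)_i=\qa{w}\Delta t\,\tfrac{\partial K}{\partial q_i}\big|_\alpha+\sum_{\beta}\qab{a}\,\tfrac{\partial K}{\partial\dot q_i}\big|_\beta$, which already produces the weight $a^{\alpha\beta}=w^\beta b^{\beta\alpha}$ in the statement.

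Next I would evaluate the two partials with the tree-representation derivative formulas of \cref{subsectiona::tree}, all of which vanish unless $j\in\des(i)\cup\{i\}$. For the velocity partial, \cref{eqa::dvdqdot2} gives $\partial\lv_j/\partial\dot q_i=\lS_i$, so using the symmetry of $\lM_j$ and the articulated-body momentum $\lmu_i=\sum_{j\in\des(i)\cup\{i\}}\lM_j\lv_j$ from \cref{defa::abmom}, I obtain $\tfrac{\partial K}{\partial\dot q_i}=\lS_i^T\lmu_i$. For the configuration partial I would substitute \cref{eqa::dvdq2,eqa::dMdq2} into $\tfrac{\partial K}{\partial q_i}=\sum_j\lv_j^T\lM_j\tfrac{\partial\lv_j}{\partial q_i}+\tfrac12\sum_j\lv_j^T\tfrac{\partial\lM_j}{\partial q_i}\lv_j$. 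The inertia term contributes $-\lv_j^T\lM_j\ad_{\lS_i}\lv_j$ (its two summands coincide by symmetry of $\lM_j$), which cancels the $\lv_j$-part of $\lv_j^T\lM_j\ad_{\lS_i}(\lv_j-\lv_i)$, leaving only $-\lmu_i^T\ad_{\lS_i}\lv_i$; then antisymmetry of the bracket, $\ad_{\lS_i}\lv_i=-\ad_{\lv_i}\lS_i$, rewrites this as $\lS_i^T\ad_{\lv_i}^T\lmu_i$.

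Finally I would treat the forcing and assemble. By virtual work a wrench $\lF_j$ on body $j$ contributes $\lS_i^T\lF_j$ to the generalized force conjugate to $q_i$ exactly when $j\in\des(i)\cup\{i\}$, so summing and adding the joint force yields $\cF_{d,i}^{k,\alpha}=\lS_i^{k,\alpha T}\lG_i^{k,\alpha}+Q_i^{k,\alpha}$, where $\lG_i=\sum_{j\in\des(i)\cup\{i\}}\lF_j$ is the articulated-body impulse of \cref{defa::abF}. Combining the kinetic result with $\lO_i=w^\alpha\Delta t\,\ad_{\lv_i}^T\lmu_i+\lG_i$ collapses the direct term and the wrench term into $\lS_i^{k,\alpha T}\lO_i^{k,\alpha}$, giving the interior residue $r_i^{k,\alpha}$; for $\alpha=0$ I would add the boundary momentum $p_i^{k}$ from \cref{eq::DEL_general1}, and for $\alpha=s$ isolate $p_i^{k+1}$ from \cref{eq::DEL_general3}.

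The hard part will be the cancellation in $\partial K/\partial q_i$: one must combine the velocity-derivative term with the inertia-derivative term and verify that every contribution quadratic in the descendants' own velocities $\lv_j$ drops out, leaving the single term $\lS_i^T\ad_{\lv_i}^T\lmu_i$ expressed purely through $\lmu_i$. This relies on both the symmetry of $\lM_j$ and the antisymmetry of $\ad$, and it is the step where the articulated-body structure that makes the $O(n)$ recursion possible genuinely emerges; the remaining bookkeeping (chain-rule coefficients and force decomposition) is routine by comparison.
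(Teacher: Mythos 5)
Your proposal is correct and follows essentially the same route as the paper's proof: absorb the potential and body wrenches into $\lF_i$ so only the kinetic energy remains in the Lagrangian, use the chain rule on $\cL_d$ with $\dot{q}^{k,\beta}=\frac{1}{\Delta t}\sum_\gamma b^{\beta\gamma}q^{k,\gamma}$ to produce the weights $a^{\alpha\beta}=w^\beta b^{\beta\alpha}$, reduce $\frac{\partial K}{\partial \dot{q}_i}$ and $\frac{\partial K}{\partial q_i}$ to $\qka{\lS_i}^T\qka{\lmu_i}$ and $\qka{\lS_i}^T\ad_{\qka{\lv_i}}^T\qka{\lmu_i}$ via the tree-representation derivative formulas, and collect the wrench contributions into $\qka{\lS_i}^T\qka{\lG_i}$ by reversing the ancestor/descendant summation. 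The cancellation you flag in $\frac{\partial K}{\partial q_i}$ is exactly the ``tedious but straightforward algebraic manipulation'' the paper performs in its Eq.~\eqref{eqa::dKdq}, and your bookkeeping of the boundary terms $p_i^k$ and $p_i^{k+1}$ matches \cref{eq::DEL_general1,eq::DEL_general3}.
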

\begin{proof}
		\allowdisplaybreaks
The Lagrangian of a mechanical system is defined to be
\begin{equation}\label{eqa::L}
\cL(q,\dot{q})=K(q,\dot{q})-V(q)
\end{equation}
in which $K(q,\dot{q})$ is the kinetic energy and $V(q)$ is the potential energy. It is by the definition of $\lF_i(t)$ and $Q_i(t)$ that 
$$\int_0^T \cF(t)\cdot\delta q dt-\delta \int_0^T V(q)dt = \int_0^T \sum_{i=1}^n\lF_i(t)\cdot\leta_idt+\int_0^T \sum_{i=1}^nQ_i(t)\cdot \delta q_i dt$$
in which $\leta_i = (\delta g_i g_i^{-1})^\vee$. Therefore, the Lagrange-d'Alembert principle \cref{eq::lda} is equivalent to
\begin{equation}\label{eqa::ld}
\delta\mathfrak{S}=\delta\int_0^T K(q,\dot{q})dt+\int_0^T \sum_{i=1}^n\lF_i(t)\cdot\leta_i dt + \int_0^T\sum_{i=1}^n Q_i(t) \cdot \delta q_idt=0 .
\end{equation}
As a result of \cref{eqa::ld,eq::dlda}, we have
\begin{multline}\label{eqa::var}
\sum_{k=0}^{N-1}\sum_{\alpha=0}^s w^\alpha\sum_{i=1}^{n} \Big[\left\langle \tfrac{\partial K}{\partial q_i}(\qka{q},\qka{\dot{q}}),\delta\qika\right\rangle + \left\langle\tfrac{\partial K}{\partial \dot{q}_i}(\qka{q},\qka{\dot{q}}), \delta \qdotika \right\rangle+\\ \innprod{\lF_i(t^{k,\alpha})}{\le_i^{k,\alpha}} +  \innprod{Q_i(t^{k,\alpha})}{\delta \qika}\Big]\Delta t=0.
\end{multline}
Note that the kinetic energy $K(\qka{q},\qka{\dot{q}})$ is
\begin{equation}\label{eqa::K}
	K(\qka{q},\qka{\dot{q}})=\frac{1}{2}\sum_{j=1}^{n} \qka{\lv_j}^T\qka{\lM_j} \qka{\lv_j}
\end{equation}
in which $\qka{\lM_i}\in\R^{6\times 6}$ is the spatial inertia matrix and $\qka{\lv_i}\in\R^6$ is the spatial velocity. Using \cref{eqa::dvdqdot2,eqa::K,eqa::abmom}, we obtain
\begin{equation}\label{eqa::dKdqdot}
\begin{aligned}
\frac{\partial K}{\partial \dot{q}_i}(\qka{q},\qka{\dot{q}})=&\sum_{j=1}^{n}\frac{\partial \qka{\lv_j}}{\partial \dot{q}_i}^T\qka{\lM_j}\qka{\lv_j}\\
=&\qka{\lS_i}^T\qka{\lM_i}\qka{\lv_i}+\sum_{j\in\des(i)}\qka{\lS_i}^T\qka{\lM_j}\qka{\lv_j}\\
=&\qka{\lS_i}^T \qka{\lmu_i}.
\end{aligned}
\end{equation}
In a similar way, as a result of \cref{eqa::dMdq2,eqa::dvdq2,eqa::Sdot,eqa::K,eqa::abmom}, a tedious but straightforward algebraic manipulation results in
\begin{equation}\label{eqa::dKdq}
\begin{aligned}
\frac{\partial K}{\partial q_i}(\qka{q},\qka{\dot{q}})=&\sum_{j\in\des(i)\cup\{i\}}\left[\ad_{\qka{\lS_i}}(\qka{\lv_j}-\qka{\lv_i})-\ad_{\qka{\lS_i}}\qka{\lv_j}\right]^T\qka{\lM_j}\qka{v_j}\\
=&\qka{S_i}^T\ad_{\lv_i}^T\cdot\qka{\lmu_i}\\
=&{\dot{\lS}\overline{\vphantom{S}}_i^{k,\alpha}}^T\qka{\lmu_i}.
\end{aligned}
\end{equation}
In addition, using \cref{eqa::etas,eqa::abF} and $\qka{\lF_i}=\qa{w}\lF_i(\qka{t})\Delta t$, we obtain
\begin{equation}\label{eqa::abFdq}
\begin{aligned}
	\sum_{i=1}^{n}\innprod{\qa{w}\lF_i(\qka{t})\Delta t}{\qka{\leta_i}}=&\sum_{i=1}^{n}\innprod{\qa{w}\lF_i(\qka{t})\Delta t}{\qka{\lS_i}\cdot\delta \qka{q_i}+\!\!\sum_{j\in\anc(i)}\qka{\lS_j}\cdot\qka{q_j}}\\
	=&\sum_{i=1}^{n}\innprod{\qka{\lF_i}+\sum_{j\in\des(i)}\qka{\lF_j}}{\qka{\lS_i}\cdot\delta\qka{q_i}}\\
	=&\sum_{i=1}^{n}\innprod{\qka{\lG_i}}{\qka{\lS_i}\cdot\delta\qka{q_i}}\\
	=&\sum_{i=1}^{n}\innprod{\qka{\lS_i}^T\qka{\lG_i}}{\delta\qka{q_j}}.
\end{aligned}
\end{equation}
From \cref{eq::qdot}, we obtain
\begin{equation}\label{eqa::dqdot}
\delta\qka{\dot{q}_i}=\frac{1}{\Delta t}\sum_{\beta=0}^s \qab{b}\cdot\delta\qkb{q_i}.
\end{equation}
Substituting \cref{eqa::dKdq,eqa::dKdqdot,eqa::abFdq} into \cref{eqa::var} and simplifying the resulting equation with \cref{eqa::dqdot} as well as the chain rule, we obtain
\begin{equation}
\nonumber
	\sum_{k=0}^{N-1}\sum_{\alpha=0}^{s}\sum_{i=1}^{n}\innprod{\qka{\lS_i}^T\cdot\qka{\lO_i}+\sum_{\beta=0}^{s}\qab{a}\qkb{\lS_i}^T\cdot\qkb{\lmu_i}+\qka{Q_i}}{\delta\qka{q_i}}=0
\end{equation}
in which $a^{\alpha\beta}=w^\beta b^{\beta\alpha}$, $\qka{\lO_i}= w^{\alpha}\Delta t\cdot {\ad}_{\qka{\lv_{i}}}^T\cdot\qka{\lmu_i}+\qka{\lG_{i}}$ and $\qka{Q_i}=\qa{\omega} Q_i(\qka{t})\Delta t$. The equation above is equivalent to requiring
\begin{align*}
&p_i^{k} + {\lS_i^{k,0}}^T\cdot\lO_i^{k,0}+\sum_{\beta= 0}^s a^{0\beta}{\lS_i^{k,\beta}}^T\cdot\lmu_i^{k,\beta}+Q_i^{k,0}=0,\\
&{\lS_i^{k,\alpha}}^T\cdot\lO_i^{k,\alpha}+\sum_{\beta= 0}^s a^{\alpha\beta}{\lS_i^{k,\beta}}^T \cdot\lmu_i^{k,\beta}+\qka{Q_i}=0
\quad\forall\alpha=1,\cdots,s-1,\\
&p_i^{k+1}={\qks{\lS_i}}^T\cdot\qks{\lO_i}+\sum_{\beta= 0}^s a^{s\beta}{\lS_i^{k,\beta}}^T \cdot\lmu_i^{k,\beta}+\qks{Q_i}.
\end{align*}
This completes the proof.
\end{proof}
\subsection{Proof of \cref{prop::dabi}}\label{subsection::pp2}
\setcounter{assumption}{0}
In \cref{subsection::newton}, we make the assumption on the discrete impulse $\qka{\lF_i}$ and discrete joint force $\qka{Q_i}$ as follows.

\begin{assumption}\label{assumptiona::abi}
	Let $u(t)$ be control inputs of the mechanical system, we assume that the discrete impulse $\qka{\lF_i}$ and discrete joint force $\qka{Q_i}$ can be respectively formulated as
	$\qka{\lF_i}=\qka{\lF_i}(\qka{g_i},\qka{\lv_i},\qka{u}) $
	and
	$\qka{Q_i}=\qka{Q_i}(\qka{q_i},\qka{\dot{q}_i},\qka{u})$
	in which $\qka{u}=u(\qka{t})$.
\end{assumption}
\setcounter{prop}{0}
\numberwithin{prop}{section}

From the notion of the spatial variation in \cref{subsection::spatial}, we have the following proposition for the Newton direction computation, which is later used in the proof of \cref{prop::dabi}.

\begin{prop}
	If $\delta \qka{q_i}$ is the Newton direction for $\qka{q_i}$, $\qka{r_i}$ is the residue of the DEL equations \cref{eq::eval1,eq::eval2}, and \cref{assumption::abi} holds, the computation of the Newton direction $\delta\qka{q_i}$ is equivalent to requiring
	\begin{subequations}\label{eq::dvar}
		\begin{multline}\label{eq::dvar1}
		\ld\qka{\lmu_i} = \qka{\lM_i}\ld\qka{\lv_i} +\sum_{j\in\chd(i)}\big(\ld\qka{\lmu_j}-\ad_{\qka{\lmu_j}}^D\qka{\lS_j}\cdot\delta \qka{q_j}\big)\\
		\forall \alpha=0,\,1,\,\cdots,\,s,
		\end{multline}
		\begin{multline}\label{eq::dvar2}
		\ldh\qka{\lG_i}= \big(\tD_1\qka{\lF_i}+\ad_{\qka{\lF_i}}^D-\ad_{\qka{\lv_i}}\big)\cdot\qka{\le_i} +\tD_2\qka{\lF_i}\cdot\ld\qka{\lv_i} +\\
		\sum_{j\in\chd(i)}\big(\ldh\qka{\lG_j}-\ad_{\qka{\lG_j}}^D \qka{\lS_j}\cdot\delta\qka{q_j}\big)\quad
		\forall \alpha=0,\,1,\,\cdots,\,s-1,
		\end{multline}
		\begin{multline}\label{eq::dvar3}
		\ldh\qka{\lO_i} = \omega^\alpha\Delta t\cdot\big(\ad_{\qka{\lv_i}}^T\cdot\ld\qka{\lmu_i}+\ad_{\qka{\lmu_i}}^D\ld\qka{\lv_i}\big)+
		\ldh\qka{\lG_i}\\
		\forall \alpha=0,\,1,\,\cdots,\,s-1,
		\end{multline}
		\begin{multline}\label{eq::dvar4}
		\qka{\lS_i}^T\ldh\qka{\lO_i} +\sum_{\beta=0}^{s}\qab{a}\qkb{\lS_i}^T\ld\qkb{\lmu_i}+\tD_1\qka{Q_i}\cdot\delta\qka{q_i}+\\
		\tD_2\qka{Q_i}\cdot\delta\qka{\dot{q}_i} = -\qka{r_i}\quad\forall \alpha=0,\,1,\,\cdots,\,s-1.
		\end{multline}
	\end{subequations}
	in which $\ld\qka{\lv_i}$, $\ld\qka{\lmu_i}$, $\ldh\qka{\lG_i}$ and $\ldh\qka{\lO_i}$ are the spatial variations of $\qka{\lv_i}$, $\qka{\lmu_i}$, $\qka{\lG_i}$ and $\qka{\lO_i}$, respectively. Note that $\delta\qkz{q_i}=0$ and $\qkz{\le_i}=0$ though $\ld\qkz{\lv_i}\neq 0$.
\end{prop}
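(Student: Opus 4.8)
The plan is to read this proposition as the exact linearization of the DEL residue of \cref{propa::eval}. The Newton direction solves $\qk{\JJ}\cdot\delta\qk{\lq}=-\qk{r}$, and since $\qk{\JJ}\cdot\delta\qk{\lq}$ is nothing but the first-order variation of the residue induced by perturbing the control points by $\delta\qka{q_i}$, the whole statement reduces to computing $\delta\qka{r_i}$ and imposing $\delta\qka{r_i}=-\qka{r_i}$. I expect \cref{eq::dvar1,eq::dvar3} to be verbatim copies of the spatial-variation recursions \cref{eqa::ldmu,eqa::ldo}, so the only genuine work lies in \cref{eq::dvar4} and \cref{eq::dvar2}.

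First I would vary the residue $\qka{r_i}=\qka{\lS_i}^T\qka{\lO_i}+\sum_{\beta=0}^s\qab{a}\qkb{\lS_i}^T\qkb{\lmu_i}+\qka{Q_i}$ of \cref{eqa::eval2}. The product rule \cref{eqa::lddot} for spatial pairings gives $\delta(\qka{\lS_i}^T\qka{\lO_i})=(\ldh\qka{\lS_i})^T\qka{\lO_i}+\qka{\lS_i}^T\ldh\qka{\lO_i}$; because $\ldh\qka{\lS_i}=0$ by \cref{eqa::ldS}, every term carrying a variation of $\lS$ drops out and only $\qka{\lS_i}^T\ldh\qka{\lO_i}$ and $\sum_\beta\qab{a}\qkb{\lS_i}^T\ld\qkb{\lmu_i}$ survive. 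For the joint force I invoke \cref{assumption::abi} and the chain rule to write $\delta\qka{Q_i}=\tD_1\qka{Q_i}\cdot\delta\qka{q_i}+\tD_2\qka{Q_i}\cdot\delta\qka{\dot{q}_i}$. Setting the collected variation equal to $-\qka{r_i}$ produces \cref{eq::dvar4}; the $\alpha=0$ row carries the extra term $p_i^k$, whose variation is zero, so it is handled identically.

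The substantive step is \cref{eq::dvar2}. I would begin from the articulated-impulse recursion \cref{eqa::ldG}, whose leading term is $\ldh\qka{\lF_i}$, and expand this spatial variation in closed form. By the definition \cref{eqa::db2}, $\ldh\qka{\lF_i}=\delta\qka{\lF_i}+\ad_{\qka{\le_i}}^T\qka{\lF_i}$, and by \cref{assumption::abi} with the Lie-group chain rule of \cref{def::df}, $\delta\qka{\lF_i}=\tD_1\qka{\lF_i}\cdot\qka{\le_i}+\tD_2\qka{\lF_i}\cdot\delta\qka{\lv_i}$ (the control $u$ is frozen during a Newton step). I then eliminate the ordinary variation $\delta\qka{\lv_i}$ in favor of its spatial counterpart via $\delta\qka{\lv_i}=\ld\qka{\lv_i}+\ad_{\qka{\le_i}}\qka{\lv_i}$ from \cref{eqa::da2}, and finally convert the leftover frame-change artifacts using the dual-pairing identity $\ad_{\qka{\le_i}}^T\qka{\lF_i}=\ad_{\qka{\lF_i}}^D\qka{\le_i}$ together with the antisymmetry $\ad_{\qka{\le_i}}\qka{\lv_i}=-\ad_{\qka{\lv_i}}\qka{\le_i}$. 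Collecting the coefficient of $\qka{\le_i}$ into $\tD_1\qka{\lF_i}+\ad_{\qka{\lF_i}}^D-\tD_2\qka{\lF_i}\ad_{\qka{\lv_i}}$ and retaining $\tD_2\qka{\lF_i}\cdot\ld\qka{\lv_i}$ yields exactly the bracketed expression in \cref{eq::dvar2}, and reinserting it into \cref{eqa::ldG} completes that equation.

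The main obstacle is precisely this bookkeeping in \cref{eq::dvar2}: one must translate consistently between the ordinary variation $\delta$ and the spatial variations $\ld,\ldh$ while threading the force dependence through \cref{assumption::abi}, and apply the dual-operator identities in the correct order so the $\qka{\le_i}$-coefficient assembles without spurious terms. Everything else is routine, since \cref{eq::dvar1,eq::dvar3} are the already-proved recursions \cref{eqa::ldmu,eqa::ldo}; the boundary data $\delta\qkz{q_i}=0$ and $\qkz{\le_i}=0$ simply record that the first control point $\qkz{q}$ is held fixed by the update rule, which is also why $\ld\qkz{\lv_i}\neq0$ remains consistent.
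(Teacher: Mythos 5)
Your proposal is correct and follows essentially the same route as the paper's own proof: it treats \cref{eq::dvar1,eq::dvar3} as the already-established recursions \cref{eqa::ldmu,eqa::ldo}, derives \cref{eq::dvar2} by expanding $\ldh\qka{\lF_i}$ via \cref{eqa::db2}, \cref{assumption::abi} and the conversions $\delta\qka{\lv_i}=\ld\qka{\lv_i}+\ad_{\qka{\le_i}}\qka{\lv_i}$, $\ad_{\qka{\le_i}}^T\qka{\lF_i}=\ad_{\qka{\lF_i}}^D\qka{\le_i}$, and obtains \cref{eq::dvar4} by varying the residue of \cref{eqa::eval1,eqa::eval2} and using $\ldh\qka{\lS_i}=0$ with \cref{eqa::lddot}. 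Your collected coefficient $\tD_1\qka{\lF_i}+\ad_{\qka{\lF_i}}^D-\tD_2\qka{\lF_i}\ad_{\qka{\lv_i}}$ is the one the paper's proof (and \cref{eq::gammax2}) actually produces, which is the correct reading of the abbreviated term in the proposition statement.
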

\begin{proof}
	\allowdisplaybreaks
	\cref{eq::dvar1,eq::dvar3} are respectively the same as \cref{eqa::ldmu,eqa::ldo}, thus we only need to prove \cref{eq::dvar2,eq::dvar4}.\par
	From \cref{assumptiona::abi}, we have $\qka{\lF_i}=\qka{\lF_i}(\qka{g_i},\qka{\lv_i},\qka{u}) $, and since $\delta\qka{u_i}=0$, we obtain $\delta \qka{\lF_i}$ as
	$$\delta \qka{\lF_i}=\tD_1\qka{\lF_i}\cdot\qka{\le_i} + \tD_2\qka{\lF_i}\cdot\delta\qka{\lv_i}.  $$
	According to \cref{eqa::db2}, the spatial variation $\ldh\qka{\lF_i}$ is 
	$$\ldh\qka{\lF_i} = \tD_1\qka{\lF_i}\cdot\qka{\le_i} + \tD_2\qka{\lF_i}\cdot\delta\qka{\lv_i} + \ad_{\qka{\le_i}}^T\qka{\lF_i}. $$
	Since $\delta\qka{\lv_i}=\ld\qka{\lv_i}+\ad_{\qka{\le_i}}\qka{\lv_i}$, $\ad_{\qka{\lv_i}}\qka{\le_i}=-\ad_{\qka{\le_i}}\qka{\lv_i}$ as well as $\ad_{\qka{\le_i}}^T\qka{\lF_i}=\ad_{\qka{\lF_i}}^D\qka{\le_i}$, the equation above is equivalent to
	$$\ldh\qka{\lF_i}= \big(\tD_1\qka{\lF_i}+\ad_{\qka{\lF_i}}^D-\tD_2\qka{\lF_i}\ad_{\qka{\lv_i}}\big)\cdot\qka{\le_i} + \tD_2\qka{\lF_i}\cdot\ld\qka{\lv_i}. $$
	Substitute the equation above into \cref{eqa::ldG}, the result of which is \cref{eq::dvar2}.
	
	As for the proof of \cref{eq::dvar4}, from \cref{eq::eval2,eq::eval1}, the Newton direction $\delta\qka{q_i}$ requires that
	\begin{multline}\label{eqa::domega}
		\delta\big(\qka{S_i}^T\lO_i\big)+\sum_{\beta=0}^s\qab{a}\delta\big(\qkb{S_i}^T\qkb{\lmu_i}\big)+\tD_1\qka{Q_i}\cdot\delta\qka{q_i}+\\
\tD_2\qka{Q_i}\cdot\delta\qka{\dot{q}_i} = -\qka{r_i}\quad\forall \alpha=0,\,1,\,\cdots,\,s-1.
	\end{multline}
	As a result of \cref{eqa::lddot,eqa::ldS}, we have $\delta\big(\qka{\lS_i}^T\qka{\lmu_i}\big) =\qka{\lS_i}^T\ld\qka{\lmu_i}$ and
	$\delta\big(\qka{\lS_i}^T\qka{\lO_i}\big)=\qka{\lS_i}^T\ldh\qka{\lO_i}$, with which and \cref{eqa::domega}, we obtain \cref{eq::dvar4}. This completes the proof.
\end{proof}

In \cref{subsection::newton}, \cref{prop::dabi} to compute the Newton direction is stated as follows, for which note that the higher-order variational integrator has $s+1$ control points and the mechanical system has $n$ degrees of freedom.

\setcounter{prop}{1}
\counterwithout{prop}{section}
\begin{prop}\label{propa::dabi}
	For higher-order variational integrators of unconstrained mechanical systems, if \cref{assumption::abi} holds and $\qk{\JJ}^{-1}(\qk{\lq})$ exists, the Newton direction $\delta \qk{\lq}= -{\qk{\JJ}}^{-1}(\qk{\lq})\cdot\qk{r}$ can be computed with \cref{algorithm::dabi} in $O(s^3n)$ time.
\end{prop}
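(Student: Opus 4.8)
The plan is to prove correctness and the complexity bound separately, reducing the entire computation to the recursive linear system already at hand. By the proposition that reduces the Newton-direction computation to \cref{eq::dvar}, finding $\delta\qka{q_i}$ is equivalent to solving \cref{eq::dvar1,eq::dvar2,eq::dvar3,eq::dvar4} for the unknowns $\delta\qka{q_i}$ together with the induced spatial variations $\ld\qka{\lv_i}$, $\ld\qka{\lmu_i}$, $\ldh\qka{\lG_i}$, $\ldh\qka{\lO_i}$. The structure to exploit is that \cref{eq::dvar1,eq::dvar2} are articulated-body recursions over the tree (their sums run only over $\chd(i)$), while \cref{eqa::ldv,eqa::eta} express $\ld\qka{\lv_i}$ and $\qka{\le_i}$ through the parent quantities and the local $\delta\qka{q_i}$. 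This invites an articulated-body-style elimination: first a leaf-to-root backward pass expressing each body's momentum and impulse variations as affine functions of its \emph{parent} variables, then a root-to-leaf forward pass that resolves those variables explicitly.

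Concretely, I would posit the inductive ansatz that for every articulated body $i$ there are coefficient families $\qkap{D_i}$, $\qkan{G_i}$, $\qka{l_i}$ and $\qkap{\Pi_i}$, $\qkan{\Psi_i}$, $\qka{\zeta_i}$ with
\[
\ld\qka{\lmu_i}=\sum_{\rho}\qkap{D_i}\,\ld\qkp{\lv_{\pa(i)}}+\sum_{\nu}\qkan{G_i}\,\qkn{\le_{\pa(i)}}+\qka{l_i},
\]
and an analogous affine form for $\ldh\qka{\lG_i}$ with coefficients $\qkap{\Pi_i}$, $\qkan{\Psi_i}$, $\qka{\zeta_i}$. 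The base case is a leaf ($\chd(i)=\varnothing$), where \cref{eq::dvar1,eq::dvar2} collapse to $\qkap{D_i}=\sigma^{\alpha\rho}\qka{\lM_i}$, $\qka{l_i}=0$, and so on. For the inductive step I would substitute the children's affine expressions into \cref{eq::dvar1,eq::dvar2}, use \cref{eqa::ldv,eqa::eta} to rewrite each child's $\ld\qka{\lv_j}$ and $\qka{\le_j}$ through its parent $i$ and the still-unknown $\delta\qka{q_j}$, and then eliminate $\delta\qka{q_j}$ via its local solution. Regrouping by dependence on $\ld\qkp{\lv_{\pa(i)}}$, on $\qkn{\le_{\pa(i)}}$, and on the constant part yields exactly the recursions \cref{eq::mux,eq::gammax}; the auxiliary $\qkag{H_i}$, $\qkag{\Phi_i}$ package the coefficient of the local $\delta\qka{q_i}$ in $\ld\qka{\lmu_i}$ and $\ldh\qka{\lG_i}$, and the $\overline{\sigma}^{\alpha 0}$ terms reproduce the $-\ad^D_{\qka{\lmu_j}}\qka{\lS_j}\cdot\delta\qka{q_j}$ corrections from \cref{eqa::ldmu,eqa::ldG}.

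With the ansatz in place, the local Newton equation \cref{eq::dvar4} at joint $i$ becomes, after inserting $\ldh\qka{\lO_i}$ from \cref{eq::dvar3} and the affine forms of the momentum and impulse variations, a single $s\times s$ system $\qk{\Lambda_i}\cdot\delta\qk{q_i}=-(\qk{r_i}+\text{parent terms})$ whose matrix $\qk{\Lambda_i}$ and right-hand-side blocks $\qkap{\overline{\Theta}_i}$, $\qkan{\overline{\Xi}_i}$, $\qka{\overline{\xi}_i}$ are precisely those assembled in \cref{algorithm::dabi_b}. Inverting $\qk{\Lambda_i}$ gives $\delta\qkg{q_i}=\sum_\rho\qkgp{X_i}\ld\qkp{\lv_{\pa(i)}}+\sum_\nu\qkgn{Y_i}\qkn{\le_{\pa(i)}}+\qkg{y_i}$, which both closes the induction and defines $\qkgp{X_i}$, $\qkgn{Y_i}$, $\qkg{y_i}$ through $\qk{\Lambda_i}^{-1}$; the $\qk{\Lambda_i}$ are the pivot blocks of the induced block factorization of $\qk{\JJ}$, so the assumed existence of $\qk{\JJ}^{-1}(\qk{\lq})$ makes each of them invertible and the elimination well-defined. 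The forward pass of \cref{algorithm::dabi} then starts from $\qkn{\le_0}=0$, $\ld\qkp{\lv_0}=0$ and, visiting each body after its parent, substitutes the now-known parent variations to obtain $\delta\qkg{q_i}$ explicitly, after which \cref{eqa::eta,eqa::ldv} update $\qkn{\le_i}$ and $\ld\qkp{\lv_i}$.

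For complexity, the forward kinematic sweep and both tree passes visit each parent-child edge exactly once, so every sum over $\chd(i)$ contributes a total that is linear in $n$; at each body the per-node work is dominated by the $O(s^3)$ inversion of $\qk{\Lambda_i}$ and by the $O(s^3)$ accumulation of products such as $\sum_\gamma\qkag{H_i}\qkgp{X_i}$ over the $O(s^2)$ index pairs, giving the claimed $O(s^3 n)$ bound. The main obstacle is the inductive step: verifying that, after substituting the children's affine representations and eliminating each local $\delta\qka{q_j}$, the regrouped terms coincide \emph{term by term} with the recursions \cref{eq::mux,eq::gammax} — in particular, correctly tracking the spatial-variation operators $\ad^D$ and $\ad^T$ and confirming that the $\overline{\sigma}^{\alpha 0}$ bookkeeping exactly reproduces the child-displacement corrections in \cref{eqa::ldmu,eqa::ldG}.
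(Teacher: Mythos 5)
Your proposal is correct and follows essentially the same route as the paper's proof: reduce to the spatial-variation system \cref{eq::dvar}, posit an affine ansatz for $\ld\qka{\lmu_i}$ and $\ldh\qka{\lG_i}$ that is propagated leaf-to-root while each local $\delta\qk{\lq_i}$ is eliminated through the invertible $s\times s$ block $\qk{\Lambda_i}$, then resolve everything in a root-to-leaf pass; the $O(s^3n)$ accounting also matches. One small slip: you state the ansatz as affine in the \emph{parent's} variations $\ld\qkp{\lv_{\pa(i)}}$, $\qkn{\le_{\pa(i)}}$ while giving the leaf base case $\qkap{D_i}=\qap{\sigma}\qka{\lM_i}$, $\qka{l_i}=0$, which is inconsistent as written (it drops the $\delta\qka{q_i}$ contributions of \cref{eqa::ldv}); in the paper these coefficients multiply body $i$'s \emph{own} variations $\ld\qkp{\lv_i}$, $\qkn{\le_i}$, and the parent-referenced form of \cref{eq::mux,eq::gammax} emerges only after substituting \cref{eq::dgk,eq::dvk} and eliminating $\delta\qka{q_i}$ --- which is in fact exactly what your described inductive step does.
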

\begin{proof}
	The proof consists of proving the correctness and the $O(n)$ complexity of the algorithms.
	
For each $j\in\chd(i)$, we suppose that there exists $\qkap{D_j}$, $\qkan{G_j}$, $\qka{l_j}$ and $\qkap{\Pi_j}$, $\qkan{\Psi_j}$, $\qka{\zeta_j}$ such that
\begin{multline}\label{eq::dmu}
\ld\qka{\lmu_j} = \sum_{\rho=0}^{s}\qkap{D_j}\cdot\ld\qkp{\lv_j} + \sum_{\nu=1}^{s}\qkan{G_j}\cdot\qkn{\le_j} + \qka{l_j}\\
\forall \alpha=0,\,1,\,\cdots,\,s,
\end{multline}
\begin{multline}\label{eq::dgamma}
\ldh\qka{\lG_j} = \sum_{\rho=0}^{s}\qkap{\Pi_j}\cdot\ld\qkp{\lv_j} + \sum_{\nu=1}^{s}\qkan{\Psi_j}\cdot\qkn{\le_j} + \qka{\zeta_j}\\
\forall \alpha=0,\,1,\,\cdots,\,s-1.
\end{multline}
According to \cref{eqa::eta,eqa::ldv,eqa::dqdot}, $\ld\qkp{\lv_j}$ and $\qkn{\le_j}$ can be respectively computed as
\begin{equation}\label{eq::dgk}
\qkn{\le_j}=\qkn{\le_i} + \qkn{\lS_j}\cdot\delta\qjkn
\end{equation}
and
\begin{equation}\label{eq::dvk}
\ld\qkp{\lv_j}=\ld \qkp{\lv_i} + \qkp{\dot{\lS}{}_j}\cdot\delta\qkp{q_j} + \frac{1}{\Delta t}\qkp{\lS_j} \sum_{\gamma=1}^s \qpg{b} \cdot\delta\qkg{q_j}
\end{equation}
for which note that $\delta\qkz{q_j}=0$.
Substitute \cref{eq::dgk,eq::dvk} into \cref{eq::dmu}, algebraic manipulation shows that
\begin{equation}\label{eq::abmb}
\ld\qka{\lmu_j} = \sum_{\rho=0}^{s}\qkap{D_j}\cdot\ld\qkp{\lv_i} + \sum_{\nu=1}^{s}\qkan{G_j}\cdot\qkn{\le_i} + 
\qka{l_j} + \sum_{\gamma=1}^s \qkag{H_j}\delta\qkg{q_j},
\end{equation}
in which
$$\qkag{H_j}= \qkag{D_j}\dot{\lS}{}_j^{k,\gamma} +\qkag{G_j}\qkg{\lS_j} + \dfrac{1}{\Delta t}\sum\limits_{\rho=0}^s\qpg{b} \qkap{D_j}\qkp{\lS_j}.$$ In a similar way, using \cref{eq::dgamma,eq::dgk,eq::dvk}, we also have
\begin{multline}\label{eq::abFb}
\ldh\qka{\lG_j} = \sum_{\rho=0}^{s}\qkap{\Pi_j}\cdot\ld\qkp{\lv_i} + \sum_{\nu=1}^{s}\qkan{\Psi_j}\cdot\qkn{\le_i} + \qka{\zeta} + \sum_{\gamma=1}^s \qkag{\Phi_j}\delta\qkg{q_j}
\end{multline}
in which
\begin{equation}
\nonumber
\qkag{\Phi_j}= \qkag{\Pi_j}\qkg{\dot{\lS}{}_j} +\qkag{\Psi_j}\qkg{\lS_j} + \dfrac{1}{\Delta t}\sum\limits_{\rho=0}^s\qpg{b} \qkap{\Pi_j}\qkp{\lS_j}.
\end{equation}
From \cref{eqa::Sdot,eqa::ldo,eq::abmb,eq::abFb,eq::dvk} and $$\qka{\lS_j}^T\ad_{\qka{\lS_j}}^T\qka{\lmu_j}=\qka{\lS_j}^T\ad_{\qka{\lmu_j}}^D\qka{\lS_j}=0,$$
we obtain
\begin{equation}\label{eq::omega}
\qka{\lS_j}^T\ldh \qka{\lO_j} = \sum_{\rho=0}^s \qkap{\Theta_j}\cdot\ld \qkp{\lv_i} +\sum_{\nu=1}^{s}\qkan{\Xi} \cdot\qkn{\le_i}+ \qka{\xi_j}
\end{equation}
in which
\begin{align*}
\qkap{\Theta_j}=\;&\qa{w}\Delta t\cdot\big({\qka{\dot{\lS}\overline{\vphantom{S}}_j}}^T \qkap{D_j}+ \qap{\sigma}\qka{\lS_j}^T\ad_{\qka{\lmu_j}}^D\big)+\qka{\lS_j}^T\qkap{\Pi_j},\\
\qkan{\Xi_j}=\;&\qa{w}\Delta t\cdot{\qka{\dot{\lS}\overline{\vphantom{S}}_j}{}}^T \qkan{G_j} + \qka{\lS_j}^T\qkan{{\Psi}_j},\\
\qka{\xi_j}\;\,= \;& \qa{w}\Delta t\cdot{\qka{\dot{\lS}\overline{\vphantom{S}}_j}{}}^T  \qka{l_j} + \qka{\lS_j}^T\qka{\zeta_j} +\sum_{\gamma=1}^s\Big[ \qa{w}\Delta t\cdot \big({\qka{\dot{\lS}\overline{\vphantom{S}}_j{}}}^T\qkag{H_j}+\\
& \qag{\sigma}\qka{\lS_j}^T\ad_{\qka{\lmu_j}}^D\qka{\dot{\lS}\overline{\vphantom{S}}_j}\big)+\qka{\lS_j}^T\qkag{\Phi_j}\Big]\delta\qkg{q_j},
\end{align*}
and note that $\qap{\sigma}$ is given in \cref{eq::sigma} of \cref{algorithm::dabi_b}.
Substituting \cref{eq::abmb,eq::omega,eqa::dqdot} into \cref{eq::dvar4}, we obtain
\begin{multline}\label{eqa::error}
\sum_{\rho=0}^s \qkap{\overline{\Theta}_j}\cdot\ld \qkp{\lv_i}+\sum_{\nu=1}^{s}\qkan{\overline{\Xi}_j} \cdot\qkn{\le_i}+\qka{\overline{\xi}_j}+\sum_{\gamma=1}^s\qkag{\Lambda_{j}}\cdot\delta\qkg{q_j} = -\qka{r_j}\\
\forall \alpha=0,\,1,\,\cdots,\,s-1.
\end{multline}
in which
\begin{align}
&\nonumber\qkap{\overline{\Theta}_j}=\qkap{\Theta_j} + \sum_{\beta=0}^s \qab{a}\qkb{\lS_j}^T\qkbp{D_j},\\
&\nonumber\qkan{\overline{\Xi}_j}=\qkan{\Xi_j} +  \sum_{\beta=0}^s \qab{a}\qkb{\lS_j}^T \qkbn{G_j},\\
&\nonumber\qka{\overline{\xi}_j} \;\;= \qa{w}\Delta t\cdot{\qka{\dot{\lS}\overline{\vphantom{S}}_j}{}}^T  \qka{l_j} + \qka{\lS_j}^T\qka{\zeta_j} +\sum_{\beta=0}^{s}\qab{a}\qkb{\lS_j}^T \qkb{l_j},\\
&\begin{aligned}
\nonumber\qkag{\Lambda_{j}}= &\qa{w}\Delta t\cdot\qka{\dot{\lS}\overline{\vphantom{S}}_j}^T \qkag{H_j}+\qka{\lS_j}^T\qkag{\Phi_j} +\sum_{\beta=0}^s \qab{a} \qkb{\lS_j}^T\qkbg{H_j}+\\
& {\sigma}^{\alpha\gamma}\left(\tD_1\qka{Q_j}+\qa{w}\Delta t\cdot \qka{\lS_j}^T\ad_{\qka{\lmu_j}}^D\qka{\dot{\lS}\overline{\vphantom{S}}_j}\right) +\frac{1}{\Delta t} \qag{b}\cdot\tD_2\qka{Q_j}.
\end{aligned}
\end{align}
For notational convenience, we define $\qka{\Delta_j}$ to be
\begin{multline}\label{eq::delta}
\qka{\Delta_j}= \sum_{\rho=0}^s \qkap{\overline{\Theta}_j}\cdot\ld \qkp{\lv_i} + \sum_{\nu=1}^{s}\qkan{\overline{\Xi}_j} \cdot\qkn{\le_i} + \qka{\overline{\xi}_j}\\ \forall \alpha=0,\,1,\,\cdots,\,s-1.
\end{multline}
such that \cref{eqa::error} is rewritten as
\begin{equation}\label{eqa::error2}
\sum_{\gamma=1}^s\qkag{\Lambda_{j}}\cdot\delta\qkg{q_j} = -\qka{r_j}-\qka{\Delta_j}\quad\quad\forall \alpha=0,\,1,\,\cdots,\,s-1.
\end{equation}
In addition, if we further define $\qk{\Lambda_{j}}$, $\qk{r_j}$, $\qk{\Delta_j}$ and $\delta\qk{\lq_j}$ respectively as
\begin{align*}
&\qk{\Lambda_{j}}=\begin{bmatrix}
\qkag{\Lambda_{j}}
\end{bmatrix}\in\R^{s\times s},\\
&\qk{r_j}=\begin{bmatrix}
r_j^{k,0} & r_j^{k,1} &\cdots & r_j^{k,s-1}
\end{bmatrix}^T\in\R^s,\\
&\qk{\Delta_j}=\begin{bmatrix}
\Delta_j^{k,0} & \Delta_j^{k,1} &\cdots & \Delta_j^{k,s-1}
\end{bmatrix}^T\in\R^s,\\
&\delta\qk{\lq_j}=\begin{bmatrix}
\delta q_j^{k,1} & \delta q_j^{k,2} &\cdots & \delta q_j^{k,s}
\end{bmatrix}^T\in\R^s,
\end{align*}
in which $0\leq \alpha\leq s-1$ and $1\leq \gamma\leq s$, then \cref{eqa::error2} is equivalent to requiring
\begin{equation}\label{eqa::error1}
\qk{\Lambda_{j}}\cdot\delta\qk{\lq_j}=-\qk{r_j}-\qk{\Delta_j}.
\end{equation}
in which $\qk{\Lambda_{j}}$ is invertible since ${\JJ^k}^{-1}(\lq^k)$ exists. From \cref{eqa::error1}, we obtain
$$\delta\qk{\lq_j}= -\qk{\Lambda_{j}}^{-1}(\qk{r_j}+\qk{\Delta_j}).$$
If $\qk{\Lambda_{j}}^{-1}$ is explicitly written as $\qk{\Lambda_{j}}^{-1}=\begin{bmatrix}
\qkgvp{\lLam_j}
\end{bmatrix}\in\R^{s\times s}$ in which $1\leq \gamma\leq s$ and $0\leq\varrho\leq s-1$, expanding the equation above, we obtain
\begin{equation}\label{eq::qsol}
\delta \qkg{q_j}= -\sum_{\varrho=0}^{s-1}\qkgvp{\lLam_j}\left(\qkvp{r_j}+\qkvp{\Delta_j}\right)\quad\; \forall \gamma=1,\,2,\,\cdots,\,s.
\end{equation}
Substitute \cref{eq::delta} into \cref{eq::qsol}, the result is
\begin{equation}\label{eq::sol_q}
\delta\qkg{q_j}= \sum_{\rho=0}^{s}\qkgp{X_j}\cdot\ld\qkp{\lv_i} +\sum_{\nu=1}^{s}\qkgn{Y_j}\cdot\qkn{\le_i}+\qkg{y_j}
\end{equation}
in which
\begin{align*}
\qkgp{X_j}&=-\sum_{\varrho=0}^{s-1}\qkgvp{\lLam_{j}}\cdot\qkvpp{\overline{\Theta}_j},\\
\qkgn{Y_j}&=-\sum_{\varrho=0}^{s-1}\qkgvp{\lLam_{j}}\cdot\qkvpn{\overline{\Xi}_j},\\
\qkg{y_j}&=-\sum_{\varrho=0}^{s-1}\qkgvp{\lLam_{j}}\left(\qkvp{r_j}+\qkvp{\overline{\xi}_j}\right).
\end{align*}
Making use of \cref{eq::abmb,eq::sol_q} and canceling out $\delta \qkg{q_j}$, we obtain
\begin{equation}\label{eqa::dmuj}
\begin{aligned}
\ld\qka{\lmu_j}-\ad_{\qka{\lmu_j}}^D\qka{\lS_j}\cdot\delta\qka{q_j}=&\sum_{\rho=0}^s\qkp{\lD_j}\cdot\ld\qkp{\lv_i}+\sum_{\nu=1}^{s}\qkan{\overline{G}_j}\cdot\qkn{\le_i} + \qka{\overline{l}_j}
\end{aligned}
\end{equation}
in which $\alpha=0,\,1,\,\cdots,\, s$, and
\begin{subequations}\label{eqa::dmuj2}
\begin{align}
&\qkp{\lD_j}=\qkp{D_j}+\sum_{\gamma=1}^{s}\qkag{H_j}\qkgp{X_j}-\overline{\sigma}^{\alpha 0}\ad_{\qka{\lmu_j}}^D \qka{\lS_j}\qkap{X_j},\\
&\qkn{\overline{G}_j}=\qkan{G_j}+\sum_{\gamma=1}^{s}\qkag{H_j}\qkgn{Y_j} - \overline{\sigma}^{\alpha 0}\ad_{\qka{\lmu_j}}^D \qka{\lS_j}\qkan{Y_j},\\
&\qka{\overline{l}_j}=\qka{l_j}+\sum_{\gamma=1}^s\qkag{H_j}\qkg{y_j}- \overline{\sigma}^{\alpha 0}\ad_{\qka{\lmu_j}}^D \qka{\lS_j}\qka{y_j},
\end{align}
\end{subequations}
and note that $\overline{\sigma}^{\alpha 0}$ is given in \cref{eq::sigma} of \cref{algorithm::dabi_b}. In a similar way, using \cref{eq::abFb,eq::sol_q}, we obtain
\begin{equation}\label{eqa::dfj}
\ldh\qka{\lG_j}-\ad_{\qka{\lG_j}}^D\qka{\lS_j}\cdot\delta \qka{q_j}=\sum_{\rho=0}^{s}\qkap{\overline{\Pi}_j}\cdot\ld\qkp{\lv_j} + \sum_{\nu=1}^{s}\qkan{\overline{\Psi}_j}\cdot\qkn{\le_j} + \qka{\overline{\zeta}_j}
\end{equation}
in which $\alpha=1,\,2,\,\cdots,\,s$, and
\begin{subequations}\label{eqa::dfj2}
\begin{align}
&\qkap{\overline{\Pi}_j}=\qkap{\Pi_j}+\sum_{\gamma=1}^s\qkag{\Phi_j}\qkgp{X_j}-\overline{\sigma}^{\alpha 0}\ad_{\qka{\lG_j}}^D \qka{\lS_j}\qkap{X_j},\\
&\qkan{\overline{\Psi}_j}=\qkan{{\Psi}_j}+\sum_{\gamma=1}^s\qkag{\Phi_j}\qkgn{Y_j}-\overline{\sigma}^{\alpha 0}\ad_{\qka{\lG_j}}^D\qka{\lS_j}\qkan{Y_j},\\
&\qka{\overline{\zeta}_j}=\qka{\zeta_j}+\sum_{\gamma=1}^s\qkag{\Phi_j}\qkg{y_j}-
\overline{\sigma}^{\alpha 0}\ad_{\qka{\lG_j}}^D \qka{\lS_j}\qka{y_j}.
\end{align}
\end{subequations}
Finally, for each $j\in\chd(i)$, substituting \cref{eqa::dmuj,eqa::dfj} respectively into \cref{eq::dvar1,eq::dvar2} and applying \cref{eqa::dmuj2,eqa::dfj2} to expand $\qkp{\lD_j}$, $\qkn{\overline{G}_j}$, $\qka{\overline{l}_j}$ and $\qkap{\overline{\Pi}_j}$, $\qkan{\overline{\Psi}_j}$, $\qka{\overline{\zeta}_j}$, we respectively obtain
$\qkp{D_i}$, $\qkn{G_i}$, $\qka{l_i}$ and $\qkap{\Pi_i}$, $\qkan{\Psi_i}$, $\qka{\zeta_i}$ as \cref{eq::mux,eq::gammax} of \cref{algorithm::dabi_b} such that
\begin{multline}\label{eq::dmu3}
\ld\qka{\lmu_i} = \sum_{\rho=0}^{s}\qkap{D_i}\cdot\ld\qkp{\lv_i} + \sum_{\nu=1}^{s}\qkan{G_i}\cdot\qkn{\le_i} + \qka{l_i}\\
\forall \alpha=0,\,1,\,\cdots,\,s,
\end{multline}
\vspace{-2em}
\begin{multline}\label{eq::dgamma3}
\ldh\qka{\lG_i} = \sum_{\rho=0}^{s}\qkap{\Pi_i}\cdot\ld\qkp{\lv_i} + \sum_{\nu=1}^{s}\qkan{\Psi_i}\cdot\qkn{\le_i} + \qka{\zeta_i}\\
\forall \alpha=0,\,1,\,\cdots,\,s-1.
\end{multline}
In particular, note that even if rigid body $i$ is the leaf node of the tree representation whose $\chd(i)=\O$, there still exists $\qkp{D_i}$, $\qkn{G_i}$, $\qka{l_i}$ and $\qkap{\Pi_i}$, $\qkan{\Psi_i}$, $\qka{\zeta_i}$ from \cref{eq::mux,eq::gammax} of \cref{algorithm::dabi_b}. Moreover, as long as $\qkp{D_i}$, $\qkn{G_i}$, $\qka{l_i}$ and $\qkap{\Pi_i}$, $\qkan{\Psi_i}$, $\qka{\zeta_i}$ are given for each rigid body $i$, we can further obtain $\qkap{X_i}$, $\qkan{Y_i}$, $\qka{y_i}$ following lines 3 to 9 of \cref{algorithm::dabi_b}.

In summary, for each rigid body $i$, we have shown that $\qkap{X_i}$, $\qkan{Y_i}$, $\qka{y_i}$ as well as $\qkp{D_i}$, $\qkn{G_i}$, $\qka{l_i}$ and $\qkap{\Pi_i}$, $\qkan{\Psi_i}$, $\qka{\zeta_i}$ are computable through the backward pass by \cref{algorithm::dabi_b}, and $\delta\qka{q_i}$ as well as $\qka{\le_i}$ and $\ld\qka{\lv_i}$ are computable through the forward pass by lines 4 to 15 of \cref{algorithm::dabi}, which proves the correctness of the algorithms.

In regard to the complexity, \cref{algorithm::dabi_b} has $O(s^2)+O(s^3)$ complexity since there are $O(s^2)$ quantities and the computation of $\qka{\Lambda_{i}}^{-1}$ takes $O(s^3)$ time, and thus the backward pass by lines 1 to 3 of \cref{algorithm::dabi} totally takes $O(s^3n+s^2n)$ time. Moreover, in lines 4 to 15 of \cref{algorithm::dabi}, the forward pass takes $O(s^2n)$ time. As a result, the overall complexity of \cref{algorithm::dabi} is $O(s^3n)$, which proves the complexity of the algorithms.
\end{proof}
\subsection{Proof of \cref{prop::dkdq}}\label{subsection::pp3}
\begin{prop}\label{propa::dkdq}
	For the kinetic energy $K(q,\dot{q})$ of a mechanical system, $\frac{\partial^2 K }{\partial \dot{q}^2}$, $\frac{\partial^2 K }{\partial \dot{q}\partial q}$, $\frac{\partial^2 K }{\partial q\partial \dot{q}}$, $\frac{\partial^2 K}{\partial q^2}$ can be recursively computed with \cref{algorithm::dkdv} in $O(n^2)$ time.
\end{prop}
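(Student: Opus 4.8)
The plan is to reduce the four Hessian blocks of $K$ to the already-established first derivatives and then to exploit the articulated-body structure to collapse the sums over descendants. First I would start from the kinetic energy $K(q,\dot q)=\frac12\sum_{j=1}^n\lv_j^T\lM_j\lv_j$ of \cref{eqa::K} and recall from the proof of \cref{prop::eval} that $\frac{\partial K}{\partial\dot q_i}=\lS_i^T\lmu_i$ and $\frac{\partial K}{\partial q_i}=\dot{\lS}_i^T\lmu_i$ (\cref{eqa::dKdqdot,eqa::dKdq}), where $\lmu_i=\sum_{l\in\des(i)\cup\{i\}}\lM_l\lv_l$. The four second derivatives are then obtained by differentiating these two scalars once more with respect to $\dot q_j$ or $q_j$. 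Since the Hessian is symmetric, I only need the entries with $j\in\anc(i)\cup\{i\}$, which is exactly the index range of the inner loop of \cref{algorithm::dkdv}.

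The computational heart is differentiating $\lmu_i$. Using the tree identities $\frac{\partial\lv_l}{\partial\dot q_j}=\lS_j$, $\frac{\partial\lv_l}{\partial q_j}=\ad_{\lS_j}(\lv_l-\lv_j)$ and $\frac{\partial\lM_l}{\partial q_j}=-\ad_{\lS_j}^T\lM_l-\lM_l\ad_{\lS_j}$ (\cref{eqa::dvdqdot,eqa::dvdq,eqa::dMdq}), which hold for \emph{every} descendant $l$ precisely because $j$ is an ancestor of all of them, I would establish $\frac{\partial\lmu_i}{\partial\dot q_j}=\M_i\lS_j$ and $\frac{\partial\lmu_i}{\partial q_j}=-\ad_{\lS_j}^T\lmu_i+\M_i\dot{\lS}_j$. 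The articulated inertia $\M_i=\sum_{l\in\des(i)\cup\{i\}}\lM_l$ appears here because the common factor $\lS_j$ (respectively $\ad_{\lS_j}$) pulls out of the descendant sum. Substituting these, together with $\frac{\partial\lS_i}{\partial q_j}=\ad_{\lS_j}\lS_i$ (\cref{eqa::dSdq}) and $\dot{\lS}_i=\ad_{\lv_i}\lS_i=-\ad_{\lS_i}\lv_i$ (\cref{eqa::Sdot}), into the differentiated first derivatives yields the claimed forms $\lS_j^T\M_i^A$, $\dot{\lS}_j^T\M_i^A$, $\lS_j^T\M_i^B$, $\dot{\lS}_j^T\M_i^B$ once the $\ad_{\lS_j}^T\lmu_i$ contributions cancel. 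Throughout I would use the duality $\ad_{\lS_j}^T\lmu_i=\ad_{\lmu_i}^D\lS_j$ and the antisymmetry $v_1^T\ad_F^D v_2=-v_2^T\ad_F^D v_1$ to put $\M_i^B=\M_i\dot{\lS}_i-\ad_{\lmu_i}^D\lS_i$ in the algorithm's form.

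The main obstacle is the $q$--$q$ block $\frac{\partial^2 K}{\partial q_i\partial q_j}=\dot{\lS}_j^T\M_i^B$. Differentiating $\dot{\lS}_i^T\lmu_i$ requires $\frac{\partial\dot{\lS}_i}{\partial q_j}=\ad_{\ad_{\lS_j}(\lv_i-\lv_j)}\lS_i+\ad_{\lv_i}\ad_{\lS_j}\lS_i$, and forcing the resulting terms to collapse demands a Jacobi-identity manipulation of the nested $\ad$ operators, after which several $\ad_{\lS_j}^T\lmu_i$ pieces cancel against those produced by $\frac{\partial\lmu_i}{\partial q_j}$. I would also verify that every formula degenerates correctly at $j=i$ (where $\ad_{\lS_i}\lS_i=0$ and $\frac{\partial\lv_i}{\partial q_i}=0$), so that the single index range $j\in\anc(i)\cup\{i\}$ can be treated uniformly.

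Finally I would establish the recursions and the cost. The quantities $\lmu_i$ and $\M_i$ satisfy the leaf-to-root recursions $\lmu_i=\lM_i\lv_i+\sum_{j\in\chd(i)}\lmu_j$ and $\M_i=\lM_i+\sum_{j\in\chd(i)}\M_j$ directly from their definitions as sums over $\des(i)\cup\{i\}$, which is what the backward pass of \cref{algorithm::dkdv} computes; the forward pass supplying $g_i,\lS_i,\lv_i,\dot{\lS}_i,\lM_i$ is $O(n)$. For each $i$ the inner loop visits every $j\in\anc(i)\cup\{i\}$ with $O(1)$ work per Hessian block, so the total is $\sum_i(|\anc(i)|+1)=O(n^2)$ in the worst case of a kinematic chain, giving the stated $O(n^2)$ bound.
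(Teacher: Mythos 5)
Your proposal is correct and follows essentially the same route as the paper's proof: reduce to the first derivatives $\lS_i^T\lmu_i$ and $\dot{\lS}{}^T_i\lmu_i$, restrict to $j\in\anc(i)\cup\{i\}$ by Hessian symmetry, differentiate through the tree identities for $\lv$, $\lM$, $\lS$, $\dot{\lS}$ so that the articulated inertia $\M_i$ and the quantities $\M_i^A$, $\M_i^B$ emerge, and count $O(n)$ for the two passes plus $O(n^2)$ for the ancestor loops. The only cosmetic difference is that you differentiate $\lmu_i$ as a single object rather than term-by-term over $\lM_{i'}\lv_{i'}$ as the paper does, which changes nothing substantive.
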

\begin{proof}
	\allowdisplaybreaks
According to \cref{eqa::dKdq,eqa::dKdqdot,eqa::abmom}, we have
\begin{equation}\label{eqa::dKdqdot2}
\frac{\partial K}{\partial \dot{q}_i}= \lS_i^T\Big(\lM_i\lv_i+\sum_{i'\in\des(i)}\lM_{i'}\lv_{i'}\Big)
\end{equation}
and
\begin{equation}\label{eqa::dKdq2}
\frac{\partial K}{\partial q_i}={\dot{\lS}}\overline{\vphantom{S}}_i^T\Big(\lM_i\lv_i+\sum_{i'\in\des(i)}\lM_{i'}\lv_{i'}\Big).
\end{equation}
Since $\lM_i\lv_i$, $\lS_i$ and $\dot{\lS}_i$ only depend on $q_j$ and $\dot{q}_j$ for $j\in \anc(i)\cup\{i\}$, it is straightforward to show from \cref{eqa::dKdq2,eqa::dKdqdot2} that the derivatives $\frac{\partial^2 K}{\partial \dot{q}_i\partial\dot{q}_j}$, $\frac{\partial^2 K}{\partial \dot{q}_i\partial q_j}$,  $\frac{\partial^2 K}{\partial q_i\partial \dot{q}_j}$ and $\frac{\partial^2 K}{\partial q_i \partial q_j}$ can be respectively computed as
\begin{equation}\label{eqa::dK2dqdotdqdot}
\frac{\partial^2 K}{\partial \dot{q}_i\partial \dot{q}_j}=
\begin{cases}
\frac{\partial }{\partial \dot{q}_j}\left(\frac{\partial K}{\partial \dot{q}_i}\right) & j\in \anc(i)\cup\{i\},\vspace{0.25em}\\
\frac{\partial^2 K}{\partial \dot{q}_j\partial \dot{q}_i} & j\in \des(i),\\
0 &\text{otherwise},
\end{cases}
\end{equation}
\begin{equation}\label{eqa::dK2dqdotdq}
\frac{\partial^2 K}{\partial \dot{q}_i\partial q_j}=
\begin{cases}
\frac{\partial }{\partial q_j}\left(\frac{\partial K}{\partial \dot{q}_i}\right) & j\in \anc(i)\cup\{i\},\vspace{0.25em}\\
\frac{\partial^2 K}{\partial q_j\partial \dot{q}_i} & j\in \des(i),\\
0 &\text{otherwise},
\end{cases}
\end{equation}
\begin{equation}\label{eqa::dK2dqdqdot}
\frac{\partial^2 K}{\partial q_i\partial \dot{q}_j}=
\begin{cases}
\frac{\partial }{\partial \dot{q}_j}\left(\frac{\partial K}{\partial q_i}\right) & j\in \anc(i)\cup\{i\},\vspace{0.25em}\\
\frac{\partial^2 K}{\partial \dot{q}_j\partial q_i} & j\in \des(i),\\
0 &\text{otherwise},
\end{cases}
\end{equation}
\begin{equation}\label{eqa::dK2dqdq}
\frac{\partial^2 K}{\partial q_i\partial q_j}=
\begin{cases}
\frac{\partial }{\partial q_j}\left(\frac{\partial K}{\partial q_i}\right) & j\in \anc(i)\cup\{i\},\vspace{0.25em}\\
\frac{\partial^2 K}{\partial q_j\partial q_i} & j\in \des(i),\\
0 &\text{otherwise}.
\end{cases}
\end{equation}
Therefore, we only need to consider the derivatives for $j\in\anc(i)\cup\{i\}$, whereas the derivatives for $j\notin\anc(i)\cup\{i\}$ are computed from \cref{eqa::dK2dqdq,eqa::dK2dqdqdot,eqa::dK2dqdotdq,eqa::dK2dqdotdqdot}. In addition, if $j\in\anc(i)\cup\{i\}$, using \cref{eqa::dMdq1,eqa::dvdq,eqa::Sdot,eqa::dvdqdot1}, we obtain
\begin{align}
\label{eqa::dMvdqdot}&\frac{\partial\lM_i\lv_i}{\partial \dot{q}_j}=\lM_i{\lS}_j,\\[0.5em]
\label{eqa::dMvdq}&\nonumber\frac{\partial\lM_i\lv_i}{\partial q_j}=-\ad_{\lS_j}^T\lM_i\lv_i-\lM_i\ad_{\lS_j}\lv_i+\lM_i\ad_{\lS_j}(\lv_i-\lv_j)\\
&\quad\quad\quad=\lM_i\dot{\lS}_j-\ad_{\lS_j}^T\lM_i\lv_i\\[0.5em]
\label{eqa::dSdotdqdot}&\frac{\partial\dot{S}_i}{\partial \dot{q}_j}=\ad_{\lS_j}\lS_i,\\[0.5em]
\label{eqa::dSdotdq}&\frac{\partial\dot{S}_i}{\partial q_j}=\ad_{\lv_i}\ad_{\lS_j}\lS_i+\ad_{\ad_{\lS_j}\left(\lv_i-\lv_j\right)}\lS_i.
\end{align}
For notational clarity, we define $\lmu_i$, $\M_i$, $\M_i^A$ and $\M_i^B$ as
\begin{align}
&\label{eqa::mu_i}\lmu_i = \lM_i\lv_i+\sum_{j\in\des(i)}\lM_j\lv_j=\lM_i\lv_i + \sum_{j\in\chd(i)}\lmu_j,\\
&\label{eqa::M_i}\M_i = \lM_i+\sum_{j\in\des(i)}\lM_j=\lM_i + \sum_{j\in\chd(i)}\M_j,\\
&\label{eqa::Ma}\M_i^A = \M_i \lS_i,\\
&\label{eqa::Mb}\M_i^B = \M_i\dot{\lS}_i-\ad_{\lmu_i}^D\lS_i
\end{align}
which will be used in the derivation of $\frac{\partial^2 K}{\partial \dot{q}_i\partial \dot{q}_j}$, $\frac{\partial^2 K}{\partial \dot{q}_i\partial q_j}$, $\frac{\partial^2 K}{\partial q_i\partial \dot{q}_j}$ and $\frac{\partial^2 K}{\partial q_i\partial q_j}$.\\

\vspace{0.5em}
\noindent1) $\frac{\partial^2 K}{\partial \dot{q}_i\partial \dot{q}_j}$\par
\vspace{0.5em}
If $j\in\anc(i)\cup\{i\}$, from \cref{eqa::dKdqdot2,eqa::dMvdqdot,eqa::M_i,eqa::Ma}, it is simple to show that
\begin{equation}\label{dK2dqdot2}
\begin{aligned}
	\frac{\partial^2 K}{\partial \dot{q}_i\partial \dot{q}_j}&=\frac{\partial }{\partial \dot{q}_j}\left(\frac{\partial K}{\partial \dot{q}_i}\right)\\
		&=\lS_i^T\Big(\lM_i\lS_j+\sum_{i'\in \des(i)}\lM_{i'}\lS_j\Big)\\
		&=\lS_i^T\Big(\lM_i+\sum_{i'\in \des(i)}\lM_{i'}\Big)\lS_j\\
		&=\lS_j^T\M_i\lS_i\\
		&=\lS_j^T\M_i^A.
\end{aligned}
\end{equation}\par
\noindent2) $\frac{\partial^2 K}{\partial \dot{q}_i\partial {q}_j}$\par
\vspace{0.5em}
If $j\in\anc(i)\cup\{i\}$, using \cref{eqa::dKdqdot2,eqa::dSdq1,eqa::dMvdq,eqa::M_i,eqa::Ma}, we obtain
\begin{equation}\label{dK2dqdotdq}
\begin{aligned}
	\frac{\partial^2K}{\partial \dot{q}_i\partial q_j}&=\frac{\partial }{\partial q_j}\left(\frac{\partial K}{\partial \dot{q}_i}\right)\\
	&=\sum_{i'\in\des(i)\cup\{i\}}\Big(\lS_i^T\lM_{i'}\dot{\lS}_j-\lS_i^T\ad_{\lS_j}^T\lM_{i'}\lv_{i'}+\lS_i^T\ad_{\lS_j}^T\lM_{i'}\lv_{i'}\Big)\\
	&=\lS_i^T\Big(\lM_i+\sum_{i'\in \des(i)}\lM_{i'}\Big)\dot{\lS}_j\\
	&=\dot{\lS}\overline{\vphantom{S}}_j^T\M_i\lS_i\\
	&=\dot{\lS}\overline{\vphantom{S}}_j^T\M_i^A.
\end{aligned}
\end{equation}\par
\noindent3) $\frac{\partial^2 K}{\partial \dot{q}_i\partial {q}_j}$\par
\vspace{0.5em}
If $j\in\anc(i)\cup\{i\}$, using \cref{eqa::dKdq2,eqa::dMvdqdot,eqa::M_i,eqa::dSdotdqdot,eqa::mu_i}, we obtain
\begin{equation}
\nonumber
\begin{aligned}
\frac{\partial^2K}{\partial q_i\partial \dot{q}_j}&=\frac{\partial }{\partial \dot{q}_j}\left(\frac{\partial K}{\partial q_i}\right)\\
&=\sum_{i'\in\des(i)\cup\{i\}}\Big(\dot{\lS}\overline{\vphantom{S}}_i^T\lM_{i'}\lS_j+\lS_i^T\ad_{\lS_j}^T\lM_{i'}\lv_{i'}\Big)\\
&=\lS_j^T\Big(\lM_i+\sum_{i'\in \des(i)}\lM_{i'}\Big)\dot{\lS}_i+\Big(\lM_i\lv_i+\sum_{i'\in \des(i)}\lM_{i'}\lv_{i'}\Big)^T\ad_{\lS_j}\lS_i\\
&=\lS_j^T\M_i\dot{\lS}_i + \lmu_i^T\ad_{\lS_j}\lS_i.\\
\end{aligned}
\end{equation}
Then simplify the equation above with $\lmu_i^T\ad_{{\lS}_j}\lS_i=-{\lS}\overline{\vphantom{S}}_j^T\ad_{\lmu_i}^D\lS_i$ and \cref{eqa::Mb}, the result is
\begin{equation}\label{dK2dqdqdot}
	\frac{\partial^2K}{\partial q_i\partial \dot{q}_j}={\lS_j^T}\left(\M_i\dot{\lS}_i-\ad_{\lmu_i}^D\lS_i\right)=\lS_j^T\M_i^B.
\end{equation}

\noindent4) $\frac{\partial^2 K}{\partial q_i\partial q_j}$\par
\vspace{0.5em}
\indent If $j\in\anc(i)\cup\{i\}$, using \cref{eqa::dMvdq,eqa::dMvdqdot,eqa::dSdotdq,eqa::dKdq2,eqa::M_i,eqa::mu_i,eqa::Sdot} and $\ad_{\ad_{\lv_i}\lS_j}=\ad_{\lv_i}\ad_{\lS_j}-\ad_{\lS_j}\ad_{\lv_i}$,
we obtain
\begin{equation}
\nonumber
\begin{aligned}
\frac{\partial^2 K}{\partial q_i\partial q_j}&=\frac{\partial }{\partial q_j}\left(\frac{\partial K}{\partial q_i}\right)\\
&=\sum_{i'\in\des(i)\cup\{i\}}\bigg[\left(\lM_{i'}\lv_{i'}\right)^T\Big(\ad_{\lv_i}\ad_{\lS_j}\lS_i-\ad_{\lS_j}\ad_{\lv_i}\lS_i+\\
&\quad\quad\quad\quad\hspace{15em}\ad_{\ad_{\lS_j}\left(\lv_i-\lv_j\right)}\lS_i\Big)+\dot{\lS}\overline{\vphantom{S}}_j^T\lM_{i'}\dot{\lS}_i\bigg]\\
&=\dot{\lS}\overline{\vphantom{S}}_j^T\Big(\lM_i+\sum_{i'\in \des(i)}\lM_{i'}\Big)\dot{\lS_i}+\Big(\lM_i\lv_i+\sum_{i'\in \des(i)}\lM_{i'}\lv_{i'}\Big)^T\ad_{\dot{\lS_j}}\lS_i\\
&=\dot{\lS}\overline{\vphantom{S}}_j^T\M_i\dot{\lS}_i+\lmu_i^T\ad_{\dot{\lS}_j}\lS_i.
\end{aligned}
\end{equation}
Similar to $\frac{\partial^2 K}{\partial \dot{q}_i\partial {q}_j}$, using $\lmu_i^T\ad_{\dot{\lS}_j}\lS_i=-\dot{\lS}\overline{\vphantom{S}}_j^T\ad_{\lmu_i}^D\lS_i$ and \cref{eqa::Mb}, we obtain
\begin{equation}\label{dK2dq2}
\frac{\partial^2 K}{\partial q_i\partial q_j} =\dot{\lS}\overline{\vphantom{S}}_j^T\left(\M_i\dot{\lS}_i-\ad_{\lmu_i}^D\lS_i\right) = \dot{\lS}\overline{\vphantom{S}}_j^T\M_i^B.
\end{equation}\par
Thus far, we have proved that $\frac{\partial^2 K}{\partial \dot{q}_i\partial \dot{q}_j}$, $\frac{\partial^2 K}{\partial \dot{q}_i\partial q_j}$, $\frac{\partial^2 K}{\partial q_i\partial \dot{q}_j}$ and $\frac{\partial^2 K}{\partial q_i\partial q_j}$ can be computed using \cref{eqa::dK2dqdotdq,eqa::dK2dqdotdqdot,eqa::dK2dqdq,eqa::dK2dqdqdot,dK2dq2,dK2dqdot2,dK2dqdotdq,dK2dqdqdot} with which we further have $\frac{\partial^2 K}{\partial \dot{q}^2}$, $\frac{\partial^2 K}{\partial \dot{q}\partial q}$, $\frac{\partial^2 K}{\partial q\partial \dot{q}}$ and $\frac{\partial^2 K}{\partial q^2}$ computed.

As for the complexity of \cref{algorithm::dkdv}, it takes $O(n)$ time to pass the tree representation forward to compute $g_i$, $M_i$, $\lS_i$, $\lv_i$, $\dot{\lS}_i$ and another $O(n)$ time to pass the tree representation backward to compute $\lmu_i$, $\M_i$, $\M_i^A$ and $\M_i^B$. In the backward pass, $\frac{\partial^2 K}{\partial \dot{q}_i\partial \dot{q}_j}$, $\frac{\partial^2 K}{\partial \dot{q}_i\partial q_j}$, $\frac{\partial^2 K}{\partial q_i\partial \dot{q}_j}$ and $\frac{\partial^2 K}{\partial q_i\partial q_j}$ are computed for each $i$ using \cref{eqa::dK2dqdotdq,eqa::dK2dqdotdqdot,eqa::dK2dqdq,eqa::dK2dqdqdot,dK2dq2,dK2dqdot2,dK2dqdotdq,dK2dqdqdot} which totally takes at most $O(n^2)$ time. Therefore, the complexity of \cref{algorithm::dkdv} is $O(n^2)$. This completes the proof.
\end{proof}
\subsection{Proof of \cref{prop::dvkdq}}\label{subsection::pp4}
\begin{prop}\label{propa::dvkdq}
	If $\gv\in \R^3$ is gravity, then for the gravitational potential energy $V_{\vec{g}}(q)$,  $\frac{\partial^2 V_{\vec{g}} }{\partial q^2}$ can be recursively computed with \cref{algorithm::dvdq} in $O(n^2)$ time.
\end{prop}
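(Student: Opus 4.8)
The plan is to obtain a closed form for the first derivative $\partial V_{\gv}/\partial q_i$ in terms of the articulated-body mass accumulations $\lsig_{m_i},\lsig_{p_i}$ appearing in \cref{algorithm::dvdq}, and then differentiate once more, reducing everything to a single clean identity. First I would write the gravitational potential energy as a sum over bodies, $V_{\gv}(q)=-\sum_{i=1}^n m_i\,\gv^T p_i$, where $p_i$ is the mass center (origin of frame $\{i\}$). To differentiate $p_i$ I would read off the translation block of $(\partial g_i/\partial q_j)\,g_i^{-1}=\hat{\lS}_j$, which by \cref{eqa::dgdq1} equals $\hat{\lS}_j$ for $j\in\anc(i)\cup\{i\}$ and $0$ otherwise; this gives $\partial p_i/\partial q_j=\overline{n}_j-\hat{p}_i\overline{s}_j$ on the support and zero off it. Summing $m_i\gv^T(\partial p_i/\partial q_j)$ over the bodies $i$ for which $j$ is an ancestor (equivalently $i\in\des(j)\cup\{j\}$) and collecting the mass sums into $\lsig_{m_j}$ and $\lsig_{p_j}$ — which obey exactly the recursions in \cref{algorithm::dvdq} — yields
\[
\frac{\partial V_{\gv}}{\partial q_i}=-\gv^T w_i,\qquad w_i:=\lsig_{m_i}\overline{n}_i-\hat{\lsig}_{p_i}\overline{s}_i.
\]

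Next I would differentiate $w_i$ with respect to $q_j$ for $j\in\anc(i)\cup\{i\}$, feeding in the tree-representation derivative rules from the preliminaries: $\partial\overline{s}_i/\partial q_j$ and $\partial\overline{n}_i/\partial q_j$ are the two blocks of $\partial\lS_i/\partial q_j=\ad_{\lS_j}\lS_i$ (\cref{eqa::dSdq1}), while $\partial\lsig_{p_i}/\partial q_j=\lsig_{m_i}\overline{n}_j-\hat{\lsig}_{p_i}\overline{s}_j$ follows by reusing the $\partial p_k/\partial q_j$ formula across the subtree rooted at $i$. The target of this computation is the single identity $\partial w_i/\partial q_j=\hat{\overline{s}}_j\, w_i$. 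Once it holds, the chain rule and the skew-symmetry of $\hat{\gv}$ give $\partial^2 V_{\gv}/\partial q_j\partial q_i=-\gv^T\hat{\overline{s}}_j w_i=\overline{s}_j^T\hat{\gv}w_i=\overline{s}_j^T\lsig_i^A$, which is precisely the quantity assigned in \cref{algorithm::dvdq}. Since mixed partials commute and $\partial V_{\gv}/\partial q_i$ depends on $q_j$ only when $j$ is comparable to $i$ in the tree order (ancestor, equal, or descendant), the symmetric assignment over $j\in\anc(i)\cup\{i\}$ fills in all nonzero entries of the Hessian.

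The main obstacle is verifying $\partial w_i/\partial q_j=\hat{\overline{s}}_j w_i$. After substituting the block formulas, the terms proportional to $\lsig_{m_i}$ cancel immediately (the $\widehat{m\overline{n}_j}\,\overline{s}_i$ term exactly kills $m\,\hat{\overline{n}}_j\overline{s}_i$), but the residual terms require the commutator identity $[\hat{\lsig}_{p_i},\hat{\overline{s}}_j]=\widehat{\hat{\lsig}_{p_i}\overline{s}_j}$ (equivalently $\widehat{a\times b}=[\hat a,\hat b]$) together with $\hat a\,b=-\hat b\,a$ to recombine $\widehat{(\hat{\lsig}_{p_i}\overline{s}_j)}\,\overline{s}_i-\hat{\lsig}_{p_i}\hat{\overline{s}}_j\overline{s}_i$ into $-\hat{\overline{s}}_j\hat{\lsig}_{p_i}\overline{s}_i$; this is the one genuinely nontrivial algebraic step, and I would treat the cases $j\in\anc(i)$ and $j=i$ uniformly since the $\lS$-derivatives simply vanish when $j=i$.

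Finally, for the complexity claim I would note that the forward pass computes each $g_i,\lS_i$ in $O(1)$, and the backward pass computes each $\lsig_{m_i},\lsig_{p_i},\lsig_i^A$ in $O(1)$ via the child-summation recursions, so both passes are $O(n)$ overall; the inner loop over $\anc(i)\cup\{i\}$ costs $O(|\anc(i)|)$ per body, summing to $O(n^2)$ in the worst case (attained by a serial chain). This gives the claimed $O(n^2)$ bound and completes the argument.
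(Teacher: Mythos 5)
Your proposal is correct and follows essentially the same route as the paper's proof: both start from $V_{\gv}=-\sum_i m_i\gv^Tp_i$, use the tree-representation formulas for $\partial p_i/\partial q_j$ and $\partial\lS_i/\partial q_j$, reduce the second derivative for $j\in\anc(i)\cup\{i\}$ to $\overline{s}_j^T\hat{\gv}\big(\lsig_{m_i}\overline{n}_i-\hat{\lsig}_{p_i}\overline{s}_i\big)=\overline{s}_j^T\lsig_i^A$ via the same hat-map commutator/Jacobi identity, and finish with the same symmetry/support argument and the same $O(n)+O(n)+O(n^2)$ operation count. The only difference is organizational: you aggregate the subtree mass sums into $\lsig_{m_i},\lsig_{p_i}$ already at the first-derivative stage and package the key computation as the covariance identity $\partial w_i/\partial q_j=\hat{\overline{s}}_j\,w_i$, whereas the paper keeps the explicit sum over $i'\in\des(i)\cup\{i\}$ until the final simplification.
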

\begin{proof}
	\allowdisplaybreaks
	It is known that the gravitational potential energy $V_\gv(q)$ is
	\begin{equation}\label{eqa::V}
		V_{\gv}(q)=-\sum_{i=1}^{n}m_i\cdot  \gv^Tp_i.
	\end{equation}
	in which $m_i\in\R$ is the mass of rigid body $i$ and $p_i\in\R^3$ is the mass center of rigid body $i$ as well as the origin of frame $\{i\}$. In addition, from \cref{eqa::dgdq1,eqa::dgdq2}, we have 
	\begin{subequations}\label{eqa::dpdq}
		\begin{equation}\label{eqa::dpdq1}
		\frac{\partial p_i}{\partial q_j}=\begin{cases}
		\hat{\overline{s}}_j p_i + \overline{n}_j & j\in\anc(i)\cup\{i\},\\
		0 & \text{otherwise},
		\end{cases}
		\end{equation}
		and
		\begin{equation}\label{eqa::dpdq2}
		\frac{\partial p_j}{\partial q_i}=\begin{cases}
		\hat{\overline{s}}_i p_j + \overline{n}_i & j\in\des(i)\cup\{i\},\\
		0 & \text{otherwise},
		\end{cases}
		\end{equation}
	\end{subequations}
in which $\overline{s}_i,\overline{n}_i\in\R^3$ and $\lS_i=\begin{bmatrix}
\overline{s}_i^T & \overline{n}_i^T
\end{bmatrix}^T\in\R^6$ is the spatial Jacobian of joint $i$. From \cref{eqa::V,eqa::dpdq2}, algebraic manipulation gives
\begin{equation}\label{eqa::dVdq}
\frac{\partial V_\gv}{\partial q_i}=-\lS_i^T\bigg(m_i\begin{bmatrix}
\hat{p}_i\gv\\
\gv
\end{bmatrix}+\sum_{i'\in\des(i)}m_{i'}\begin{bmatrix}
\hat{p}_{i'}\gv\\
\gv
\end{bmatrix}\bigg).
\end{equation}
Moreover, observe that $\lS_i$ and $p_i$ only depends on $q_j$ for $j\in\anc(i)\cup\{i\}$, we obtain from \cref{eqa::dVdq} that
\begin{equation}\label{eqa::dV2dq2}
\frac{\partial^2 V_\gv}{\partial q_i\partial q_j}=\begin{cases}
\frac{\partial}{\partial q_j}\left(\frac{\partial V_\gv}{\partial q_i}\right) & j\in\anc(i)\cup\{i\},\\
\frac{\partial^2 V_\gv}{\partial q_j\partial q_i}& j\in\des(i),\\
0 & \text{otherwise},
\end{cases}
\end{equation}
which means that only $\frac{\partial^2 V_\gv}{\partial q_i\partial q_j}$ for $j\in\anc(i)\cup\{i\}$ needs to be explicitly computed. If $j\in\anc(i)\cup\{i\}$, using \cref{eqa::dSdq1,eqa::dpdq1,eqa::dVdq} as well as the equality $\hat{a}b=-\hat{b}a$ for any $a,b\in\R^3$, we obtain
\begin{equation}
\nonumber
\begin{aligned}
\frac{\partial^2 V_\gv}{\partial q_i\partial q_j}&=\frac{\partial}{\partial q_j}\left(\frac{\partial V_\gv}{\partial q_i}\right)\\
&=\sum_{i'\in\des(i)\cup\{i\}}m_{i'}\left[\overline{s}_i^T\left(\hat{\gv}\hat{\overline{s}}_jp_{i'}+\hat{\overline{s}}_j\hat{p}_{i'}\gv\right)-\overline{n}_i^T\hat{\gv}\overline{s}_j\right].
\end{aligned}
\end{equation}
In addition, since $\hat{p}_{i'}\hat{\gv}\overline{s}_j=-\hat{\gv}\hat{\overline{s}}_jp_{i'}-\hat{\overline{s}}_j\hat{p_{i'}}\gv$ and $\hat{a}^T=-\hat{a}$ for any $a\in\R^3$, the equation above is equivalent to 
\begin{equation}\label{eqa::dV}
\frac{\partial^2 V_\gv}{\partial q_i\partial q_j}= \overline{s}_j^T\hat{\gv}\bigg[\big(m_i+\sum_{i'\in \des(i)}m_{i'}\big)\overline{n}_{i}-\big(m_{i}\hat{p}_{i}+\sum_{i'\in\des(i)}m_{i'}\hat{p}_{i'}\big)\overline{s}_i\bigg]
\end{equation}
If we define 
$$\overline{\sigma}_{m_i}=m_i+\sum_{j\in \des(i)}m_{j} = m_i +\sum_{j\in \chd(i)}\overline{\sigma}_{m_{j}},$$
$$\overline{\sigma}_{p_i}=m_ip_i+\sum_{j\in \des(i)}m_{j}p_j = m_ip_i +\sum_{j\in \chd(i)}\overline{\sigma}_{p_{j}},$$
$$\overline{\sigma}_i^A=\hat{\gv}\left(\lsig_{m_i}\cdot\overline{n}_i-\hat{\lsig}_{p_i}\cdot \overline{s}_i \right),$$
then \cref{eqa::dV} is further simplified to
\begin{equation}\label{eqa::dV2}
\frac{\partial^2 V_\gv}{\partial q_i\partial q_j}= \overline{s}_j^T\hat{\gv}\left(\lsig_{m_i}\overline{n}_{i}-\hat{\lsig}_{p_i}\overline{s}_i\right)=\overline{s}_j^T\lsig_i^A.
\end{equation}
As a result, $\frac{\partial^2 V_\gv}{\partial q^2}$ can be computed from \cref{eqa::dV2,eqa::dV2dq2}.

The $O(n^2)$ complexity of \cref{algorithm::dvdq} is as follows: the forward pass to compute $g_i$ and $\lS_i$ and the backward pass to compute $\lsig_{m_i}$, $\lsig_{p_i}$ and $\lsig_i^A$ take $O(n)$ time, respectively; and the computation of $\frac{\partial^2 V_\gv}{\partial q_i\partial q_j}=\frac{\partial^2 V_\gv}{\partial q_j\partial q_i}=\overline{s}_j^T\lsig_i^A$ totally takes $O(n^2)$ time. Therefore, it can be concluded that \cref{algorithm::dvdq} has $O(n^2)$ complexity. This completes the proof.
\end{proof}

\end{document}